\documentclass[pdflatex,sn-mathphys-ay]{sn-jnl}

\usepackage{blindtext}
\usepackage{graphicx}%
\usepackage{multirow}%
\usepackage{amsmath,amssymb,amsfonts}%
\usepackage{mathrsfs}%
\usepackage{stmaryrd}%
\usepackage[title]{appendix}%
\usepackage{xcolor}%
\usepackage{textcomp}%
\usepackage{booktabs}\let\cline\cmidrule
\usepackage{booktabs}%
\usepackage{hyperref}
\usepackage{algpseudocode}%
\usepackage[linesnumbered,ruled,vlined]{algorithm2e}
\usepackage{listings}%
\usepackage{nicefrac}%
\usepackage{tikz}
\usetikzlibrary{calc}
\usetikzlibrary{tikzmark}
\usepackage{subcaption}
\usepackage{longtable}%
\usepackage{array}
\usepackage{pgfplots}

\allowdisplaybreaks

\pgfplotsset{compat=1.18}

\newtheorem{theorem}{Theorem}
\newtheorem{lemma}[theorem]{Lemma} 
\newtheorem{proposition}[theorem]{Proposition} 
\newtheorem{remark}[theorem]{Remark}

\newtheorem{property}{Property}

\newcommand{\cX}{\mathcal{X}}

\newcommand{\cY}{\mathcal{Y}}
\newcommand{\cD}{\mathbf{D}}

\newcommand{\bX}{\mathbf{X}}

\newcommand{\bx}{\boldsymbol{x}}
\newcommand{\by}{\boldsymbol{y}}
\newcommand{\bh}{\boldsymbol{h}}
\newcommand{\bp}{\boldsymbol{p}}

\renewcommand{\vec}[1]{\boldsymbol{#1}}
\newcommand{\Prob}{\vec{p}}

\newcommand{\given}{\, | \,}
\newcommand*{\defeq}{\mathrel{\vcenter{\baselineskip0.5ex \lineskiplimit0pt
			\hbox{\footnotesize.}\hbox{\footnotesize.}}}%
	=}
\newcommand{\defi}{\defeq}

\newcommand{\fromto}{\longrightarrow}

\begin{document}

\title[Compact set-valued deep ensembling in\\ multi-class classification]{Compact set-valued deep ensembling in\\ multi-class classification}


\author[1]{\fnm{Kim-Dung} \sur{Tran}}\email{kim-dung.tran@hds.utc.fr}
\equalcont{These authors contributed equally to this work.}

\author[2]{\fnm{Dang-Man} \sur{Nguyen}}\email{mannd@jaist.ac.jp}
\equalcont{These authors contributed equally to this work.}

\author*[1]{\fnm{Vu-Linh} \sur{Nguyen}}\email{vu-linh.nguyen@hds.utc.fr}

\author[2]{\fnm{Xuan-Truong} \sur{Hoang}}\email{hxtruong@jaist.ac.jp}

\author[1]{\fnm{S\'ebastien} \sur{Destercke}}\email{sebastien.destercke@hds.utc.fr}

\author[2]{\fnm{Van-Nam} \sur{Huynh}}\email{huynh@jaist.ac.jp}

\affil*[1]{\orgdiv{UMR CNRS 7253, Heudiasyc}, \orgname{Universit\'e de Technologie de Compi\`egne}, \orgaddress{\city{Compi\`egne}, \country{France}}}

\affil[2]{\orgdiv{School of Knowledge Science}, \orgname{Japan Advanced Institute of Science and Technology}, \orgaddress{\city{Nomi}, \country{Japan}}}



\abstract{This paper tackles visible challenges in deep ensemble learning, where deep neural networks serve as ensemble members: training and storage burdens, and robustness of cautious (set-valued) predictions targeting multiple utilities, which may involve reward-sensitivity. To mitigate the training and storage burdens, we propose to employ compact ensembles, such as Bayesian Neural Networks and Convolutional Neural Networks with the Monte-Carlo dropout prediction option, to produce probabilistic predictions. For each query instance, these probabilistic predictions are then used to define a representative distribution optimizing some statistical distance. The representative distribution is then employed to define the Bayes-optimal prediction (BOP) of any utility. To address the potential unrobustness of singleton prediction making, we propose a family of set-utilities satisfying some desirable properties and whose set-valued BOPs can be found efficiently. Empirical evidence is then given to illustrate the potential (dis)advantages of the proposed ensemble learning framework.}

\keywords{Multi-class classification, Bayesian-optimal predictions, Reward-sensitive utilities, nested coarsening, Compact cautious deep ensemble learning}

\maketitle

\section{Introduction}

Machine learning methods typically face multiple challenges or needs: obtaining high predictive power, being adaptable to different tasks without the need to retrain new models, and offering trustworthy predictions. In this paper, we propose (deep) probabilistic ensemble-based approaches combined with robust predictions to solve these issues. 

Ensemble learning is commonly used to boost the predictive performance and robustness of (probabilistic) predictors in multi-class classification (MCC), and in machine learning in general \citep{dietterich2000ensemble,nguyen2025credal,zhou2021ensemblelearning}. Ensemble learning typically consists of two phases: Generation and aggregation. During the former, a set of ensemble members is generated, where a trade-off between their accuracy and diversity is sought. The latter is typically performed in an instance-wise manner, where the predictions of the ensemble members are aggregated into final predictions.

Despite a rather rich literature in ensemble learning, there remain visible challenges in both generation and aggregation phases when deploying ensemble learning in applications where deep neural networks serve as ensemble members. Regarding the generation phase, training and storage burdens are visible challenges when deploying conventional ensemble learning techniques, such as bagging, boosting, stacking, and random initialization \citep{yang2023survey}, which typically require one to train and maintain a reasonably large set of ensemble members to ensure diversity. 

Another fact is that modern machine learning models such as foundation models are trained to solve generic problems, without a pre-specified task for which they will be used. Their outputs should therefore be amenable to adapt to various tasks. Probabilistic predictions are ideal for that task, as one can then plug-in task-specific utility or reward matrix to maximize the expected utility. To obtain such a probabilistic prediction from ensembles, we propose to proceed in a two-step aggregation phase \citep{nguyen2025credal}: for each query instance, the probabilistic predictions of the ensemble members are aggregated into a representative probability distribution. Task-specific utilities can then be used to define the Bayes-optimal prediction (BOP) of any utility \citep{del2009learning,mortier2021efficient}.

Finally, in the case of high-stakes applications, one may wish to produce set-valued classification to avoid predicting a unique optimal prediction when this later one is not very robust to small changes in the produced probabilistic prediction. Indeed, obtaining reliable estimates of representative probability distributions may be challenging, especially in the presence of noisy training data or test data. One sound approach to gain robustness might be to produce set-valued BOPs of reward-sensitive set-utilities (See \citep{yang2016costs} and references therein), which satisfy various desirable properties, which reflect different aspects of the trade-off between recall (including the right class in the set-valued prediction) and precision (producing small sets as predictions). Unfortunately, we are unaware of reward-sensitive set-utilities that satisfy good properties and whose BOPs can be found efficiently. 

In this paper, we propose a compact cautious deep ensembling framework to tackle the mentioned challenges. Our contributions are as followed:
\begin{itemize}
\item to mitigate the training and storage burdens of the generation phase and to target the optimization of multiple utilities during the aggregation phase, we propose to employ compact deep ensembles, whose ensemble members are essentially generated in an instance-wise manner during the aggregation phase by distorting a pre-trained deep neural network, to define the representative probability distribution from which a BOP of any utility can be defined. Concrete examples are Convolutional Neural Networks (CNNs) with the Monte-Carlo dropout prediction option \citep{lemay2022improving} and Bayesian Neural Networks (BNNs) \citep{foong2020expressiveness,jospin2022hands};
\item to mitigate the potential unrobustness of singleton prediction making when taking into account reward-sensitivity, we propose a family of reward-sensitive utility-discounted accuracies, which satisfies a number of important properties discussed in \citep{yang2016costs} and references therein, and whose BOP can be found through computationally efficient procedures.
\end{itemize}

The remainder of this paper is organized as follows. Section \ref{sec:Probabilistic multi-class classification} recalls the basic setting of MCC, the notions of utilities and their BOPs. The compact cautious deep ensembling framework is presented in Section \ref{sec:Compact credal deep ensembling}. Section \ref{sec:Desirable properties} presents desirable properties and an analysis of the proposed family of reward-sensitive utility-discounted accuracies. Section \ref{sec:Experiments} provides empirical evidence to illustrate the potential (dis)advantages of the proposed ensemble learning framework on image data sets, where either the training set or the test data can contain noise. Finally, Section \ref{sec:Conclusion} concludes this paper and sketches out potential follow-up works. 

This paper is a significant extension of an earlier conference version \citep{tran2025robust}, in which we only used our aggregation approach on a limited number of data sets with classical, singleton-valued decision making. The current version is more comprehensive in multiple ways. First, the formulation of the family of reward-sensitive utility-discounted accuracies and the analysis of desirable properties and the computational complexity of finding BOPs are entirely new. Second, the empirical study is much more comprehensive and covers various aspects that have not been considered in the short version, especially in analyzing the potential (dis)advantages of the proposed ensemble learning framework in the presence of reward-sensitivity and noisy data. All the proofs of formal results stated in this paper (propositions and remarks) can be found in the appendices. 

\section{Probabilistic multi-class classification}\label{sec:Probabilistic multi-class classification}

This section recalls the basic setting of probabilistic MCC and discusses potential challenges. Along the way, we will introduce frequently used notations and terminologies. To facilitate the readability of this paper, some frequently used notations and acronyms are listed in the table
\ref{tab:Notation_and_acronyms} of Appendix \ref{app:Notation and acronyms}.

\subsection{Singleton prediction making}

In this section, we recall the standard setting where one wishes to predict for each instance a singleton prediction, i.e., a single class. 

\subsubsection{Reward-sensitive accuracies}

Let $\cX$ denote an instance space and $\cY =\{y^1, \ldots, y^K\}$ a finite set of classes. We assume that an instance $\bx \in \cX$ is (probabilistically) associated with members of $\cY$. Let $\mathbf{X}$ be a set of features. We denote by $\Prob(Y \given \bx)$ the conditional distribution of $Y$ given $\mathbf{X}=\bx$. Given training data $\cD = \{ (\bx_n,y_n)|n \in [N] := \{1, \ldots, N \}\}$ drawn independently from $\Prob(\mathbf{X}, Y)$, the goal in multi-class classification (MCC) is to learn a classifier $\bh$, which is a mapping $\cX \fromto \cY$ that assigns to each instance $\bx\in \cX$ a class $\hat{y} \defi \bh(\bx) \in \cY$. The quality of $\bh(\bx)$ is often assessed by using utilities $u:\cY \times \cY \fromto \mathbb{R}_{+}$, which are assumed, without loss of generality, to be the higher-the-better accuracies.     

Each utility $u:\cY \times \cY \fromto \mathbb{R}_{+}$ are often specified by a $K\times K$-matrix $\mathbf{R}_{i,j}$ where the element $r_{i,j} :=u(y^i, y^j)$ is the reward of predicting $y^i$ while the correct class is $y^j$. In the specific case of $0/1$ accuracy $u^{0/1}$, we have $u^{0/1}(y^i, y^j) = \llbracket i = j \rrbracket$\footnote{$\llbracket \cdot \rrbracket$ is the indicator function, i.e., $\llbracket A \rrbracket = 1$ if the predicate $A$ is true and $=0$ otherwise.}. When $u(y^i, y^i) \geq u(y^i, y^j)$, for any $i \in [K]$ and $i\neq j$, $u$ is often referred to as a reward-sensitive accuracy \citep{elkan2001foundations,lachiche2003improving,o2008cost}. Reward-sensitive accuracies might be more informative in applications where the direct costs, such as the costs associated with the decisions the end-user should spend to treat the predicted class, are well-estimated. 

For example, if one wishes to construct a decision-support system for Plant Leaf Diseases \citep{geetharamani2019identification} treatment, where an Unmanned Aerial Vehicle (UAV), which is equipped with a predictor $\bh$, visits different farms and calls for the visit of a domain expert when suspecting Leaf Diseases in a farm. Rather than employ one domain expert per Plant Leaf Disease, it might be more beneficial to employ one domain expert per Plant Type or group of Plant Types. Clearly, a wrong prediction might still be rewarded if it still calls for the visit of an appropriate domain expert. If the direct costs, such as the cost of sending the domain experts to the farms, are well-estimated, it might be leveraged to define reward-sensitive accuracies to better assess the quality of the predictor $\bh$.          

\subsubsection{Bayes-optimal predictions}

Since different end-users and/or different application domains may call for different utilities, it is desirable to construct predictors that allow optimization of multiple utilities. A sound way to tackle this problem is to train a probabilistic predictor $\bh: \cX \fromto \Delta^K$, where $\Delta^K$ is the probability simplex over $\cY$, which provides, for each query $\bx$, an estimate of $\Prob(\cY \given \bx)$, from which a BOP of any utility $u:\cY \times \cY \fromto \mathbb{R}_{+}$ can be defined: 
\begin{equation}\label{eq:BOP}
  \hat{y}_u \in \operatorname*{argmax}_{\bar{y} \in \cY} \mathbf{E}_{\Prob(\cY \given \bx)}\big[u(\bar{y},y) \big] = \operatorname*{argmax}_{\bar{y} \in \cY} \sum_{y \in \mathcal{Y}} u(\bar{y},y) \vec{p}(y \given \vec{x}) \,.
\end{equation}
A BOP $\hat{y}_u$ \eqref{eq:BOP} of any reward-sensitive accuracy $u$ can be found in $O(K^2)$ \citep{nguyen2025credal}[Appendix 4.1] by computing the expected scores $\mathbf{E}_{\Prob(\cY \given \bx)}\big[u(\bar{y},y) \big]$, $\bar{y} \in \cY$ and search for a prediction with the highest expected score. If $u$ is the $0/1$ accuracy $u^{0/1}$, we have $\mathbf{E}_{\Prob(\cY \given \bx)}\big[u^{0/1}(\bar{y},y) \big] = \vec{p}(\bar{y} \given \vec{x})$, $\bar{y} \in \cY$, and therefore a BOP $\hat{y}_{u^{0/1}}$ \eqref{eq:BOP} can be found easily in $O(K)$.  

Let $y$ be the true class of $\vec{x}$. To be accurate with respect to a given utility $u$, the predictor $\bh$ should provide an accurate estimate of $\Prob(\cY \given \bx)$ to ensure that $\hat{y}_u$ \eqref{eq:BOP}, i.e., the one with the highest expected score $\mathbf{E}_{\Prob(\cY \given \bx)}\big[u(\cdot,y) \big]$, is ideally $y$ or at least another class with high $u(y_u, y)$. This is, however, difficult to ensure in practice, often due to model inadequacy and/or data imperfection. Model inadequacy is encountered when the hypothesis space, from which a predictor is selected, is not expressive enough to encode the (probabilistic) relationship $\Prob(\cY \given \cX)$. Data imperfection can appear in various scenarios, for example, scarce or noisy training data and noisy test data. Noisy training data, such as images that contain random variation in brightness or color information, can be present during the data collection process. Noisy training data can also be introduced when enriching the training data using data augmentation techniques \citep{shorten2019survey} or synthetic data \citep{nguyen2025provably}. Noisy test images can also be encountered in practice. For example, in the Plant Leaf Disease example, the UAV might encounter noisy test images due to severe weather.  

\subsection{Set-valued prediction making}

Set-valued prediction making has been perceived as a sound approach to mitigate the potential risk of making incorrect predictions of the singleton prediction-making strategy. In the following, we discuss its two basic building blocks: Reward-sensitive set-utilities and set-valued BOPs. 

\subsubsection{Reward-sensitive set-utilities}

Set-valued prediction making assumes that the prediction can be any subset $Y$ of $\cY$. Let $\cY_{\mathrm{set}} := \{Y \given Y \subset \cY, Y \neq \emptyset\}$. Any utility $u:\cY \times \cY \fromto \mathbb{R}_{+}$ can be generalized to assess the quality of set-valued prediction as $U:\cY_{\mathrm{set}} \times \cY \fromto \mathbb{R}_{+}$, where $U(\bar{Y}, y) = u(\bar{y}, y)$ whenever $\bar{Y} = \{y\}$. Each set-utility $U:\cY_{\mathrm{set}} \times \cY \fromto \mathbb{R}_{+}$ are often specified by a $(2^K - 1)\times K$-matrix $\mathbf{R}_{\bar{Y},j}$ where the element $r_{\bar{Y},j} :=U(\bar{Y}, y^j)$ is the reward of predicting $\bar{Y}$ while the correct class is $y^j$. 

Often, set-utilities are designed to assess the trade-off between recall, i.e., the set-valued predictions should have a high chance of covering the correct class, and recall, i.e., the cardinality of the set-valued prediction should not be unnecessarily large \citep{mortier2021efficient,yang2016costs,zaffalon2012evaluating}. Moreover, set-utilities and their optimal set-valued predictions are often expected to satisfy desirable properties, such as the ones discussed in \citep{yang2016costs} and others explored in Section \ref{sec:Desirable properties}. 

Since $2^K - 1$ is exponential in $K$, specifying a $(2^K - 1)\times K$-matrix $\mathbf{R}_{\bar{Y},j}$ to define the corresponding set-utility $U$ can be costly. Notable efforts on set-valued prediction making reduce this burden by assuming that the rewards of predicting any non-singleton prediction $\bar{Y}$ can be specified using its cardinality $|\bar{Y}|$ and assigned to its members $\bar{y} \in \bar{Y}$. Representative examples of set-utilities of this type are utility-discounted accuracies \citep{mortier2021efficient,zaffalon2012evaluating}, which are direct generalizations of the $0/1$ accuracy $u^{0/1}$, where $u^{0/1}(y^i, y^j) = \llbracket i = j \rrbracket$. Commonly used utility-discounted accuracies \citep{mortier2021efficient,nguyen2018reliable,zaffalon2012evaluating} can be written as
\begin{equation}
    U^{0/1}(\bar{Y},y) = g(|\bar{Y}|) \llbracket y \in \bar{Y} \rrbracket \,,
\end{equation}
where $g(|\bar{Y}|)$ is a monotone decreasing function of $|\bar{Y}|$. 

A quite popular and easy-to-interpret family of utility-discounted accuracies is $U_{\alpha}$. It reads
\begin{equation}\label{eq:U-01-alpha}
    U^{0/1}_{\alpha}(\bar{Y},y) = \left(\alpha + \frac{1 - \alpha}{|\bar{Y}|}\right) \frac{1}{|\bar{Y}|} \llbracket y \in \bar{Y} \rrbracket \,,
\end{equation}
and has been receiving increasing attention. Besides having various desirable properties as originally studied in \citep{zaffalon2012evaluating}, being re-examined in related work (See, e.g., \citep{mortier2021efficient,yang2016costs}), and also in this paper in Section \ref{sec:Desirable properties}, finding set-valued BOPs of utility-discounted accuracies $U^{0/1}_{\alpha}$ can be done efficiently in $O(K\log(K))$ \citep{mortier2021efficient}---we shall recall this property in the next section. Unfortunately, despite a rather rich literature in the construction and optimization of reward-sensitive accuracies, we are unaware of utility-discounted accuracies generalizing other reward-sensitive accuracies with both desirable properties and efficient computation of set-valued~BOPs.    

\subsubsection{Set-valued Bayes-optimal predictions}\label{sec:Set-valued Bayes-optimal predictions}

As a direct generalization of the standard BOP $\hat{y}_u$ \eqref{eq:BOP}, a set-valued BOP of any set-utility $U:\cY_{\mathrm{set}} \times \cY \fromto \mathbb{R}_{+}$ can be defined as 
\begin{equation}\label{eq:set-valued BOP}
  \hat{Y}_U \in \operatorname*{argmax}_{\bar{Y} \in \cY_{\mathrm{set}}} \mathbf{E}_{\Prob(\cY \given \bx)}\big[U(\bar{Y},y) \big] = \operatorname*{argmax}_{\bar{Y} \in \cY_{\mathrm{set}}} \sum_{y \in \mathcal{Y}} U(\bar{Y},y) \vec{p}(y \given \vec{x}) \,.
\end{equation}
Without taking into account the concrete formula of $U$ to eliminate sub-optimal set-valued predictions, one would have to exhaustively compute and compare $2^K-1$ expected scores $\mathbf{E}_{\Prob(\cY \given \bx)}\big[U(\bar{Y},y) \big]$, one per possible prediction $\bar{Y} \in \cY_{\mathrm{set}}$. This will be an unaffordable computational cost in many applications.

In the case of utility-discounted accuracy $U^{0/1}_{\alpha}$, it has been shown by \citep{mortier2021efficient} that solving~\eqref{eq:set-valued BOP} can be done in $O(K\log(K))$ for a simple reason: if we order the classes $y^{(1)}, y^{(2)}, \ldots, y^{(K)}$ such that $p(y^{(i)}|x) \geq p(y^{(i+1)}|x)$, then the optimal subset $\hat{Y}_U$ will be of the form $\{y^{(1)},y^{(2)},\ldots,y^{(j)}\}$ for some $j$. Moreover, it guarantees that the recall of optimizing $U^{0/1}_{\alpha}$ is always at least the accuracy of optimizing the $0/1$ accuracy $u^{0/1}$, as $\hat{Y}_{U^{0/1}_{\alpha}}$ of any utility-discounted accuracy $U^{0/1}_{\alpha}$ always contains the class with the highest $\vec{p}(\bar{y} \given \vec{x})$.

The remainder of this paper is devoted to tackling two challenges of probabilistic MCC: seeking accurate estimates of $\Prob(\cY \given \bx)$, and designing utility-discounted accuracies, which generalize reward-sensitive accuracies, with both desirable properties and efficient computation of set-valued~BOPs. In particular, we will show that our formulation allows for a similar trick as the one used by \citep{mortier2021efficient} for $U^{0/1}_{\alpha}$.

\section{Compact cautious deep ensembling}\label{sec:Compact credal deep ensembling}

This section presents a compact cautious deep ensembling that seeks accurate estimates of $\Prob(\cY \given \bx)$ using compact ensemble learning techniques and reliable set-valued predictions via the construction of utility-discounted accuracies with both desirable properties and efficient computation of set-valued~BOPs.

\subsection{Generation}

Ensemble learning \citep{dietterich2000ensemble,nguyen2025credal,zhou2021ensemblelearning} is a common practice in seeking accurate and robust probabilistic predictors $\Prob: \cX \fromto \Delta^K$, which is often important in ensuring satisfactory levels of predictive performance of the entire probabilistic MCC approach.  

Let $\mathbf{D} = \{(y_n,\bx_n) | n \in [N] \} \subset \cY \times \cX $ be a training dataset. Ensemble generation constructs an ensemble $\mathbf{H} \defi \{\bh^j \given j \in [J] \defi \{1, \ldots, J\}\}$ of $J$ probabilistic classifiers $\bh^j \in \mathcal{H}$, $j \in [J]$ that give, for each instance $\bx$, a set of probabilities denoted by
\begin{align}
   \mathbf{H}(\bx) \defi \left\{ \bh^j(\bx) \given j \in [J]\right\} = \left\{\Prob^j(\cY \given \bx) \defi \left(p^j_1, p^j_2, \ldots, p^j_K \right) \given j \in [J] \right\}\,,
\end{align}
where $p^j_k \defi \Prob^j(y^k \given \bx)$, $k \in [K]$.

Conventional ensemble learning techniques, such as bagging, boosting, stacking, and random initialization \citep{yang2023survey}, typically require one to train and maintain a reasonably large set of ensemble members $\mathbf{H}(\bx)$ to ensure diversity. This is depicted in Fig.~\ref{fig:credal_ensebling}. Clearly, the training and storage burdens are visible challenges when deploying these techniques.    

\def\layersep{1.8cm}
\newcommand\nn[2]{
    \foreach \y in {1,...,2}
        \node[neuron, fill=green!10] (i\y-#1) at (0,\y + 0.5) {\tiny$X^\y$};

    \foreach \y in {1,...,3}
        \path node[neuron, fill=blue!10] (h\y-#1) at (\layersep,\y) {\tiny$h\y$};

    \node[neuron, fill=red!10] (o-#1) at (2*\layersep,2) {\tiny $\Prob^{#2}(Y \given \bX)$};

    \foreach \source in {1,...,2}
        \foreach \dest in {1,...,3}
            \path (i\source-#1) edge (h\dest-#1);

    \foreach \source in {1,...,3}
        \path (h\source-#1) edge (o-#1);
}

\begin{figure}[http]
    \centering
\def\layersep{1.8cm}
     \centering
\begin{tikzpicture}[
    shorten >=1pt,->,draw=black!70, node distance=\layersep,
    neuron/.style={circle,fill=black!25,minimum size=10,inner sep=0},
    edge/.style 2 args={pos={(mod(#1+#2,2)+1)*0.33}, font=\tiny},
    distro/.style 2 args={
        edge={#1}{#2}, node contents={}, minimum size=0.6cm, path picture={\draw[double=orange!50,white,thick,double distance=1pt,shorten >=0pt] plot[variable=\t,domain=-1:1,samples=51] ({\t},{0.2*exp(-100*(\t-0.05*(#1-1))^2 - 3*\t*#2))});}
    },
    weight/.style 2 args={
        edge={#1}{#2}, node contents={\pgfmathparse{0.35*#1-#2*0.15}\pgfmathprintnumber[fixed]{\pgfmathresult}}, fill=white, inner sep=2pt
    }
]
    
    \node[rectangle,draw] at (2,2) {\tiny $\mathbf{D} = \left\{(y_n,\mathbf{x}_n)\right\}_{n=1}^N$ \tikzmark{median}};

    \begin{scope}[yshift=1.5cm, xshift=4cm]
        \nn{regular}{1}
    \end{scope}
    \foreach \i in {1,...,2}
        \foreach \j in {1,...,3}
            \path (i\i-regular) -- (h\j-regular) node[weight={\i}{\j}];
    \foreach \i in {1,...,3}
        \path (h\i-regular) -- (o-regular) node[weight={\i}{2}]; 

    \begin{scope}[xshift=4cm]
        \node[] at (1.8,2) {$\ldots$};
    \end{scope}

    \begin{scope}[yshift=-1.5cm, xshift=4cm]
        \nn{regular}{J}
    \end{scope}
    \foreach \i in {1,...,2}
        \foreach \j in {1,...,3}
            \path (i\i-regular) -- (h\j-regular) node[weight={\i}{\j-1}];
    \foreach \i in {1,...,3}
        \path (h\i-regular) -- (o-regular) node[weight={\i}{3}];

    \begin{scope}[xshift=8cm]
        \node[rectangle,draw] at (1.8,2) {\tiny Median $\Prob_d(Y \given \bX)$ \tikzmark{median}};
    \end{scope}
    
    \begin{scope}[yshift=1.5cm,xshift=10cm]
    \node[rectangle,draw] at (1.8,2) {\tikzmark{BOP_m}\tiny BOP $\hat{y}_u$};
    \end{scope}
    
    \begin{scope}[xshift=10cm]
    \node[rectangle,draw] at (1.8,2) {\tikzmark{BOP_dots}\ldots};
    \end{scope}
    
    \begin{scope}[yshift= -1.5cm,xshift=10cm]
    \node[rectangle,draw] at (1.8,2) {\tikzmark{BOP_M}\tiny BOP $\hat{Y}_U$};
    \end{scope}

    \draw[cyan!40,->] (2.2, 2.5) -- (4, 3.5);
    \draw[cyan!40,->] (3.7, 2) -- (5.5, 2);
    \draw[cyan!40,->] (2.2, 1.5) -- (4, 0.5);

    \draw[cyan!40,->] (8.5, 3.1) -- (9.5, 2.4);
    \draw[cyan!40,->] (6.8, 2) -- (8.5, 2);
    \draw[cyan!40,->] (8.5, 0.9) -- (9.5, 1.6);

    \draw[cyan!40,->] (10.5, 2.4) -- (11.5, 3.1);
    \draw[cyan!40,->] (11, 2) -- (11.35, 2);
    \draw[cyan!40,->] (10.5, 1.6) -- (11.5, 0.9);  
\end{tikzpicture}    
    \caption{Illustration of the conventional ensemble learning framework}
    \label{fig:credal_ensebling}
\end{figure}

To mitigate the training and storage burdens of the generation phase, we propose to employ compact deep ensembles, whose ensemble members are essentially generated in an instance-wise manner during the aggregation phase by distorting a pre-trained deep neural network, to define the representative probability distribution from which a BOP of any utility can be defined. Besides being less costly, we will show in the next section that if the distortion process can guarantee the diversity (to some extent), it is reasonable to expect promising levels of predictive performance from such compact deep ensembles.

Concrete examples are Convolutional Neural Networks (CNNs) with the Monte-Carlo dropout prediction option \citep{lemay2022improving} and Bayesian Neural Networks (BNNs) \citep{foong2020expressiveness,jospin2022hands}. As depicted in Fig.~\ref{fig:families_of_ensembles}, they can be interpreted as representatives of two families of ensembles with different natures. While BNNs can be seen as ensembles of DNNs with the same architecture (and different parameters), CNNs with Monte Carlo dropout can be seen as ensembles of different architectures (and overlapping parameters). The aggregation phase of BNNs can be interpreted as generating a set of predictors by distorting the predictor specified by the means of the weight distributions. The aggregation phase of CNNs with Monte Carlo dropout can be interpreted as generating a set of predictors by distorting the CNN learned from the training data. 

 \begin{figure}[http!]
     \centering
     \begin{subfigure}[b]{\textwidth}
     \centering
\begin{tikzpicture}[
    shorten >=1pt,->,draw=black!70, node distance=\layersep,
    neuron/.style={circle,fill=black!25,minimum size=10,inner sep=0},
    edge/.style 2 args={pos={(mod(#1+#2,2)+1)*0.33}, font=\tiny},
    distro/.style 2 args={
        edge={#1}{#2}, node contents={}, minimum size=0.6cm, path picture={\draw[double=orange,white,thick,double distance=1pt,shorten >=0pt] plot[variable=\t,domain=-1:1,samples=51] ({\t},{0.2*exp(-100*(\t-0.05*(#1-1))^2 - 3*\t*#2))});}
    },
    weight/.style 2 args={
        edge={#1}{#2}, node contents={\pgfmathparse{0.35*#1-#2*0.15}\pgfmathprintnumber[fixed]{\pgfmathresult}}, fill=white, inner sep=2pt
    }
]
    \nn{regular}{}

    \foreach \i in {1,...,2}
        \foreach \j in {1,...,3}
            \path (i\i-regular) -- (h\j-regular) node[weight={\i}{\j}];
    \foreach \i in {1,...,3}
        \path (h\i-regular) -- (o-regular) node[weight={\i}{1}];

    \begin{scope}[yshift=2.5cm]
        \node[rectangle,draw] at (1.5,1.8) {\tiny $\mathbf{D} = \left\{(y_n,\mathbf{x}_n)\right\}_{n=1}^N$ };
    \end{scope}
    
    \begin{scope}[yshift=1.5cm, xshift=4.5cm]
        \nn{regular}{1}
    \end{scope}
     \foreach \i in {1}
        \foreach \j in {1,3}
            \path (i\i-regular) -- (h\j-regular) node[weight={\i}{\j}];
    \foreach \i in {1,3}
        \path (h\i-regular) -- (o-regular) node[weight={\i}{1}];

    \begin{scope}[xshift=4.5cm]
        \node[] at (1.8,2) {$\ldots$};
    \end{scope}

    \begin{scope}[yshift=-1.5cm, xshift=4.5cm]
        \nn{regular}{J}
    \end{scope}
     \foreach \i in {1,2}
        \foreach \j in {2,3}
            \path (i\i-regular) -- (h\j-regular) node[weight={\i}{\j}];
    \foreach \i in {1,...,3}
        \path (h\i-regular) -- (o-regular) node[weight={\i}{1}];

    \begin{scope}[xshift=8cm]
        \node[rectangle,draw] at (1.8,2) {\tiny Median $\Prob_d(Y \given \bX)$ \tikzmark{median}};
    \end{scope}
    
    \begin{scope}[yshift=1.5cm,xshift=10cm]
    \node[rectangle,draw] at (1.8,2) {\tikzmark{BOP_m}\tiny BOP $\hat{y}_u$};
    \end{scope}
    
    \begin{scope}[xshift=10cm]
    \node[rectangle,draw] at (1.8,2) {\tikzmark{BOP_dots}\ldots};
    \end{scope}
    
    \begin{scope}[yshift= -1.5cm,xshift=10cm]
    \node[rectangle,draw] at (1.8,2) {\tikzmark{BOP_M}\tiny BOP $\hat{Y}_U$};
    \end{scope}

    \draw[cyan!40,->] (1.7, 3.85) -- (1.7, 3.3);

    \draw[cyan!40,->] (2.8, 3) -- (4, 3.5);
    \draw[cyan!40,->] (4.3, 2) -- (5.5, 2);
    \draw[cyan!40,->] (2.8, 1) -- (4, 0.5);

    \draw[cyan!40,->] (8.5, 3.1) -- (9.5, 2.4);
    \draw[cyan!40,->] (6.8, 2) -- (8.5, 2);
    \draw[cyan!40,->] (8.5, 0.9) -- (9.5, 1.6);

    \draw[cyan!40,->] (10.5, 2.4) -- (11.5, 3.1);
    \draw[cyan!40,->] (11, 2) -- (11.35, 2);
    \draw[cyan!40,->] (10.5, 1.6) -- (11.5, 0.9);    
\end{tikzpicture}
\caption{A CNN with the Monte-Carlo dropout prediction option. In each CNN, edges with no weight are dropped.}  \label{fig:CNN+Dropout}
\end{subfigure}

\begin{subfigure}[b]{\textwidth}
     \centering
\begin{tikzpicture}[
    shorten >=1pt,->,draw=black!70, node distance=\layersep,
    neuron/.style={circle,fill=black!25,minimum size=10,inner sep=0},
    edge/.style 2 args={pos={(mod(#1+#2,2)+1)*0.33}, font=\tiny},
    distro/.style 2 args={
        edge={#1}{#2}, node contents={}, minimum size=0.6cm, path picture={\draw[double=orange!50,white,thick,double distance=1pt,shorten >=0pt] plot[variable=\t,domain=-1:1,samples=51] ({\t},{0.2*exp(-100*(\t-0.05*(#1-1))^2 - 3*\t*#2))});}
    },
    weight/.style 2 args={
        edge={#1}{#2}, node contents={\pgfmathparse{0.35*#1-#2*0.15}\pgfmathprintnumber[fixed]{\pgfmathresult}}, fill=white, inner sep=2pt
    }
]
    \nn{bayes}{}
    
    \foreach \i in {1,...,2}
        \foreach \j in {1,...,3}
            \path (i\i-bayes) -- (h\j-bayes) node[distro={\i}{\j}];
    \foreach \i in {1,...,3}
        \path (h\i-bayes) -- (o-bayes) node[distro={\i}{1}];

    \begin{scope}[yshift=2.5cm]
        \node[rectangle,draw] at (1.5,1.8) {\tiny $\mathbf{D} = \left\{(y_n,\mathbf{x}_n)\right\}_{n=1}^N$ };
    \end{scope}

    \begin{scope}[yshift=1.5cm, xshift=4.5cm]
        \nn{regular}{1}
    \end{scope}
    \foreach \i in {1,...,2}
        \foreach \j in {1,...,3}
            \path (i\i-regular) -- (h\j-regular) node[weight={\i}{\j-1}];
    \foreach \i in {1,...,3}
        \path (h\i-regular) -- (o-regular) node[weight={\i}{1}]; 

    \begin{scope}[xshift=4.5cm]
        \node[] at (1.8,2) {$\ldots$};
    \end{scope}

    \begin{scope}[yshift=-1.5cm, xshift=4.5cm]
        \nn{regular}{J}
    \end{scope}
    \foreach \i in {1,...,2}
        \foreach \j in {1,...,3}
            \path (i\i-regular) -- (h\j-regular) node[weight={\i}{\j}];
    \foreach \i in {1,...,3}
        \path (h\i-regular) -- (o-regular) node[weight={\i}{3}];

    \begin{scope}[xshift=8cm]
        \node[rectangle,draw] at (1.8,2) {\tiny Median $\Prob_d(Y \given \bX)$ \tikzmark{median}};
    \end{scope}
    
    \begin{scope}[yshift=1.5cm,xshift=10cm]
    \node[rectangle,draw] at (1.8,2) {\tikzmark{BOP_m}\tiny BOP $\hat{y}_u$};
    \end{scope}
    
    \begin{scope}[xshift=10cm]
    \node[rectangle,draw] at (1.8,2) {\tikzmark{BOP_dots}\ldots};
    \end{scope}
    
    \begin{scope}[yshift= -1.5cm,xshift=10cm]
    \node[rectangle,draw] at (1.8,2) {\tikzmark{BOP_M}\tiny BOP $\hat{Y}_U$};
    \end{scope}

    \draw[cyan!40,->] (1.7, 3.85) -- (1.7, 3.3);
    
    \draw[cyan!40,->] (2.8, 3) -- (4, 3.5);
    \draw[cyan!40,->] (4.3, 2) -- (5.5, 2);
    \draw[cyan!40,->] (2.8, 1) -- (4, 0.5);

    \draw[cyan!40,->] (8.5, 3.1) -- (9.5, 2.4);
    \draw[cyan!40,->] (6.8, 2) -- (8.5, 2);
    \draw[cyan!40,->] (8.5, 0.9) -- (9.5, 1.6);

    \draw[cyan!40,->] (10.5, 2.4) -- (11.5, 3.1);
    \draw[cyan!40,->] (11, 2) -- (11.35, 2);
    \draw[cyan!40,->] (10.5, 1.6) -- (11.5, 0.9);    
\end{tikzpicture}
\caption{A BNN as an ensemble}  \label{fig:BNN}
\end{subfigure}
\caption{Illustration of compact ensembles.} \label{fig:families_of_ensembles}
\end{figure}

\subsection{Aggregation}

\subsubsection{Representative probability distributions}

Let $d: \Delta^K \times \Delta^K \fromto \mathbb{R}_{+}$ be any statistical distance. We follow the suggestion in \citep{nguyen2025credal} and formulate the problem of aggregating a given ensemble $\mathbf{H}(\bx)$ into a representative predictor $\bh_d$ as finding a predictor within the hypothesis space $\mathcal{H}$, which optimizes the average distance $d$ to the ensemble members. 
\begin{align}
   \bh_d &\in \operatorname*{argmin}_{\bh \in \mathcal{Y}^{\mathcal{X}}} \sum_{j =1}^J \mathbf{E}\left[ d(\bh, \bh^j)\right] = \operatorname*{argmin}_{\bh \in \mathcal{Y}^{\mathcal{X}}} \mathbf{E}\left[\sum_{j =1}^J d(\bh, \bh^j)\right]\label{eq:representative_classifier}\\
   &=  \operatorname*{argmin}_{\bh \in \mathcal{Y}^{\mathcal{X}}} \int_{\bx \in \cX}\left[\sum_{j =1}^J d(\bh(\bx), \bh^j(\bx))\right] d \bx \label{eq:representative_classifier_point} \,,
\end{align}
This notion of a representative predictor can be conceived as a Bayesian optimal inference, where $d$ plays the role of a loss function, and Equation~\ref{eq:representative_classifier} gives the representative probability minimizing this expected loss. 

The assumption $\bh_d \in \mathcal{Y}^{\mathcal{X}}$ means that $\bh_d$ is not restricted to any specific hypothesis space $\mathcal{H} \subsetneq \mathcal{Y}^{\mathcal{X}}$. Therefore, one minimizer $\bh_d$ of \eqref{eq:representative_classifier} can be defined in a pointwise manner, i.e., for each $\bx \in \cX$, $\bh_d$ can be any classifier that produces 
\begin{equation}\label{eq:representative distribution}
    \Prob_d(\cY \given \bx) = \operatorname*{argmin}_{\Prob(\cY \given \bx): \sum_{k=1}^K p_k =1} \sum_{j =1}^J d(\Prob(\cY \given \bx),\Prob^j(\cY \given \bx)) \,.
\end{equation}
As pointed out \citep{nguyen2023learning,nguyen2025credal} (and references therein), when $d$ is the squared Euclidean distance
\begin{equation}\label{eq:squared Euclidean distance}
   d_{\mathrm{sE}}(\Prob(\cY \given \bx), \Prob^j(\cY \given \bx)) = \sum_{k=1}^k |p_k-p^j_k|^2 \,,
\end{equation}
the representative distribution $\Prob_d(\cY \given \bx)$ \eqref{eq:representative distribution} is the average of $\Prob^j(\cY \given \bx)$, $j \in [J]$, i.e., $\bh_d$ is the graded majority voting ensemble. This also suggests that changing the distance $d$ in \eqref{eq:representative distribution} may lead to other graded majority voting ensembles. For other distances, such as $L_1$ distance
\begin{equation}\label{eq:L_1 distance}
   d_{\mathrm{L1}}(\Prob(\cY \given \bx), \Prob^j(\cY \given \bx)) = \sum_{k=1}^k |p_k-p^j_k| \,,
\end{equation}
and Kullback–Leibler (KL) divergence
\begin{equation}\label{eq:KL divergence}
   d_{\mathrm{KL}}(\Prob(\cY \given \bx), \Prob^j(\cY \given \bx)) = \sum_{k=1}^k p_k \log\left(\frac{p_k}{p^j_k}\right) \,,
\end{equation}
the problem of finding the representative distribution $\Prob_d(\cY \given \bx)$ \eqref{eq:representative distribution} may be less obvious, but can still be found rather efficiently by solving convex optimization problems as long as the distance $d$ is convex \citep{nguyen2025credal}. We refer to \citep{nguyen2025credal} for a detailed discussion on the computational aspects of finding the representative distribution $\Prob_d(\cY \given \bx)$~\eqref{eq:representative distribution}.

Within this formulation, it is clear that if, for each query $\bx$, one can ensure the accuracy and diversity of $\Prob^j(\cY \given \bx)$, $j\in [J]$, whether these probabilistic predictions are provided by a conventional ensemble or a compact ensemble might be less crucial.

Once the representative distribution $\Prob_d(\cY \given \bx)$ \eqref{eq:representative distribution} is found, it can be used to define the set-valued BOP \eqref{eq:set-valued BOP} of any reward-sensitive set-utility $U$ as discussed in Section \ref{sec:Set-valued Bayes-optimal predictions}. In the next section, we shall present reward-sensitive set-utilities that generalize both the commonly used utility-discounted accuracies $U^{0/1}_{\alpha}$ \eqref{eq:U-01-alpha} and reward-sensitive accuracies, and then show that their BOPs can be found efficiently. More importantly, we will show in Section \ref{sec:Desirable properties} that the proposed family of reward-sensitive set-utilities satisfies various desirable properties, including the ones discussed in \citep{yang2016costs}. 

\subsubsection{Reward-sensitive utility-discounted accuracies}

Inspired by the notion of utility-discounted accuracies $U_{\alpha}$ \eqref{eq:U-01-alpha}, in this paper, we propose reward-sensitive utility-discounted accuracies, where the rewards of predicting any non-singleton prediction $\bar{Y}$ can be specified using its cardinality $|\bar{Y}|$ and the reward-sensitive accuracies $u(\bar{y}, y)$ assigned to its members $\bar{y} \in \bar{Y}$. Therefore, the proposed notion of reward-sensitive utilities can be employed to generalize any given reward-sensitive accuracy without requiring additional cost to estimate rewards (of predicting non-singleton predictions). 

We first rewrite utility-discounted accuracies $U_{\alpha}$ \eqref{eq:U-01-alpha} as 
\begin{align}\label{eq:U-01-alpha-rewritten}
    U^{0/1}_{\alpha}(\bar{Y},y) &= \left(\alpha + \frac{1 - \alpha}{|\bar{Y}|}\right) \frac{1}{|\bar{Y}|} \llbracket y \in \bar{Y} \rrbracket  \\
    &= \left(\alpha + \frac{1 - \alpha}{|\bar{Y}|}\right) \frac{1}{|\bar{Y}|} \sum_{ \bar{y} \in \bar{Y}} \llbracket \bar{y} = y \rrbracket \\
    &= \left(\alpha + \frac{1 - \alpha}{|\bar{Y}|}\right) \frac{1}{|\bar{Y}|} \sum_{ \bar{y} \in \bar{Y}} u^{0/1}(\bar{y}, y)\,.
\end{align}
With this reformulation, it is clear that $U_{\alpha}$ quantifies the trade-off between recall and precision using the product of the total recall assigned to $\bar{Y}$
\begin{equation}
    \sum_{ \bar{y} \in \bar{Y}} u^{0/1}(\bar{y}, y)
\end{equation}
and the rational function of precision
\begin{equation}\label{eq:rational function of precision}
    g_{\alpha}(|\bar{Y}|) = \left(\alpha + \frac{1 - \alpha}{|\bar{Y}|}\right) \frac{1}{|\bar{Y}|} \,, 
\end{equation}
which is a monotone decreasing function of $|\bar{Y}|$. The hyperparameter $\alpha$ provides a simple way to control how quickly the reward for covering the true class decreases when increasing the cardinality $|\bar{Y}|$.  

Proceeding by analogous logic, for any given reward-sensitive accuracy $u$, we define a family of reward-sensitive utility-discounted accuracies $U$:
\begin{equation}\label{eq:Reward-sensitive-U-alpha}
    U_{\alpha}(\bar{Y},y) = \left(\alpha + \frac{1 - \alpha}{|\bar{Y}|}\right) \frac{1}{|\bar{Y}|} \sum_{ \bar{y} \in \bar{Y}} u(\bar{y}, y)\,,
\end{equation}
where 
\begin{equation}
    \sum_{ \bar{y} \in \bar{Y}} u(\bar{y}, y)
\end{equation}
is the total reward-sensitive recall assigned to $\bar{Y}$. 

Clearly, the reward-sensitive utility-discounted accuracy $U_{\alpha}$ \eqref{eq:Reward-sensitive-U-alpha} assigned to $\bar{Y}$ should be high if the total reward-sensitive recall is high and the cardinality $|\bar{Y}|$ is small. As will be shown in Section \ref{sec:Desirable properties}, we can determine, for each given reward-sensitive accuracy $u$, a range of $\alpha$ so that  $U_{\alpha}$ \eqref{eq:Reward-sensitive-U-alpha} satisfies various desirable properties. However, let us first show that finding set-valued BOPs \eqref{eq:set-valued BOP} of any $U_{\alpha}$ \eqref{eq:Reward-sensitive-U-alpha} can be done efficiently. 

\subsubsection{Set-valued Bayesian-optimal predictions}

We will proceed with a general setting where $U_{\alpha}$ are generalizations of a given reward-sensitive $u$, such that, for any true class $y \in \cY$, $1 = u(y,y) > u(\bar{y},y) \geq 0$, $\bar{y} \neq y \in \cY$. In other words, we only assume that the reward given for the correct singleton predictions should not be smaller than making wrong singleton predictions. Note that as long as $u$ is bounded, those constraints are not limiting, as the expectation operator is positive homogeneous, i.e., $\mathbf{E}_{\bp(\cY \given x)}\left[ \alpha f + \beta \right]=\alpha \mathbf{E}_{\bp(\cY \given x)}[f] + \beta$ for any $\alpha >0$ and $\beta \in \mathbb{R}$, hence one can always rescale $u$ between $[0,1]$ without modifying the optimal decisions and the solution of Equation~\eqref{eq:set-valued BOP}. 

\begin{proposition}\label{pro:set-valued BOP U_alpha}
Let $u$ be any reward-sensitive accuracy such that, for any true class $y \in \cY$, $1 = u(y,y) > u(\bar{y},y) \geq 0$, $\bar{y} \neq y \in \cY$. Let $\alpha \geq 1$. Let $U_{\alpha}$ \eqref{eq:Reward-sensitive-U-alpha} be the corresponding reward-sensitive utility-discounted accuracy. Finding a set-valued BOP \eqref{eq:set-valued BOP} of $U_{\alpha}$ can be done in $O(K^2)$ using Algorithm \ref{alg:set-valued BOP U_alpha}. 
\begin{algorithm}[http]
\DontPrintSemicolon
\KwIn{Probability distribution $\bp(\cY \given x)$, output space $\mathcal{Y}$, number of classes $K$, a reward-sensitive utility-discounted accuracy $U_{\alpha}$}
\KwOut{A set-valued BOP $\hat{Y}_{U_\alpha}$ \eqref{eq:set-valued BOP}}
\For{$k \gets 1$ \KwTo $K$}{
    Compute $s_k \gets \mathbf{E}_{\bp(\cY \given x)}\left[u(y^k,y)\right]$ \Comment{Cost $O(K)$}
}
Sort $s_1 \ldots, s_K$ such that $s_{(1)} \geq \ldots
   \geq s_{(K)}$  \Comment{Cost $O(K\log(K))$} 
   
$s_{\mathrm{temp}} \gets s_{(1)}$\;
$e_{1} \gets s_{(1)}$\;
\For{$k \gets 2$ \KwTo $K$}{
  Compute $g_{\alpha}(k)$ \Comment{Use \eqref{eq:rational function of precision} in $O(1)$}
  
  $s_{\mathrm{temp}} \gets s_{\mathrm{temp}} + s_{(k)}$
  
  $e_k \gets g_{\alpha}(k) s_{\mathrm{temp}}$ \Comment{Calculate $ \mathbf{E}_{\bp(\cY \given x)} \left[  U_{\alpha}( \{y^{(1)}, \ldots, y^{(k)}\},y) \right]$}
  }
Find $k^* = \operatorname*{argmax}_{k \in [K]}s_k$ \Comment{Find the optimal cardinality}

\Return{$\hat{Y}_{U_\alpha} = \{y^{(1)}, \ldots, y^{(k^*)}\}$}\;
\caption{Finding a set-valued BOP $\hat{Y}_{U_\alpha}$ \eqref{eq:set-valued BOP} of $U_{\alpha}$}
\label{alg:set-valued BOP U_alpha}
\end{algorithm}

\end{proposition}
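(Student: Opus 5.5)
Since $U_\alpha(\bar{Y},y)=g_\alpha(|\bar{Y}|)\sum_{\bar{y}\in\bar{Y}}u(\bar{y},y)$, the first step is to rewrite the objective of \eqref{eq:set-valued BOP} by linearity of expectation:
\begin{equation*}
\mathbf{E}_{\bp(\cY \given x)}\big[U_\alpha(\bar{Y},y)\big] = g_\alpha(|\bar{Y}|)\sum_{\bar{y}\in\bar{Y}} s(\bar{y}), \qquad s(\bar{y}) \defi \mathbf{E}_{\bp(\cY \given x)}\big[u(\bar{y},y)\big].
\end{equation*}
Each score $s_k=s(y^k)$ is a sum of $K$ terms, so all scores together cost $O(K^2)$. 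The whole argument then rests on one observation: the expected score of $\bar{Y}$ depends only on its cardinality and on the sum of the scores of its members.

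\textbf{Reduction to top-$k$ sets.} Fix a cardinality $k\in[K]$. The factor $g_\alpha(k)$ is then a constant, and I will check that it is positive for $\alpha\ge 1$. Positivity holds because
\begin{equation*}
g_\alpha(k)=\frac{\alpha(k-1)+1}{k^2}>0 .
\end{equation*}
So among all $\bar{Y}$ with $|\bar{Y}|=k$, maximizing the objective amounts to maximizing $\sum_{\bar{y}\in\bar{Y}}s(\bar{y})$. Because every $s_k\ge 0$, an exchange argument shows this sum is maximized by the $k$ classes with the largest scores, $\{y^{(1)},\ldots,y^{(k)}\}$: if some $\bar{Y}$ omitted a $y^{(i)}$ with $i\le k$ while containing a lower-ranked class, swapping the two would not decrease the sum. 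Ties can be broken arbitrarily.

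Hence a global maximizer lies among the $K$ nested candidates $Y_k=\{y^{(1)},\ldots,y^{(k)}\}$. Their expected scores are $e_k=g_\alpha(k)\sum_{i\le k}s_{(i)}$, exactly what Algorithm~\ref{alg:set-valued BOP U_alpha} computes incrementally with a running sum.

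\textbf{Complexity.} The costs add up as follows:
\begin{itemize}
\item computing the scores: $O(K^2)$;
\item sorting them: $O(K\log K)$;
\item computing the $e_k$ with a running sum and constant-time evaluation of $g_\alpha$: $O(K)$;
\item the final argmax over $k$: $O(K)$.
\end{itemize}
The total is $O(K^2)$. The final selection should be read as the argmax over the $e_k$, not over the $s_k$; I will note this as the intended meaning of the algorithm's last step.

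\textbf{Main obstacle.} The only real care point is the role of $\alpha\ge 1$. It guarantees the positivity of $g_\alpha$, which is what allows the fixed-cardinality subproblem to reduce to maximizing the score sum. The positivity actually holds for any $\alpha>-1/(k-1)$, so the stated assumption is more than sufficient. I expect the condition $\alpha\ge 1$ matters more for the desirable properties of Section~\ref{sec:Desirable properties} than for this computation.
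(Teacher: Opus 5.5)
Your proof is correct and follows the paper's own argument. You rewrite the expected score as $g_\alpha(|\bar{Y}|)\sum_{\bar{y}\in\bar{Y}}\mathbf{E}[u(\bar{y},y)]$, reduce each fixed cardinality to the top-$k$ classes, sort, and evaluate the nested candidates with a running sum, for $O(K^2)$ overall. Your explicit check that $g_\alpha(k)=(\alpha(k-1)+1)/k^2>0$ and your reading of the final step as an argmax over the $e_k$ make precise what the paper only calls ``clearly''. That full scan is also what the algorithm actually does, whereas the paper's prose about stopping ``until the expected score stops increasing'' quietly relies on the unimodality lemma proved later.
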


While the full proof is in the appendix, a key element to understand Algorithm~\ref{alg:set-valued BOP U_alpha} is to observe that $\mathbf{E}_{\bp(\cY \given x)}\left[  U_{\alpha}(\bar{Y},y) \right]=g(|\bar{Y}|)\sum_{\bar{y} \in \bar{Y}} \mathbf{E}_{\bp(\cY \given x)}[u(\bar{y},y)]$. Later on, in Section \ref{sec:Desirable properties}, we shall show that reward-sensitive utility-discounted accuracies $U_{\alpha}$ satisfy the unimodal property: the expected scores $\mathbf{E}_{\bp(\cY \given x)} \left[  U_{\alpha}(\hat{Y}^k_{U_{\alpha}},y) \right]$, where $\hat{Y}^k_{U_{\alpha}}$ is local optimal set-valued prediction of fixed cardinality $k$, is a unimodal function of $k$. Therefore, we can stop the for-loop in lines 6--9 of Algorithm \ref{alg:set-valued BOP U_alpha} when $\mathbf{E}_{\bp(\cY \given x)} \left[  U_{\alpha}(\hat{Y}^k_{U_{\alpha}},y) \right]$ starts decreasing. This may help to significantly reduce the runtime of finding set-valued BOPs, as suggested by our empirical evidence that the average cardinality of non-singleton predictions is often small.     

\section{Desirable properties}\label{sec:Desirable properties}

This section discusses desirable properties that might be expected from utility-discounted accuracies, and then assesses the proposed reward-sensitive utility-discounted accuracies $U_{\alpha}$ \eqref{eq:Reward-sensitive-U-alpha} accordingly.  

\subsection{Properties of interest}

In the following, we reorganize desirable properties of reward-sensitive utility-discounted accuracies formulated in \citep{yang2016costs} for assessing individual set-utilities and introduce a new property for assessing families of set-utilities, namely nested coarsening. 

\begin{property}[Normalization] \label{proper:Normalization} A reward-sensitive utility-discounted accuracy $U:\cY_{\mathrm{set}} \times \cY \fromto \mathbb{R}_{+}$ is normalized if 
\begin{equation}
U(\bar{Y},y) \in [0,1], \forall (\bar{Y},y) \in \mathcal{Y}_{\mathrm{set}} \times \mathcal{Y} \,.    
\end{equation}
\end{property}
Normalization is indeed a desirable property, which facilitates the interpretation of the performance level of the set-valued predictors at both individual and sample/population levels.   

\begin{property}[Universal feasibility]\label{proper:Universal feasibility}
Let $\Delta^K$ be the probability simplex over $|\mathcal{Y}|$ outcomes. Let $\cY_{\mathrm{sin}}$ be the set of all the singleton predictions. A reward-sensitive utility-discounted accuracy $U:\cY_{\mathrm{set}} \times \cY \fromto \mathbb{R}_{+}$ is universal feasibility if there is a subset $\cY_{\mathrm{per}}$, which satisfies $\cY_{\mathrm{sin}} \subsetneq \cY_{\mathrm{per}} \subset \cY_{\mathrm{set}}$, and for any set-valued prediction $\hat{Y} \in \cY_{\mathrm{per}}$, there exists a conditional probability distribution $\boldsymbol{p}(\mathcal{Y}\given \bx) \in \Delta^K$ such that 
\begin{equation}
\hat{Y} \in \operatorname*{argmax}_{\bar{Y}\in \mathcal{Y}_{\mathrm{set}}} \mathbf{E}_{\boldsymbol{p}(\mathcal{Y}\given \bx)}\left[U(\bar{Y},y) \right] \,.    
\end{equation}
\end{property}

Universal feasibility ensures that both singleton and set-valued predictions have chances to be BOPs $\hat{Y}_U$ \eqref{eq:set-valued BOP} of the reward-sensitive utility-discounted accuracy $U$. Not having this property means that, in practice, the system does not allow for set-valued predictions, defeating the intent to have some. 

\begin{property}[Nested optimality]\label{proper:Nested optimality}
For any conditional probability distribution $\boldsymbol{p}(\mathcal{Y}\given \bx) \in \Delta^{|\mathcal{Y}|}$, and any possible cardinality $k \in [K]$, let 
\begin{align}
    \hat{Y}^{k}_U \in \operatorname*{argmax}_{\substack{\bar{Y} \in \mathcal{Y}_{\mathrm{set}} \\ |\bar{Y}| = k}} \mathbf{E}_{\boldsymbol{p}(\mathcal{Y}\given \bx)}\left[ U(\bar{Y}, y) \right]\,.
\end{align}
A reward-sensitive utility-discounted accuracy $U:\cY_{\mathrm{set}} \times \cY \fromto \mathbb{R}_{+}$ satisfies the nested optimality if 
\begin{equation}
\hat{Y}^{k}_U \subset \hat{Y}^{k'}_U, \forall 1 \leq k < k' \leq K\,.    
\end{equation}
\end{property}

By definition, nested optimality ensures that for any query instance $\bx$, the recall of $\hat{Y}^{k}_U$ should not decrease when allowing larger set-valued predictions. This might be clearly desirable for any set-valued prediction approach. It also encodes a kind of logical consistency: a previously selected class should not be removed if we allow for more classes to be included in the set-valued prediction.

To ensure that making BOPs $\hat{Y}_U$ \eqref{eq:set-valued BOP} of $U$ indeed seeks a trade-off between precision and recall, it might be desirable to require the following unimodality property.  

\begin{property}[Unimodality] \label{proper:Unimodality}
 A reward-sensitive utility-discounted accuracy $U:\cY_{\mathrm{set}} \times \cY \fromto \mathbb{R}_{+}$ satisfies unimodality if, for any conditional probability distribution $\boldsymbol{p}(\mathcal{Y}\given \bx) \in \Delta^K$, the local maximum expected score 
\begin{equation}
f^k_U =  \mathbf{E}_{\boldsymbol{p}(\mathcal{Y}\given \bx)}\left[U(\hat{Y}^{k}_U,y)\right]    
\end{equation} 
is a unimodal function of $k$. More precisely, there is a single peak $k^* \in [K]$ such that 
\begin{equation}
    f^k_U \leq \dots \leq f^{k^*}_U \land f^{k^*}_U \geq \dots \geq f^{K}_U \,.
\end{equation}
\end{property}
If nested optimality is already satisfied, this simply means that by making BOPs $\hat{Y}_U$ \eqref{eq:set-valued BOP} of $U$, we should stop seeking more recall (by increasing the cardinality of the set-valued prediction) as soon as the precision quickly drops.  

Since different end-users may prefer different ways to assess the trade-off between recall and precision, it might be desirable to facilitate their participation in the design of the set-utilities. One way to accommodate this practical need is to provide end-users with a family of reward-sensitive utility-discounted accuracies $U_\alpha$, $\alpha \in [\underline{\alpha}, \overline{\alpha}]$, whose members only differ by a real-valued hyperparameter $\alpha$ controlling the level of reward when the set-valued predictions become larger
\begin{align}
    U_{\alpha'}(\bar{Y},y) \geq U_{\alpha}(\bar{Y},y) \,, \forall \alpha' \geq \alpha\,, \forall (\bar{Y},y) \in \cY_{\mathrm{set}} \times \cY \,,
\end{align}    
and let them analyze the effect of hyperparameter $\alpha$ (e.g., using a validation set) and choose the final values at their convenience. 

In this case, it might be desirable to ensure the following logical consistency between the predictions when we increase our degree of robustness by controlling the reward level.

\begin{property}[Nested coarsening] \label{proper:Nested coarsening}
Let $U_\alpha$, $\alpha \in [\underline{\alpha}, \overline{\alpha}]$, be any family of reward-sensitive utility-discounted accuracies, whose members only differ by a real-valued hyperparameter $\alpha$ controlling the level of reward when the reward level becomes larger
\begin{align}
    U_{\alpha'}(\bar{Y},y) \geq U_{\alpha}(\bar{Y},y) \,, \forall \underline{\alpha} \leq \alpha' < \alpha \leq \overline{\alpha}\,, \forall (\bar{Y},y) \in \cY_{\mathrm{set}} \times \cY \,. 
\end{align} 

The family of $U_\alpha$, $\alpha \in [\underline{\alpha}, \overline{\alpha}]$, satisfies nested coarsening if 
\begin{align}\label{eq:set-inclusion}
    \hat{Y}_{U_{\alpha'}} \supseteq \hat{Y}_{U_{\alpha}} \,, \forall \underline{\alpha} \leq \alpha' < \alpha \leq \overline{\alpha}\,, \forall \boldsymbol{p}(\mathcal{Y}\given \bx) \in \Delta^K \,. 
\end{align}
\end{property}
Clearly, if the nested coarsening is ensured, for any query instance $\bx$, the recall (precision) should not decrease (increase) when increasing the reward level $\alpha$. As in Property~\ref{proper:Nested optimality}, the idea is that by allowing more imprecision in the prediction, a previously selected class should not be removed.   
\subsection{Analysis}

Reminding that the proposed reward-sensitive utility-discounted accuracies $U_{\alpha}$ \eqref{eq:Reward-sensitive-U-alpha} can be defined based on any given reward-sensitive $u$. In this section, we only focus on the cases where, for any true class $y \in \cY$, $1 \geq u(y,y) \geq u(\bar{y},y)\geq 0$, $\bar{y} \neq y \in \cY$. 

In the following, we show that for any given reward-sensitive accuracy $u$, there is a non-empty range $[1, \overline{\alpha}_u]$ so that, for any $\alpha \in (1, \overline{\alpha}_u)$ the reward-sensitive utility-discounted accuracy $U_{\alpha}$ \eqref{eq:Reward-sensitive-U-alpha} satisfies all the normalization, universal feasibility, nested optimality, and unimodality properties. Moreover, the value $\overline{\alpha}_u$ can be determined efficiently. We further show that the family of $U_{\alpha}$ \eqref{eq:Reward-sensitive-U-alpha}, $\alpha \in (1, \overline{\alpha}_u)$, satisfies the nested coarsening property.

\begin{proposition}\label{propos:desirable_properties_for_u_alpha}
Let $u$ be any reward-sensitive accuracy $u$, where, for any true class $y \in \cY$, $1 = u(y,y) > u(\bar{y},y) \geq 0$, $\bar{y} \neq y \in \cY$. Let $\alpha \in (1, \overline{\alpha}_u)$, where $\overline{\alpha}_u$ can be found in $O(K^3)$ using Algorithm \ref{alg:find max alpha}. The reward-sensitive utility-discounted accuracy $U_{\alpha}$ \eqref{eq:Reward-sensitive-U-alpha} satisfies all the normalization, universal feasibility, nested optimality, and unimodality properties.  

\begin{algorithm}[http]
\DontPrintSemicolon
 \KwIn{A reward-sensitivity accuracy $u$ and output space $\cY$}
\KwOut{The upper value $\overline{\alpha}_u$}
\For{$y \in \cY$}{
    Sort $y^1, \ldots, y^K$ s.t. $u(y^{(1),y},y) \geq \ldots
   \geq u(y^{(K),y},y)$  \Comment{Cost $O(K\log(K))$}
}

$\overline{\alpha}_u \gets \infty$

\For{$k \gets 2$ \KwTo $K$}{
    $s_k \gets \infty$\;
    \For{$y \in \cY$}{
        $s_{\mathrm{temp}} = 1 /(\sum_{i=1}^k u(y^{(i),y},y))$  \Comment{Cost $O(K)$}
        
        \If{$s_{\mathrm{temp}} < s_k$}{
            $s_k \gets s_{\mathrm{temp}}$\;
        }            
    }
    $\alpha_k = (s_k \times k^2 - 1)/(k - 1)$\;
    \If{$\alpha_k < \overline{\alpha}_u$}{
            $\overline{\alpha}_u \gets \alpha_k$\;
    }
} 
\Return{$\overline{\alpha}_u$}\;
\caption{Find the upper value $\overline{\alpha}_u$}
\label{alg:find max alpha}
\end{algorithm}
\end{proposition}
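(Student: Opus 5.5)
The plan rests on the observation made after Proposition~\ref{pro:set-valued BOP U_alpha}: writing $s_k = \mathbf{E}_{\bp(\cY \given x)}[u(y^k,y)]$, we have $\mathbf{E}_{\bp(\cY \given x)}[U_\alpha(\bar{Y},y)] = g_\alpha(|\bar{Y}|)\sum_{y^k\in\bar{Y}} s_k$. Each property then reduces to a statement about the sorted scores $s_{(1)}\ge\dots\ge s_{(K)}$, the partial sums $T_k=\sum_{i\le k}s_{(i)}$, and the scalar function $g_\alpha$. I would treat the four properties in the order normalization, nested optimality, unimodality, universal feasibility.

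\emph{Normalization.} This is where Algorithm~\ref{alg:find max alpha} comes from. For fixed $y$ and $|\bar{Y}|=k$, the sum $\sum_{\bar{y}\in\bar{Y}}u(\bar{y},y)$ is at most $S_k(y)=\sum_{i\le k}u(y^{(i),y},y)$, and this bound is attained. So $U_\alpha\le 1$ everywhere if and only if
\begin{equation*}
\frac{\alpha k+1-\alpha}{k^2}\,\max_y S_k(y)\le 1 \quad\text{for every } k\ge 2 .
\end{equation*}
Rearranging gives $\alpha\le (k^2 s_k-1)/(k-1)=\alpha_k$, where $s_k=1/\max_y S_k(y)$, and minimizing over $k$ yields exactly $\overline{\alpha}_u$. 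Nonnegativity is immediate since $\alpha>1$ and $u\ge 0$. The cost is $K$ sorts, giving $O(K^2\log K)$, plus $O(K)$ values of $k$, each requiring $O(K)$ classes times an $O(K)$ sum, hence $O(K^3)$ overall. I also need $\overline{\alpha}_u>1$ so the interval is nonempty. This follows from $S_k(y)<k$, because $u(\bar y,y)<1$ off the diagonal, so $k^2 s_k>k$ and therefore $\alpha_k>1$.

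\emph{Nested optimality.} For fixed cardinality $k$, $g_\alpha(k)$ is a positive constant, so maximizing the expected score means maximizing $\sum_{\bar Y}s_k$. This is achieved by the top-$k$ classes $\{y^{(1)},\dots,y^{(k)}\}$. With a fixed tie-breaking in the sort, these sets are nested in $k$.

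\emph{Unimodality.} This is the step I expect to be the main obstacle. With nested optimality in hand, $f^k=g_\alpha(k)T_k$. I would rewrite this as $f^k=\big(\alpha+\tfrac{1-\alpha}{k}\big)a_k$, where $a_k=T_k/k$ is the running average and is nonincreasing. Next I would compute $f^{k+1}-f^k$ and show that once it is negative it stays negative. Concretely, $f^{k+1}<f^k$ iff $s_{(k+1)}<h(k)T_k$, where $h(k)=g_\alpha(k)/g_\alpha(k+1)-1$. Since $s_{(k+2)}\le s_{(k+1)}$, it would suffice to show $h(k)T_k\le h(k+1)T_{k+1}$. The difficulty is that $h$ is not monotone in general. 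I would first attack this by direct algebra on the explicit rational function $g_\alpha$, using $T_{k+1}=T_k+s_{(k+1)}$ together with the negativity hypothesis itself, for instance via $T_k\ge k\,s_{(k+1)}$. If that fails, I would invoke the constraint $\alpha<\overline{\alpha}_u$, which bounds how fast $g_\alpha$ decays relative to the available rewards. This is the place where the upper end of the range for $\alpha$ should matter beyond normalization.

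\emph{Universal feasibility.} I would take $\cY_{\mathrm{per}}$ to consist of the singletons together with suitable pairs. For each pair, choose $\bp$ giving mass $1/2$ to each of the two classes. Then those two classes share the largest score $s$, and $f^2=\tfrac{\alpha+1}{2}s>s=f^1$ precisely because $\alpha>1$. By unimodality it then remains to check $f^3\le f^2$. I would enforce this by a perturbation argument: if necessary, concentrate $\bp$ on a pair chosen so that the third-largest score is strictly below the top two. Singletons are feasible via point masses.
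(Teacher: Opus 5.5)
Your normalization and nested-optimality arguments match the paper's. The genuine gap is unimodality, which is the heart of the proposition. You leave it open, and the reduction you set up cannot be completed.

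\textbf{Unimodality.} The sufficient condition $h(k)T_k\le h(k+1)T_{k+1}$ is false even under the hypothesis $f^{k+1}<f^k$. Take $u=u^{0/1}$, $K\ge 4$, $\alpha=2$ (inside $(1,3)=(1,\overline{\alpha}_{u^{0/1}})$) and $\bp=(\tfrac12,\tfrac12-\epsilon,\epsilon,0,\dots,0)$ with small $\epsilon>0$. Then $h(2)=\tfrac{7}{20}$ and $h(3)=\tfrac{17}{63}$. We get $s_{(3)}=\epsilon<h(2)T_2=\tfrac{7}{20}(1-\epsilon)$, yet $h(2)T_2>\tfrac{17}{63}=h(3)T_3$. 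So further algebra with the hypothesis cannot rescue this route, since the hypothesis only bounds $s_{(k+1)}$ from above, which is the wrong direction. Nor can the constraint $\alpha<\overline{\alpha}_u$ help, because the example lies inside the range.

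The missing idea is to compare $s_{(k+2)}$ with $s_{(k+1)}$, not with $h(k)T_k$. The hypothesis $s_{(k+1)}\le h(k)T_k$ gives $T_{k+1}=T_k+s_{(k+1)}\ge(1+1/h(k))\,s_{(k+1)}$. Hence $h(k+1)T_{k+1}\ge h(k+1)(1+1/h(k))\,s_{(k+1)}\ge s_{(k+1)}\ge s_{(k+2)}$, which is exactly $f^{k+2}\le f^{k+1}$, provided $1+1/h(k)\ge 1/h(k+1)$.

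Write $a,b,c$ for $1/g_\alpha(k),1/g_\alpha(k+1),1/g_\alpha(k+2)$. The condition then reads $b/(b-a)\ge b/(c-b)$, i.e.\ $a+c\ge 2b$. This is discrete convexity of $k\mapsto 1/g_\alpha(k)=k^2/(\alpha(k-1)+1)$, whose second derivative is $2(\alpha-1)^2/(\alpha(x-1)+1)^3\ge 0$. It is precisely the ``$(1/x)$-convexity'' on which the paper's proof rests. In this argument the upper bound on $\alpha$ matters only through $g_\alpha$ being strictly decreasing (so $h(k)>0$), which holds because $\overline{\alpha}_u\le\alpha_2\le 3$.

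\textbf{Universal feasibility.} This step also needs repair. With mass $\tfrac12$ on each of $y^1,y^2$, the two scores are $\tfrac12(1+u(y^1,y^2))$ and $\tfrac12(1+u(y^2,y^1))$. These differ when $u$ is asymmetric, and a third class may outscore both. For $K=2$, $u(y^1,y^2)=0.8$, $u(y^2,y^1)=0$, you get $\overline{\alpha}_u=\alpha_2=4/1.8-1\approx 1.22$. Your choice then yields $f^1=0.9>0.35(\alpha+1)=f^2$ for every admissible $\alpha$, so the BOP is a singleton.

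The paper instead picks the pair maximizing the off-diagonal reward, so every other class scores strictly below $y^1$. It puts the equalizing mass $p=(1-u(y^1,y^2))/(2-u(y^1,y^2)-u(y^2,y^1))$ on $y^1$, giving $f^2=\tfrac{\alpha+1}{2}f^1>f^1$. That already forces every BOP to be non-singleton, which is all that $\cY_{\mathrm{sin}}\subsetneq\cY_{\mathrm{per}}$ requires. Your extra check $f^3\le f^2$ and the perturbation argument are therefore unnecessary.
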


Since $U_{\alpha}$ satisfies the unimodality property, finding a set-valued BOP $\hat{Y}_{U_\alpha}$ \eqref{eq:set-valued BOP} of $U_{\alpha}$ can be further accelerated. Instead of assessing all the local optimal predictions $\hat{Y}^k_{U_\alpha}$ and then looking for the best one, one can gradually increase the cardinality $k$ and stop the searching algorithm as soon as the local maximum expected score $\mathbf{E}_{\boldsymbol{p}(\mathcal{Y}\given \bx)}\left[U_{\alpha}(\hat{Y}^{k}_{U_{\alpha}},y)\right]$ starts decreasing. This can be done using Algorithm \ref{alg:set-valued BOP U_alpha revised}. This result generalizes the known result for the specific case of utility-discounted accuracies $U^{0/1}_{\alpha}$ \citep{mortier2021efficient}.

\begin{algorithm}[http!]
\DontPrintSemicolon
\KwIn{Probability distribution $\bp(\cY \given x)$, output space $\mathcal{Y}$, number of classes $K$, a reward-sensitive utility-discounted accuracy $U_{\alpha}$}
\KwOut{A set-valued BOP $\hat{Y}_{U_\alpha}$ \eqref{eq:set-valued BOP}}
\For{$k \gets 1$ \KwTo $K$}{
    Compute $s_k \gets \mathbf{E}_{\bp(\cY \given x)}\left[u(y^k,y)\right]$ \Comment{Cost $O(K)$}
}
Sort $s_1 \ldots, s_K$ such that $s_{(1)} \geq \ldots
   \geq s_{(K)}$  \Comment{Cost $O(K\log(K))$} 
   
$s_{\mathrm{temp}} \gets s_{(1)}$; $e_{\mathrm{max}} \gets s_{(1)}$\;

$e_{\mathrm{temp}} \gets s_{(1)}$;$\hat{Y}_{U_\alpha} \gets \{y^{(1)}\}$\;
\For{$k \gets 2$ \KwTo $K$}{
  Compute $g_{\alpha}(k)$ \Comment{Use \eqref{eq:rational function of precision} in $O(1)$}
  
  $s_{\mathrm{temp}} \gets s_{\mathrm{temp}} + s_{(k)}$
  
  $e_{\mathrm{temp}} \gets g_{\alpha}(k) s_{\mathrm{temp}}$ \Comment{Calculate $ \mathbf{E}_{\bp(\cY \given x)} \left[  U_{\alpha}(\hat{Y}_{U_\alpha} \cup \{y^{(k)}\},y) \right]$}
  
  \eIf{$e_{\mathrm{temp}} < e_{\mathrm{max}}$}{
    Return $\hat{Y}_{U_\alpha}$ \Comment{Stop when the expected score starts decreasing}
  }{
    $e_{\mathrm{max}} \gets e_{\mathrm{temp}}$ \Comment{Update the max expected scores}

    $\hat{Y}_{U_\alpha} \gets \hat{Y}_{U_\alpha} \cup \{y^{(k)}\}$ \Comment{Update the optimal prediction}
  }
}
\Return{$\hat{Y}_{U_\alpha}$}\;
\caption{Finding a set-valued BOP $\hat{Y}_{U_\alpha}$ \eqref{eq:set-valued BOP} of $U_{\alpha}$}
\label{alg:set-valued BOP U_alpha revised}
\end{algorithm}

\begin{proposition}\label{propos:Nested coarsening}
Let $u$ be any reward-sensitive accuracy $u$, where, for any true class $y \in \cY$, $1 = u(y,y) > u(\bar{y},y) \geq 0$, $\bar{y} \neq y \in \cY$. The family of reward-sensitive utility-discounted accuracies $U_{\alpha}$ \eqref{eq:Reward-sensitive-U-alpha}, $\alpha \in (1, \overline{\alpha}_u)$, satisfies the nested coarsening property.  
\end{proposition}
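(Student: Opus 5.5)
The plan is to reduce nested coarsening to a monotonicity statement about the \emph{optimal cardinality}. I will use the structure already exploited in Proposition~\ref{pro:set-valued BOP U_alpha} and Proposition~\ref{propos:desirable_properties_for_u_alpha}.

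\textbf{Reduction to prefixes.} For a fixed $\bp(\cY \given \bx)$, write $s_k = \mathbf{E}_{\bp(\cY\given\bx)}[u(y^k,y)]$. Linearity of expectation gives $\mathbf{E}_{\bp(\cY \given \bx)}[U_\alpha(\bar Y,y)] = g_\alpha(|\bar Y|)\sum_{\bar y\in\bar Y}s_{\bar y}$. Hence, by nested optimality (Proposition~\ref{propos:desirable_properties_for_u_alpha}), for every $\alpha$ the optimal set of each cardinality is a prefix $\{y^{(1)},\dots,y^{(k)}\}$ of one fixed ordering $s_{(1)}\ge\dots\ge s_{(K)}$. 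This ordering depends only on $\bp$ and $u$, not on $\alpha$; ties are broken once and for all, identically for every $\alpha$. So every BOP is a prefix of the same chain, and set inclusion between $\hat Y_{U_\alpha}$ and $\hat Y_{U_{\alpha'}}$ is equivalent to the corresponding inequality between their optimal cardinalities $k^*(\alpha)$ and $k^*(\alpha')$.

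\textbf{Direction of monotonicity in $\alpha$.} Rewrite $g_\alpha(k)=\frac{\alpha(k-1)+1}{k^2}$. This is nondecreasing in $\alpha$ and strictly increasing for $k\ge 2$. So the member of the family with the larger $\alpha$ is the one with the higher reward level, and it is the one whose BOP must be the coarser (larger) set. I will state the inclusion in this form, reading the indexing convention of Property~\ref{proper:Nested coarsening} accordingly: a higher reward level must yield a superset.

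\textbf{Single-crossing argument.} Put $S_k=\sum_{i\le k}s_{(i)}$ and $f_\alpha(k)=g_\alpha(k)S_k$. Note that $S_k\ge s_{(1)}>0$: since $u(y,y)=1$, we have $\sum_k s_k \ge 1$, so $s_{(1)}>0$. Hence all ratios below are well defined. For $k>j$,
\[
\frac{f_\alpha(k)}{f_\alpha(j)}=\frac{S_k\,j^2}{S_j\,k^2}\cdot\frac{\alpha(k-1)+1}{\alpha(j-1)+1}.
\]
The derivative in $\alpha$ of the last factor has the sign of $(k-1)-(j-1)>0$, so the ratio is strictly increasing in $\alpha$.

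Now let $\alpha_1<\alpha_2$ in $(1,\overline{\alpha}_u)$ and let $k_1=k^*(\alpha_1)$. Take any $j<k_1$. Optimality at $\alpha_1$ gives $f_{\alpha_1}(k_1)/f_{\alpha_1}(j)\ge 1$, and the monotonicity above gives $f_{\alpha_2}(k_1)/f_{\alpha_2}(j)>1$. So no cardinality smaller than $k_1$ can be optimal at $\alpha_2$. Therefore $k^*(\alpha_2)\ge k_1$, and by the prefix structure $\hat Y_{U_{\alpha_2}}\supseteq\hat Y_{U_{\alpha_1}}$.

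\textbf{Main obstacle.} The delicate point is not the inequality but the bookkeeping around ties, in two places. First, the sorting order must be common to all $\alpha$, which holds because the $s_k$ do not depend on $\alpha$. Second, ties among optimal cardinalities at the same $\alpha$ must be handled consistently, for example by always selecting the BOP returned by Algorithm~\ref{alg:set-valued BOP U_alpha revised}, or by the largest maximizer. The strict increase of the ratio settles the ties at $\alpha_2$ against smaller cardinalities. At $\alpha_1$, I will argue with whichever maximizer the fixed tie-breaking rule selects, since the single-crossing argument works for any maximizer $k_1$.
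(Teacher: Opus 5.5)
Your argument is correct, but it is organized differently from the paper's proof.

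The paper works only with adjacent cardinalities, in four steps:
\begin{enumerate}
\item It shows $g_{\alpha'}(k-1)g_{\alpha}(k)\le g_{\alpha'}(k)g_{\alpha}(k-1)$ for $\alpha\le\alpha'$.
\item It turns this into the threshold comparison $\frac{g_{\alpha}(k)}{g_{\alpha}(k-1)-g_{\alpha}(k)}\le\frac{g_{\alpha'}(k)}{g_{\alpha'}(k-1)-g_{\alpha'}(k)}$. This divides by $g_\alpha(k-1)-g_\alpha(k)>0$, so it uses that $g_\alpha$ is strictly decreasing in $k$ on the admissible range.
\item It deduces that $f^{k-1}_{U_\alpha}\le f^{k}_{U_\alpha}$ implies $f^{k-1}_{U_{\alpha'}}\le f^{k}_{U_{\alpha'}}$.
\item It concludes $k^*_\alpha\le k^*_{\alpha'}$. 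This step implicitly uses unimodality (Lemma~\ref{lem:Unimodality}) to know that $f_{U_\alpha}$ is non-decreasing on $\{1,\dots,k^*_\alpha\}$.
\end{enumerate}

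Your ratio $f_\alpha(k)/f_\alpha(j)$ is the telescoped, non-adjacent form of that same log-supermodularity inequality. Applying it directly between the optimum $k_1$ and every $j<k_1$ is the standard single-crossing argument from monotone comparative statics, and it buys you three things.
\begin{itemize}
\item It needs neither unimodality nor monotonicity of $g_\alpha$ in $k$.
\item Consequently it goes through for any $\alpha$ keeping $g_\alpha$ positive, not only on $(1,\overline{\alpha}_u)$.
\item Its strictness shows that every maximizer at $\alpha_2$ is at least every maximizer at $\alpha_1$. This settles cardinality ties that the paper leaves implicit.
\end{itemize}

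In exchange, the paper's local formulation matches the early-stopping rule of Algorithm~\ref{alg:set-valued BOP U_alpha revised}: the condition for moving from $k-1$ to $k$ is itself monotone in $\alpha$.

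Your reading of the direction (larger $\alpha$ gives a larger set) agrees with what the paper's proof actually establishes. This holds despite the swapped indices in the display of Property~\ref{proper:Nested coarsening}.
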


We end our analysis with the remarks for the specific case of utility-discounted accuracies $U^{0/1}_{\alpha}$ \citep{mortier2021efficient}. 

\begin{remark}\label{rem:upper-value-for-U-0/1}
For any $\alpha \in (1, 3)$, the utility-discounted accuracy $U^{0/1}_{\alpha}$ satisfies all the normalization, universal feasibility, nested optimality, and unimodality properties. Moreover, The family of utility-discounted accuracies $U^{0/1}_{\alpha}$ \eqref{eq:Reward-sensitive-U-alpha}, $\alpha \in (1, 3)$, satisfies the nested coarsening property.   
\end{remark}

\begin{remark}\label{rem:probustness-permissiveness-for-U-0/1}
For any utility-discounted accuracy $U^{0/1}_{\alpha}$ with $\alpha \in (1, 3)$, we have the set $\cY_{\mathrm{per}} = \cY_{\mathrm{set}}$.  
\end{remark}

\section{Experiments}\label{sec:Experiments}

\subsection{Experimental setting}

In the following, we describe the data sets and evaluation metrics. We then describe $8$ sets of experiments designed based on $8$ different combinations of distribution shift scenario and reward-sensitivity. Finally, we provide a detailed description of the two compact ensembles used to estimate the representative distribution \eqref{eq:representative distribution} in each experiment. 

The source code used in this paper is developed on top of the Bayesian-Torch library \href{https://github.com/IntelLabs/bayesian-torch}{https://github.com/IntelLabs/bayesian-torch} and has been made public at \href{https://github.com/kdimtensor/Compact-cautious-deep-ensembling-in-multi-class-classification}{https://github.com/kdimtensor/Compact-cautious-deep-ensembling-in-multi-class-classification}. 

\subsubsection{Data sets and evaluation metrics}

We conduct experiments on $3$ image data sets, which are summarized in Table \ref{tab:datasets}.

\begin{table}[!http]
\centering
\begin{tabular}{l| c | c c c}
\hline
\# &Name  &\# of instances & image type &\# of classes \\
\hline
1 &CIFAR-10 &60,000    &$32 \times 32$ color images  & 10  \\
2 &Fashion-MNIST &70,000    &$28 \times 28$ gray images  & 10  \\
3 &Plant Leaf Diseases &61,486    &$256 \times 256$ color images  & 39  \\
\hline
\end{tabular}
\caption{Data sets.}\label{tab:datasets} 
\end{table}
The Plant Leaf Diseases data set has been introduced in \citep{geetharamani2019identification} and has been made public at \href{https://data.mendeley.com/datasets/tywbtsjrjv/1}{https://data.mendeley.com/datasets/tywbtsjrjv/1}.

We construct three classifiers, namely clf\_sE, clf\_L1, and clf\_KL, which respectively produce the representative distributions optimizing the average $d_{\mathrm{sE}}$ \eqref{eq:squared Euclidean distance}, $d_{\mathrm{L1}}$ \eqref{eq:L_1 distance}, and $d_{\mathrm{KL}}$ \eqref{eq:KL divergence} distances. Once the representative distribution is given, a set-valued BOP $\hat{Y}_{U_\alpha}$ \eqref{eq:set-valued BOP}, which optimizes the expected reward-sensitive utility-discounted accuracy $U_\alpha(\hat{Y},y)$ \eqref{eq:Reward-sensitive-U-alpha} is predicted. The BOP $\hat{Y}_{U_\alpha}$ \eqref{eq:set-valued BOP} is computed using Algorithm \ref{alg:set-valued BOP U_alpha revised}. In the specific case of $\alpha = 1$, clf\_sE, clf\_L1, and clf\_KL are reduced to conventional classifiers, which always predict singletons \eqref{eq:BOP}.     

\subsubsection*{Assess the set-valued predictions}

We shall employ a few related metrics to assess whether these classifiers, which have been shown to have desirable properties, can bring a good trade-off between correctness and precision to mitigate the potential unrobustness of the probabilistic classifier. We sample $20$ values $\alpha \in [1, \overline{\alpha}_u]$, where $\overline{\alpha}_u$ is computed using Algorithm \ref{alg:find max alpha}, with equal step sizes and report the average scores of the following metrics on the test set $\mathbf{T} = \{(\vec{x}_t, y_t) \given t = 1, \ldots, T\}$:
\begin{itemize}
    \item $\mathrm{ual}$: The average $U_{\alpha}(\hat{Y}_{U_\alpha}, y)$ \eqref{eq:Reward-sensitive-U-alpha} 
    \begin{equation}\label{eq:test_ual}
        \mathrm{ual}(\mathbf{T}, \alpha) = \frac{1}{T} \sum_{t=1}^T U_{\alpha}(\hat{Y}^t_{U_\alpha}, y_t) \,. 
    \end{equation}
    \item $\mathrm{rec}$ The average recall
    \begin{equation}\label{eq:test_rec}
        \mathrm{rec}(\mathbf{T}, \alpha) = \frac{1}{T} \sum_{t=1}^T \max_{\bar{y} \in \hat{Y}^t_{U_\alpha}} u(\bar{y},y_t)   \,.
    \end{equation} 
    \item $\mathrm{pro}$: The proportion of non-singleton predictions
    \begin{equation}\label{eq:test_pro}
        \mathrm{pro}(\mathbf{T}, \alpha) = \frac{1}{T} \sum_{t=1}^T \llbracket |\hat{Y}^t_{U_\alpha}| > 1  \rrbracket \,.
    \end{equation}
    \item $\mathrm{siz}$: The average set-size $|\hat{Y}^t_{U_\alpha}|$ of non-singleton predictions
    \begin{equation}\label{eq:test_siz}
        \mathrm{siz}(\mathbf{T}, \alpha) = \frac{\sum_{t=1}^T  \llbracket |\hat{Y}^t_{U_\alpha}| > 1  \rrbracket |\hat{Y}^t_{U_\alpha}|}{\sum_{t=1}^T \llbracket |\hat{Y}^t_{U_\alpha}| > 1  \rrbracket} \,.
    \end{equation}
    \item $\mathrm{sin}$: The proportion of correct singleton predictions
    \begin{equation}\label{eq:test_sin}
        \mathrm{sin}(\mathbf{T}, \alpha) =  \frac{\sum_{t=1}^T \llbracket |\hat{Y}^t_{U_\alpha}| = 1  \rrbracket \llbracket y_t \in \hat{Y}^t_{U_\alpha}  \rrbracket }{\sum_{t=1}^T \llbracket |\hat{Y}^t_{U_\alpha}| = 1  \rrbracket}\,,
    \end{equation}
    \item $\mathrm{set}$: The proportion of correct set-valued predictions
    \begin{equation}\label{eq:test_set}
        \mathrm{set}(\mathbf{T}, \alpha) =  \frac{\sum_{t=1}^T \llbracket |\hat{Y}^t_{U_\alpha}| > 1  \rrbracket \llbracket y_t \in \hat{Y}^t_{U_\alpha} \rrbracket }{\sum_{t=1}^T \llbracket |\hat{Y}^t_{U_\alpha}| > 1  \rrbracket}\,,
    \end{equation}
\end{itemize}

For each data set, we follow a 3-fold cross-validation procedure and report the average scores of the metrics covered.

To assess the potential risk of distribution shifts, which may be due to the presence of either noisy training images or noisy test images, for each data set, we conduct $4$ experiments: 
\begin{itemize}
    \item \textbf{Experiment 1}: Clean training images and clean test images
    \item \textbf{Experiment 2}: Clean training images noisy test images
    \item \textbf{Experiment 3}: Noisy training images and clean test images
    \item \textbf{Experiment 4}: Noisy training images and noisy test images
\end{itemize} 
We generate noisy images by injecting Gaussian noise into the original images. The choice of Gaussian noise was motivated by its popularity in real-world applications. First, Gaussian noise is often added to data as part of data augmentation to improve model robustness by exposing the model to varied and noisy inputs. Second, in various applications, Gaussian noise in digital images can arise during acquisition, e.g., sensor noise caused by poor illumination and/or high temperature, and/or transmission, e.g., electronic circuit noise. We employed the Gaussian noise injection mechanism proposed by \citet{hendrycks2019robustness} with the severity level 5 (the highest).

Yet, one might, of course, think of various other noise injection mechanisms. We think our choice of noise injection mechanism serves well to illustrate the potential applications of our generic compact cautious deep learning framework, which would be directly improved by taking into account current and emerging developments of compact ensembles and learning representative probability distributions.    

\subsubsection{Reward-sensitive utility-discounted accuracies}

Inspired by related work on designing hierarchical utilities (see, e.g., \citep{urbani2024harnessing} and references therein), we propose to construct the reward-sensitive utilities $u$ based on the hierarchical representation of classes. More precisely, we first partition the classes into different groups, and assign to the correct predictions, wrong predictions within the same group, and predictions across groups, a reward of $1$, $0.25$, or $0$, respectively. 

In general, we rely on WordNet \citep{wordnet} for the hierarchy of objects that are commonly used in the literature for the CIFAR-10 data set \citep{cifar_hierarchy} and the Fashion-MNIST data set \citep{fmnist_hierarchy}. Regarding the CIFAR-10 data set, we also consider the specifications about the superclass and classes in the official website (\url{https://www.cs.toronto.edu/\~kriz/cifar.html}) and divide the classes into two groups: animal and vehicle; similarly to Fashion-MNIST, we have three groups: clothing, footwear, and accessory. In the case of the Plant Leaf Diseases data set, we partition the classes into 15 groups accordingly to the plants species, one per plant: apple, blueberry, cherry, corn, grape, orange, peach, pepper, potato, raspberry, soybean, squash, strawberry, tomato, and an outlier. We refer to \href{https://data.mendeley.com/datasets/tywbtsjrjv/1}{https://data.mendeley.com/datasets/tywbtsjrjv/1} for the $39$ class names of this data~set. 

We also conduct experiments with the conventional utility-discounted accuracies $U^{0/1}_\alpha$ \eqref{eq:U-01-alpha} to complement known results on these set-utilities \citep{mortier2021efficient}. 

Again, we reiterate that our proposal can accommodate a wide range of reward-sensitive utilities. Therefore, the end-user can flexibly customize the reward-sensitive utilities to better reflect the way (set-valued) prediction should be assessed in their application domains.  

\subsubsection{Compact ensembles}

We shall end the experimental setting with descriptions of the two compact ensembles used to estimate the representative distribution \eqref{eq:representative distribution}.

\begin{itemize}
    \item \textbf{BNNs:} For each train-test split, we train a BNN, where all weights follow Gaussian distributions, on training data using the default setting recommended by the Bayesian-Torch library \href{https://github.com/IntelLabs/bayesian-torch}{https://github.com/IntelLabs/bayesian-torch}. For each test image, we feed the image to the BNN $100$ times to estimate $100$ probabilistic predictions, which are then used to estimate the representative distribution \eqref{eq:representative distribution}.  
    \item \textbf{CNNs with the Monte-Carlo dropout prediction option}: For each train-test split, we train a CNN on training data using the default setting recommended by the Bayesian-Torch library \href{https://github.com/IntelLabs/bayesian-torch}{https://github.com/IntelLabs/bayesian-torch}. For each test image, we perform Monte-Carlo dropout to estimate $100$ probabilistic predictions, which are then used to estimate the representative distribution \eqref{eq:representative distribution}. To facilitate the diversity of the ensemble members, we adopt the dropout rate of $0.2$. 
\end{itemize}

\subsection{Results}\label{sec:Results}

Since the trends in the improvements in predictive performance provided by making set-valued predictions are consistent across the experiments, we only elaborate on the results obtained when BNNs are employed to estimate the representative distributions \eqref{eq:representative distribution}, which are then used to define set-valued BOPs \eqref{eq:set-valued BOP} of the reward-sensitive utility-discounted accuracies \eqref{eq:Reward-sensitive-U-alpha} and defer other results to Appendix  \ref{sec:Additional experimental results}.

\input{tikz/bnn_train_clean}

\input{tikz/bnn_train_gnoise}

The results for clean training data and noisy data, which are given in the Figs.~\ref{fig:BNNs with clean training data} and \ref{fig:BNNs with noisy training data}, respectively, suggest a few interesting trends. 

\subsubsection{BNNs with clean training data}

The results given in Fig.~\ref{fig:BNNs with clean training data} on clean training data show multiple things, which we summarise below:
\begin{itemize}
    \item The first remark is that with clean training data, having clean or noisy test data makes a big difference, as accuracy significantly drops on test data, with a drop of almost 60\% on CIFAR-10 and Plant Leaf when considering only singleton predictions ($\alpha=1$). The use of set-valued predictions reduces this gap, with an affordable price in terms of precision, the average size remaining around 2;
    \item In all cases, set-valued prediction made by our framework improves results by trading recall for precision, as $\mathrm{ual}$ increases for all data sets. Of course, the gain is more significant when classifiers struggle at the start, as there is little gain to be made from set-valued prediction in case initial, precise classifiers are already quite good;
    \item there is no significant difference between the use of different distances, as the curves for $d_{\mathrm{sE}}$ \eqref{eq:squared Euclidean distance}, the $L_1$ distance $d_{\mathrm{L1}}$ \eqref{eq:L_1 distance} and KL divergence $d_{\mathrm{KL}}$ \eqref{eq:KL divergence} show the same trend. On CIFAR-10 and Plant Leaf, $d_{\mathrm{sE}}$ seems to produce more set-valued predictions, compensated by a gain in recall, as indicated by the ual curves that are almost identical.   
\end{itemize}

\subsubsection{BNNs with noisy training data}

The results given in Fig.~\ref{fig:BNNs with noisy training data} when training data is noisy again show interesting results:
\begin{itemize}
    \item First, while there is indeed a small drop in performances, due to the lesser quality of instances, yet we can see that the results are almost similar for both noisy and clean test data. This is due to the fact that clean test data (the "shifted" distribution in this scenario) are not likely to be out-of-distribution when the initial training is done on a noisy distribution, while noisy test data are likely to be OOD when the training set is clean. Note that for the Plan leaf data, the y-axis gap is very small compared than in Fig.~\ref{fig:BNNs with clean training data}, so there is no big differences;
    \item Second, allowing for set-valued prediction is quite beneficial here, again with a relatively limited number of classes when returning set-valued predictions. 
\end{itemize}
Other remarks made for the clean training data remain valid for this scenario, notably the little differences observed in taking different distances. 

Overall, we can make some first conclusions before those of Section~\ref{sec:Conclusion}: the presented framework does allow improvement in recall and performances when allowing for set-valued predictions, and this is particularly true in noisy scenarios (either in the training or test data). It is robust, working equally well on different ensemble regimes as well as different distances to compute representative probabilities. However, its capacity to solve issues in data distribution appears limited when the shift between the training and test set is significant and produces many OOD samples: in this case, it seems that encoding robustness and set-valued approaches in utilities and in the decision steps has some positive impacts, but those remain limited. In such cases, it would probably be desirable to also consider approaches relying on ensembles and sets of probabilities~\citep{wang2026learning,nguyen2025credal}

\subsubsection{Other results}

Similar results when the BNNs are employed to estimate the representative distributions \eqref{eq:representative distribution}, which are then used to define set-valued BOPs \eqref{eq:set-valued BOP} of the utility-discounted accuracies \eqref{eq:U-01-alpha} are given in Figs.~\ref{fig:BNNs with clean training data no sensitivity}--\ref{fig:BNNs with noisy training data no sensitivity} of Appendix  \ref{sec:Additional experimental results}.

The results given in Figs.~\ref{fig:CNNs with clean training data}--\ref{fig:CNNs with noisy training data no sensitivity} of Appendix  \ref{sec:Additional experimental results} suggest that similar trends on the mentioned metrics are observed when the CNNs with the dropout prediction option are employed to estimate the representative distributions \eqref{eq:representative distribution}, which are then used to define set-valued BOPs \eqref{eq:set-valued BOP} of the reward-sensitive utility-discounted accuracies \eqref{eq:Reward-sensitive-U-alpha} and set-valued BOPs \eqref{eq:set-valued BOP} of the utility-discounted accuracies \eqref{eq:U-01-alpha}.

It is indeed interesting to see that while CNNs provide competitive predictive performance when the training and testing data are of the same types, compared to those that are provided by BNNs, CNNs appear to be (much) less promising when encountering mismatches between training and test data types. In the next subsection, we further elaborate on the quality of the representative distributions \eqref{eq:representative distribution} provided by these compact ensembles to seek insights about this behavior.  
 
In the following, we illustrate the behaviors of the BNNs and CNNs when being employed as compact ensembles on a few test instances extracted from the CIFAR-10 data set. Fig.~\ref{fig:set-valued BOPs of BNNs vs CNNs} reports the probabilistic predictions 
\begin{align*}
    \Prob(\cY \given \bx )=\large(&\Prob(\mathrm{airplane}\given \bx),\Prob(\mathrm{automobile}\given \bx), \Prob(\mathrm{bird}\given \bx), \Prob(\mathrm{cat}\given \bx), \Prob(\mathrm{deer}\given \bx),\\
    &\Prob(\mathrm{dog}\given \bx), \Prob(\mathrm{frog}\given \bx), \Prob(\mathrm{horse}\given \bx), \Prob(\mathrm{ship}\given \bx), \Prob(\mathrm{truck} \given \bx)\large)
\end{align*}
and set-valued BOPs $\hat{Y}_{U_{1.63}}$ \eqref{eq:set-valued BOP} for two cat images. Cat is the most difficult class in this data set and is usually misclassified as a frog. On the clean version of image 1, both BNNs and CNNs provide set-valued BOPs of cardinality of either $2$ or $3$ which cover the correct class. However, on the noisy version, only BNNs and CNNs, which are trained on noisy training data, provide set-valued predictions of cardinality $2$ that cover the correct class. On both the clean and noisy versions of image $2$, only BNNs, which are trained on noisy training data, provide set-valued BOPs of cardinality $2$ that cover the correct class.  

\begin{figure}[http!]
    \centering
    \begin{tabular}{c|l|l}
        \hline
        \multicolumn{3}{c}{Image 1: Clean version}\\
        \hline
        \multirow{8}{*}{\includegraphics[width=0.22\textwidth]{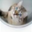}}  & \multirow{4}{*}{Clean $\mathbf{D}$} &  BNN: $\hat{Y}_{U_{1.63}} = \{\mathrm{bird}, \mathrm{cat}, \mathrm{frog}\}$\\
        && $\Prob: (0.00,   0.00,   0.38, 0.25, 0.1,  0.03, 0.18, 0.04, 0.00,   0.00)$ \\
        \cline{3-3}
        & & CNN: $\hat{Y}_{U_{1.63}} = \{\mathrm{cat},\mathrm{bird}\}$\\
        && $\Prob: (0.00,   0.00,   0.22, 0.52, 0.13, 0.06, 0.01, 0.06, 0.00,   0.00, )$ \\
        \cline{2-3}
        & \multirow{4}{*}{Noisy $\mathbf{D}$} & 
        BNN: $\hat{Y}_{U_{1.63}} = \{\mathrm{bird}, \mathrm{cat}\}$\\
        && $\Prob: (0.00,   0.00,   0.42, 0.35, 0.05, 0.07, 0.01, 0.09, 0.00,   0.00)$\\
        \cline{3-3}
        & & CNN: $\hat{Y}_{U_{1.63}} = \{\mathrm{cat},\mathrm{bird}\} $ \\
        & & $\Prob: (0.00,   0.00,   0.25,  0.58, 0.00,   0.16, 0.00,   0.00,   0.01, 0.00 )$\\
        \hline 
        \multicolumn{3}{c}{Image 1: Noisy version}\\
        \hline
        \multirow{8}{*}{\includegraphics[width=0.22\textwidth]{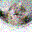}}  & \multirow{4}{*}{Clean $\mathbf{D}$} &  BNN: $\hat{Y}_{U_{1.63}} = \{\mathrm{frog}\}$\\
        && $\Prob: (0.00,   0.00,   0.08, 0.03, 0.03, 0.00,   0.86, 0.00,   0.00,   0.00)$\\
        \cline{3-3}
        & & CNN: $\hat{Y}_{U_{1.63}} = \{\mathrm{frog}\}$\\
        && $\Prob: (0.00,   0.00,   0.09, 0.11, 0.04, 0.00,   0.75, 0.00,   0.00,   0.01)$\\
        \cline{2-3}
        & \multirow{4}{*}{Noisy $\mathbf{D}$} & 
        BNN: $\hat{Y}_{U_{1.63}} = \{\mathrm{cat}, \mathrm{frog}\}$\\
        && $\Prob: (0.01, 0.00,   0.09, 0.55, 0.11, 0.03, 0.2,  0.01, 0.01, 0.00 )$\\
        \cline{3-3}
        & & CNN: $\hat{Y}_{U_{1.63}} = \{\mathrm{bird},\mathrm{cat}\} $ \\
        & & $\Prob: (0.00,   0.00,   0.61, 0.30,  0.03, 0.02, 0.00,   0.03, 0.01, 0.00 )$\\
        \hline 
        \multicolumn{3}{c}{Image 2: Clean version}\\
        \hline
        \multirow{8}{*}{\includegraphics[width=0.22\textwidth]{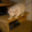}}  & \multirow{4}{*}{Clean $\mathbf{D}$} &  BNN: $\hat{Y}_{U_{1.63}} = \{\mathrm{dog}\}$\\
        && $\Prob: (0.00,   0.01, 0.00,   0.12, 0.00,   0.86, 0.00,   0.00,   0.02, 0.00)$\\
        \cline{3-3}
        & & CNN: $\hat{Y}_{U_{1.63}} = \{\mathrm{dog}\}$\\
        && $\Prob: (0.01, 0.00,   0.00,   0.00,   0.00,   0.97, 0.00,   0.01, 0.00,   0.01)$\\
        \cline{2-3}
        & \multirow{4}{*}{Noisy $\mathbf{D}$} & 
        BNN: $\hat{Y}_{U_{1.63}} = \{\mathrm{dog},\mathrm{cat}\}$\\
        && $\Prob: (0.00,   0.01, 0.00,   0.26, 0.00,   0.69, 0.00,   0.00,   0.00,   0.04)$\\
        \cline{3-3}
        & & CNN: $\hat{Y}_{U_{1.63}} = \{\mathrm{dog}\} $ \\
        & & $\Prob: (0.00,   0.00,   0.00,   0.00,   0.00,   1.00,   0.00,   0.00,   0.00,   0.00)$\\
        \hline 
        \multicolumn{3}{c}{Image 2: Noisy version}\\
        \hline
        \multirow{8}{*}{\includegraphics[width=0.22\textwidth]{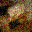}}  & \multirow{4}{*}{Clean $\mathbf{D}$} &  BNN: $\hat{Y}_{U_{1.63}} = \{\mathrm{frog}\}$\\
        && $\Prob: (0.00,   0.00,   0.10,  0.01, 0.13, 0.01, 0.73, 0.00,   0.02, 0.00 )$\\
        \cline{3-3}
        & & CNN: $\hat{Y}_{U_{1.63}} = \{\mathrm{frog}\}$\\
        && $\Prob: (0.00,   0.05, 0.02, 0.10,  0.07, 0.07, 0.66, 0.00,   0.01, 0.02)$\\
        \cline{2-3}
        & \multirow{4}{*}{Noisy $\mathbf{D}$} & 
        BNN: $\hat{Y}_{U_{1.63}} = \{\mathrm{cat},\mathrm{dog}\}$\\
        && $\Prob: (0.00,   0.02, 0.00,   0.34, 0.01, 0.31, 0.01, 0.01, 0.04, 0.26)$\\
        \cline{3-3}
        & & CNN: $\hat{Y}_{U_{1.63}} = \{\mathrm{dog}\} $ \\
        & & $\Prob: (0.00,   0.00,   0.00,   0.00,   0.00,   0.95, 0.00,   0.00,   0.00,   0.04)$\\
        \hline 
        \hline
    \end{tabular}
    \caption{The probabilistic prediction $\Prob(\cY \given \bx )$ and set-valued BOP $\hat{Y}_{U_{1.63}}$ \eqref{eq:set-valued BOP} of a few cat images from the CIFAR-10 data set. The chosen value of $\alpha$ is the median reward level.}
    \label{fig:set-valued BOPs of BNNs vs CNNs}
\end{figure}

Finally, the probabilistic predictions $\Prob(\cY \given \bx )$, and set-valued BOPs $\hat{Y}_{U^{0/1}_{2.05}}$ and $\hat{Y}_{U_{1.63}}$ \eqref{eq:set-valued BOP} for a cat image given in Fig.~\ref{fig:effect of reward-sensitivity on set-valued BOPs of BNNs vs CNNs} illustrate the way the reward-sensitivity can change the rank over the classes, which in turn, may lead to better predictive performance. On the probabilistic prediction $\Prob(\cY \given \bx)$ provided by the BNN, the class cat is ranked third. While set-valued BOPs $\hat{Y}_{U^{0/1}_{2.05}}$ do not include the class cat, the reward-sensitivity helps to change the rank over the classes to include the correct class cat into the set-valued BOP $\hat{Y}_{U_{1.63}}$. Indeed, for our chosen utility matrix, the score vector computed from Algorithm \ref{alg:set-valued BOP U_alpha revised} and the probability masses returned by the BNN are 
$$s=(0.13, 0.00,   0.24, 0.29, 0.18, 0.23, 0.18, 0.47, 0.16, 0.01).$$
One can readily see that the elements with the highest values are horse ($s=0.47$), cat ($s=0.29$), bird ($s=0.24$), etc., with ship arriving only at the $7$th position. On the probabilistic prediction $\Prob(\cY \given \bx)$ provided by the CNN, the class cat is ranked fifth by the score vector 
$$s=(0.48, 0.01, 0.21, 0.13, 0.09, 0.16, 0.08, 0.09, 0.17, 0.01),$$ and unfortunately, the set-valued BOPs cannot cover it, as probability is concentrated on non-animal elements.       

\begin{figure}[http!]
    \centering
    \begin{tabular}{c|l|l}
        \hline
        \multirow{6}{*}{\includegraphics[width=0.22\textwidth]{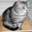}}  & \multirow{3}{*}{BNN} &  $\Prob: (0.13, 0.00,   0.08, 0.15, 0.00,   0.07, 0.00,   0.39, 0.16, 0.01)$\\
        \cline{3-3}
        && $\hat{Y}_{U^{0/1}_{2.05}} = \{\mathrm{horse},\mathrm{ship}\}$\\
        && $\hat{Y}_{U_{1.63}} = \{\mathrm{horse},\mathrm{cat}\}$\\
        \cline{2-3}
        & \multirow{3}{*}{CNN} & $\Prob: (0.48, 0.01, 0.17, 0.05, 0.00,   0.10,  0.00,   0.00,   0.17, 0.01)$\\
        \cline{3-3}
        & & $\hat{Y}_{U^{0/1}_{2.05}} = \{\mathrm{airplane},\mathrm{bird}\}$\\
        & & $\hat{Y}_{U_{1.63}} = \{\mathrm{airplane}\} $ \\
        \hline 
    \end{tabular}
    \caption{The probabilistic prediction $\Prob(\cY \given \bx )$ and set-valued BOPs $\hat{Y}_{U^{0/1}_{2.05}}$ and $\hat{Y}_{U_{1.63}}$ \eqref{eq:set-valued BOP} of a clean cat image from the CIFAR-10 data set. The ensembles are trained using clean training data $\mathbf{D}$. The chosen values of $\alpha$ are the median reward levels.}
    \label{fig:effect of reward-sensitivity on set-valued BOPs of BNNs vs CNNs}
\end{figure} 

\subsubsection{Sensitivity to noisy data}

To seek insights about this behavior of BNNs and CNNs with the dropout prediction option when encountering mismatches between training and test data types, we report calibration errors, which are computed on the representative probability distributions provided by the two types of compact ensembles. The results given in Appendix \ref{sec:Additional experimental results} suggest that BNNs tend to have more promising calibration errors. This, in turn, suggests that the representative probability distributions provided by BNNs may be more reliable and thus facilitate the performance of making both singleton and set-valued BOPs when encountering the mismatches between the training and test data types. 

\subsection{Practical recommendations}

We shall provide a few practical recommendations based on the results achieved so far.

The theoretical results on the desirable properties and computational efficiency, together with the consistent ability to gain the trade-off between recall and precision, might indeed motivate the use of our proposal for making set-valued BOPs wherever this trade-off is beneficial. Moreover, we encourage the end-user to participate in the design of the reward-sensitive utility-discounted accuracies \eqref{eq:Reward-sensitive-U-alpha} by estimating the base reward-sensitive accuracies and selecting the reward level $\alpha$. We further suggest that the end-user analyze the trade-off (e.g., on a validation set) and select the reward level $\alpha$ at their convenience.

If noisy test data can be encountered frequently enough, we suggest that the end-user train multiple (specialist) models to handle clean test data and noisy test data separately. Switching between these models might be monitored using additional information. For example, one might try to allow the UAV, which detects Plant Leaf Diseases in farms, to switch to (specialist) models to handle noisy test data when encountering severe weather. If training and monitoring a single predictive model is more desirable, the end-user might think of training the model on both clean and noisy data, which can be either collected or generated using data augmentation techniques. 

Finally, we suggest that the end-user can select a well-calibrated ensemble, which produces reliable representative distributions \eqref{eq:representative distribution}, when deploying our framework.   

\section{Conclusion}\label{sec:Conclusion}

We propose a compact cautious deep ensembling framework to tackle visible challenges in deep ensemble learning, where deep neural networks serve as ensemble members: training and storage burdens, and robustness of cautious (set-valued) predictions targeting multiple utilities, which may involve reward-sensitivity. We present both the generation and aggregation phases of the proposed framework, with a focus on the latter phase, where we propose a family of reward-sensitive utility-discounted accuracies with various desirable properties and efficient computations of set-valued BOPs. To our knowledge, we are the first who satisfactorily achieve both aspects in this setting. Along the way, we also provide a few new results for the simpler case of the conventional utility-discounted accuracies.   

We conduct various experiments on three image data sets to illustrate potential applications of the proposed framework. First, the empirical evidence suggests that our framework can provide a promising way to balance recall and prediction of set-valued predictions when reward-sensitivity is important. Moreover, it provides the end-user with an intuitive way to customize the reward-sensitive discounted accuracies at their convenience. Second, the empirical evidence suggests that different implementations of the proposed framework may be advantageous in different aspects. Notably, while defining the representative distribution using the squared Euclidean distance tends to be in favor of recall, doing so with the $L_1$ distance and the KL divergence tends to be in favor of precision. This also suggests that improving the proposed framework by exploring the potential (dis)advantages of other distances might be worthwhile.  

Although investigating the robustness of BNNs and CNNs with the Monte-Carlo dropout prediction under distribution shifts is not a primary goal of this paper, the empirical evidence suggests that BNNs tend to be (significantly) more robust. However, distribution shifts often significantly downgrade the predictive performance. We therefore suggest an extensive study on the use of multiple models, one per group of distribution shift scenarios, as another potential follow-up work. Such a work may greatly benefit high-stakes applications such as autonomous vehicles and UAVs.  

\backmatter

\bmhead{Acknowledgements}
This work was partially funded/supported by the Junior Professor Chair in Trustworthy AI (Ref.ANR-R311CHD). This work was also sponsored by the Office of Naval Research (ONR) and the Office of Naval Research Global (ONRG) under grant number N62909-23-1-2058.

\newpage

\section*{Declarations}

\subsection*{Author contributions statement}

K.D.T contributed to Methodology, Software, Validation, Visualization, and Writing – original draft. D.M.N contributed to Methodology, Software, Validation, Visualization, and Writing – original draft. V.L.N contributed to Conceptualization, Methodology, Software, Validation, Visualization, Supervision, Funding acquisition, and Writing – original draft. X.T.H contributed to Software, Validation, Visualization, and Writing – review \& editing. S.D. contributed to Conceptualization, Methodology, Supervision, and Writing – review \& editing. V.N.H contributed to Conceptualization, Methodology, Supervision, and Writing – review \& editing.

\subsection*{Data availability}

The 3 data sets used in the empirical study are all either publicly available at \href{https://data.mendeley.com/datasets/tywbtsjrjv/1}{https://data.mendeley.com/datasets/tywbtsjrjv/1} or can be downloaded by using \href{https://docs.pytorch.org/docs/2.13/data.html}{https://docs.pytorch.org/docs/2.13/data.html} . The source code has been made available at \href{https://github.com/kdimtensor/Compact-cautious-deep-ensembling-in-multi-class-classification}{https://github.com/kdimtensor/Compact-cautious-deep-ensembling-in-multi-class-classification }. 

\subsection*{Research funding}

Kim-Dung Tran received funding from Agence Nationale de la Recherche; Grant ID ANR-R311CHD. Dang-Man Nguyen received funding from Office of Naval Research Global; Grant ID N62909-23-1-2058. Vu-Linh Nguyen received funding from Agence Nationale de la Recherche; Grant ID ANR-R311CHD. Xuan-Truong Hoang received funding from Office of Naval Research Global; Grant ID N62909-23-1-2058. Van-Nam Huynh received funding from Office of Naval Research Global; Grant ID N62909-23-1-2058.

\subsection*{Conflict of interest}

We declare that we have no conflict of interest.

\subsection*{Ethics approval}

We declare that this research did not require Ethics approval.

\subsection*{Consent for publication}

All the authors of this manuscript consent to its publication.

\begin{appendices}

\section{Notation and acronyms}\label{app:Notation and acronyms}

\newcolumntype{R}{>{\raggedright\arraybackslash}p{0.65\textwidth}}

\begin{longtable}{l|Rp{\textwidth}}
\toprule
Symbol/Acronym & Meaning \\
\hline
$\cX$, $\bx$ & Instance space, instance \\
$\cY$, $Y$, $\hat{y}, \hat{y}_u$ & Output space, set prediction, BOP, BOP with utility $u$ \\
$\hat{Y}_{U^{0/1}_{\alpha}}$ & Set-valued BOP produced by utility-discounted accuracy function $U^{0/1}_{\alpha}$ \\
$X^p$, $Y^k$ & Feature, class variable \\
$\cY^k$ & Output space of $Y^k$ \\
$\cY^k_R \defi \cY^k \cup \{\cY^k\}$ & Set of predictions with rejection\\
$\cY^k_C \defi 2^{\cY^k} \setminus \{\emptyset\}$ & Set of predictions with partial cautiousness \\
$\cY_R \defi \cY^1_R \times \ldots \times \cY^K_R $ & Set of predictions with partial abstention \\
$\cY_C \defi \cY^1_C \times \ldots \times \cY^K_C $ & Set of predictions with partial cautiousness \\
$\cD = \{ (\bx_n,y_n)\}, \mathbf{T} = \{ (\bx_t,y_t)\}$  & Training (test) dataset of $n$ ($t$) elements \\
$\Delta^K, \Delta^{|\cY|}$ & Probability simplex of $K$ (cardinality of $\cY$) classes \\
$d, d_{\text{sE}}, d_{\text{L1}}, d_{\text{KL}}$ & Statistical distances using squared Euclidean, L1 norm, Kullback–Leibler divergence \\
\hline
$\cY_{set}, \cY_{per}, \cY_{sin}$ & Power set of $\cY$ excluding $\emptyset$, set of permissible predictions, set of singleton predictions \\
\hline
$u(\bar{y},y)$ & Utility function with predicted $\bar{y}$ and true $y$ \\
$u(y^i,y^j)$ & Utility function with predicted $y^i$ and true $y^j$ \\
$u^{0/1}(\cdot)$ & Utility function used for $0/1$ accuracy \\
$g(\cdot), g(|\bar{Y}|)$ & Reward function, reward function with predicted $\mathbf{Z}$ \\
$\alpha, \alpha^{\prime}, \underline{\alpha}, \overline{\alpha}$ & Hyperparameters controlling level of rewards \\
$U, U^{0/1}, U_{\alpha}, U_{\alpha}^{0/1}$ & Utility-discounted accuracy for set predictions \\
$U(\mathbf{Z}, \vec{y}),U^A(\mathbf{Z}, \vec{y})$ & Utility-discounted function with predicted $\mathbf{Z}$ and true $\vec{y}$, average utility-discounted function with predicted $\mathbf{Z}$ and true $\vec{y}$ \\
$U^{0/1}_{\alpha}$ & Utility-discounted function calculated with $\alpha$ for $0/1$ accuracy \\
$f_{U}^{k}, f_{U}^{K}, f_{U}^{k^*}$ & Local expected score with $U$ at $k$-th and $K$-th classes, local maximum expected score at $k^*$ class  \\
$\text{ECE}_{\text{classwise}}, \text{ECE}_{\text{confidence}}, \text{ECE}_{k}$ & Classwise and confidence estimated calibration errors, estimated calibration error of class $k$-th \\
$\text{MCE}_{\text{classwise}}, \text{MCE}_{\text{confidence}}$ & Classwise and confidence maximum calibrated errors \\
\hline
$K$, $P$ & Number of class variables, number of features\\
$\llbracket \cdot \rrbracket$, $[n], {K\choose k} $ & Indicator function, set $\{ 1,\ldots, n\}$ of natural numbers, $k$-combinations of $K$ elements \\
\hline
$\cY^{\cX}$ & Set of all possible hypothesis spaces \\
$\mathcal{H}, \mathbf{H}, \bh^j, \bh_d$ & Hypothesis space, ensemble of classifiers, classifier $j$-th in $\mathbf{H}$, representative classifier calculated by $d$ \\
clf\_sE, clf\_L1, clf\_KL & Representative classifiers obtained using $d_{\text{sE}}, d_{\text{L1}}, d_{\text{KL}}$ \\
$\mathbf{H}(\bx)$ & A set of outputs of the ensemble $\mathbf{H}$ \\
$p^j_k$ & Predicted probability for class $k$ produced by the $j$-th ensemble member \\
$\Prob(\cY \given \cX)$ & Joined probability distribution of output and input spaces \\
$\Prob(\mathbf{X}, Y)$ & Joined probability distribution of feature and class variables \\
$\Prob(\cY \given \bx)$ & Conditional probability distribution over $\cY$ given $\bx$ \\
$\Prob(Y \given \bx)$ & Probability of outcome $Y$ given $\bx$ \\
$\Prob(y^k \given \bx)$ & Marginal probability of relevance for outcome $Y^k = y^k $ given $\bx$\\ 
\hline
$\mathbb{R}_{+}, O(\cdot)$ & Positive real numbers, computational complexity \\
$\mathbf{R}_{i,j}$, $r_{i,j}$ & Reward-sensitivity matrix, reward element \\
$\mathbf{R}_{\bar{Y},j}$, $r_{\bar{Y},j}$ & Reward-sensitivity matrix, reward element in set-valued prediction setting \\
$\mathbf{E}_{\Prob(\cY \given \bx)}\big[u(\bar{y},y) \big]$ & Expected value of utility $u$ and distribution $\Prob$ \\
ual, rec, pro, siz, sin, set & Average (non-reward-)reward-sensitivity sensitivity utility-discounted accuracies, average recall, proportion of non-singleton predictions, average set size, proportion of correct singleton predictions, proportion of correct set-valued predictions \\
\hline
MCC & Multi-class classification \\
BOP & Bayes-optimal prediction \\
DNN & Deep neural network \\
CNN & Convolutional neural network \\
BNN & Bayesian neural network \\
UAV & Unmanned aerial vehicle \\
\hline
\caption{Notation and acronyms}\label{tab:Notation_and_acronyms}
\end{longtable} 

\section{Proofs for the formal results}

\subsection{Proof for Proposition \ref{pro:set-valued BOP U_alpha}}

By definition, finding a set-valued BOP $\hat{Y}_{U_\alpha}$ \eqref{eq:set-valued BOP} of $U_{\alpha}$ becomes 
\begin{align*}
\hat{Y}_{U_\alpha}  &\in  \operatorname*{argmax}_{\bar{Y} \in \cY_{\mathrm{set}}}\mathbf{E}_{\bp(\cY \given x)} \left[  U_{\alpha}(\bar{Y},y) \right]\\
&= \operatorname*{argmax}_{\bar{Y} \in \cY_{\mathrm{set}}}\sum_{y \in \cY} U_{\alpha}(\bar{Y},y) \, \Prob(y \given \bx) \\
&= \operatorname*{argmax}_{\bar{Y} \in \cY_{\mathrm{set}}}\sum_{y \in \cY} U_{\alpha}(\bar{Y},y)\, \Prob(y \given \bx)\\
& = \operatorname*{argmax}_{\bar{Y} \in \cY_{\mathrm{set}}}g_{\alpha}(|\bar{Y}|) \sum_{y \in \cY} \left[\sum_{\bar{y} \in \bar{Y}} u(\bar{y},y) \right]\, \Prob(y \given \bx) \\
& = \operatorname*{argmax}_{\bar{Y} \in \cY_{\mathrm{set}}}g(|\bar{Y}|) \sum_{y \in \cY} \left[\sum_{\bar{y} \in \bar{Y}} u(\bar{y},y) \, \Prob(y \given \bx) \right]\\
& = \operatorname*{argmax}_{\bar{Y} \in \cY_{\mathrm{set}}}g(|\bar{Y}|) \sum_{\bar{y} \in \bar{Y}} \left[\sum_{y \in \cY} u(\bar{y},y)  \, \Prob(y \given \bx) \right]\\
& = \operatorname*{argmax}_{\bar{Y} \in \cY_{\mathrm{set}}}g(|\bar{Y}|) \sum_{\bar{y} \in \bar{Y}} \mathbf{E}_{\bp(\cY \given x)}\left[u(\bar{y},y) \right]\,.
\end{align*}

Clearly, for each possible cardinality $k \in [K]$, the local optimal set-valued prediction of cardinality $k$ should contain the $k$ classes with the highest expected scores $\mathbf{E}_{\bp(\cY \given x)}\left[u(\bar{y},y) \right]$. Therefore, finding a set-valued BOP $\hat{Y}_{U_\alpha}$ \eqref{eq:set-valued BOP} of $U_{\alpha}$ can be done by computing $\mathbf{E}_{\bp(\cY \given x)}\left[ u(\bar{y},y)\right]$ for all the $\bar{y} \in \cY$ (in $O(K^2)$), sorting the classes in the decreasing order of the scores $\mathbf{E}_{\bp(\cY \given x)}\left[u(\bar{y},y)\right]$ (in $O(K \log(K))$), and incrementally including the top-ranked classes until the expected score $\mathbf{E}_{\bp(\cY \given x)} \left[ U(\bar{\by},y) \right]$ stops increasing (in $O(K)$). 

Altogether, finding a set-valued BOP $\hat{Y}_{U_\alpha}$ \eqref{eq:set-valued BOP} of $U_{\alpha}$ can be done in $O(K^2)$ using Algorithm \ref{alg:set-valued BOP U_alpha}.

\subsection{Proof for Proposition \ref{propos:desirable_properties_for_u_alpha}}

A proof for proposition \ref{propos:desirable_properties_for_u_alpha}  consists of the proofs for the following Lemmas.

\begin{lemma}[Normalization]
Let $u$ be any reward-sensitive accuracy $u$, where, for any true class $y \in \cY$, $1= u(y,y) > u(\bar{y},y) \geq 0$, $\bar{y} \neq y \in \cY$. Let $\alpha \in [-1/(|\cY|-1), \overline{\alpha}_u]$, where $\overline{\alpha}_u$ can be found in $O(K^3)$ using Algorithm \ref{alg:find max alpha}. The reward-sensitive utility-discounted accuracy $U_{\alpha}$ \eqref{eq:Reward-sensitive-U-alpha} satisfies the normalization property.
\end{lemma}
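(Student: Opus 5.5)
We need $0 \le U_\alpha(\bar Y,y) \le 1$ for every $\bar Y$ and $y$. Write $k=|\bar Y|$ and $S(\bar Y,y)=\sum_{\bar y\in\bar Y}u(\bar y,y)$, so that
\[
U_\alpha(\bar Y,y)=g_\alpha(k)\,S(\bar Y,y), \qquad g_\alpha(k)=\frac{\alpha k+1-\alpha}{k^2}=\frac{(k-1)\alpha+1}{k^2}.
\]
For each $k$, $g_\alpha(k)$ is affine and increasing in $\alpha$. Since $k\ge 1$, it is nonnegative exactly when $\alpha\ge -1/(k-1)$, and this holds for all $k\le K$ precisely when $\alpha\ge -1/(K-1)$. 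This is where the lower endpoint $-1/(|\cY|-1)$ comes from. Since $S\ge 0$ because $u\ge 0$, nonnegativity of $U_\alpha$ follows.

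For the upper bound, the case $k=1$ is immediate: $g_\alpha(1)=1$ and $S=u(\bar y,y)\le 1$. For $k\ge 2$ we need $g_\alpha(k)S(\bar Y,y)\le 1$ for all $|\bar Y|=k$ and all $y$. Since $g_\alpha(k)\ge 0$, it suffices to control the worst case
\[
M_k:=\max_{y\in\cY}\ \max_{|\bar Y|=k} S(\bar Y,y)=\max_{y}\sum_{i=1}^k u(y^{(i),y},y),
\]
where $y^{(i),y}$ is the ordering computed in the first loop of Algorithm~\ref{alg:find max alpha}. The inner maximum is attained by taking the $k$ largest rewards for the true class $y$.

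Note that $M_k\ge u(y,y)=1>0$. The condition $g_\alpha(k)M_k\le 1$ rearranges to $((k-1)\alpha+1)\le k^2/M_k$, which is equivalent to
\[
\alpha\le \frac{k^2/M_k-1}{k-1}.
\]
Now $s_k$ in the algorithm equals $\min_y 1/\sum_{i\le k}u(y^{(i),y},y)=1/M_k$, so the right-hand side is exactly $\alpha_k$. The algorithm returns $\overline\alpha_u=\min_{2\le k\le K}\alpha_k$. Hence every $\alpha\le\overline\alpha_u$ satisfies all the constraints simultaneously and normalization holds.

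One should also check that the interval is nonempty, that is, $\overline\alpha_u\ge -1/(K-1)$. Since $M_k\le k$, we get $\alpha_k\ge (k-1)/(k-1)=1$, so $\overline\alpha_u\ge 1$. This is also consistent with the range used in Proposition~\ref{propos:desirable_properties_for_u_alpha}.

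The complexity claim follows by counting. Sorting takes $O(K\log K)$ per class, so $O(K^2\log K)$ in total. The double loop over $k$ and $y$ with an $O(K)$ inner sum gives $O(K^3)$.

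The main obstacle, though a mild one, is justifying that the worst case over $\bar Y$ is the top-$k$ set for each $y$, and that multiplying by $g_\alpha(k)$ preserves the inequality. The latter needs $g_\alpha(k)\ge0$, which is why the nonnegativity step must come first.
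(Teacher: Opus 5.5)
Your proof is correct and follows essentially the same route as the paper: you get nonnegativity from $(k-1)\alpha+1\ge 0$, with the binding case at $k=K$, and then rearrange the per-cardinality bound $g_\alpha(k)\le s_k = 1/M_k$ into $\alpha\le\alpha_k$ and take the minimum over $k$, followed by a nonemptiness check. The only difference is that you obtain $\alpha_k\ge 1$ from $M_k\le k$, whereas the paper uses $u(\bar y,y)<1$ for $\bar y\neq y$ to get $M_k<k$ for $k\ge 2$ and hence $\overline{\alpha}_u>1$. Your non-strict bound suffices for this lemma, but the strict one is what makes the range $(1,\overline{\alpha}_u)$ in Proposition~\ref{propos:desirable_properties_for_u_alpha} nonempty.
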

\begin{proof}
At first, we look at when $U_\alpha(\bar{Y},y) \geq 0$, i.e., $\forall (\bar{Y},y) \in \mathcal{Y}_{\mathrm{set}} \times \mathcal{Y}$,
\begin{equation}\label{eq:at_least_0}
    U_\alpha(\bar{Y},y) = g_\alpha(|\bar{Y}|) \sum_{\bar{y}\in \bar{Y}} u(\bar{y},y) = \left(\alpha + \frac{1 - \alpha}{|\bar{Y}|}\right) \frac{1}{|\bar{Y}|} \sum_{ \bar{y} \in \bar{Y}} u(\bar{y}, y) \geq 0 \,.
\end{equation}
Since for any possible cardinality $k \in [K]$, we have 
\begin{equation}
    \sum_{\bar{y}\in \bar{Y}} u(\bar{y},y) \geq 0\,,
\end{equation}
to ensure the relation \eqref{eq:at_least_0}, we should have, $\forall \bar{Y} \in \mathcal{Y}_{\mathrm{set}}$,
\begin{align*}
   \left(\alpha + \frac{1 - \alpha}{|\bar{Y}|}\right) \frac{1}{|\bar{Y}|} \geq 0  & \Leftrightarrow \frac{\alpha}{|\bar{Y}|} + \frac{1-\alpha}{|\bar{Y}|^2} \geq 0 \\
    & \Leftrightarrow \alpha |\bar{Y}| + 1 - \alpha \geq 0 \\
    & \Leftrightarrow \alpha (|\bar{Y}| - 1) +1 \geq 0 \,.
\end{align*}
Since for any singleton prediction $|\bar{Y}|$, we have $\alpha (|\bar{Y}| - 1) +1  = 1$ for any value of $\alpha$, we only need to ensure the relation $\alpha (|\bar{Y}| - 1) +1 \geq 0$ for the non-singleton predictions. For any non-singleton prediction $|\bar{Y}|$, we have 
\begin{equation}
    \alpha (|\bar{Y}| - 1) +1 \geq 0  \Leftrightarrow \alpha \geq \frac{-1}{|\bar{Y}| - 1} \,,
\end{equation}
which is an increasing function of $|\bar{Y}|$, the relation \eqref{eq:at_least_0} is ensured by any $\alpha \geq \nicefrac{-1}{(|\cY| -1)}$. 

We now look at when $U_{\alpha}(\bar{Y},y) \leq 1$, i.e., $\forall (\bar{Y},y) \in \mathcal{Y}_{\mathrm{set}} \times \mathcal{Y}$,
\begin{equation}\label{eq:at_most_1}
    U_\alpha(\bar{Y},y) =  \left(\alpha + \frac{1 - \alpha}{|\bar{Y}|}\right) \frac{1}{|\bar{Y}|} \sum_{ \bar{y} \in \bar{Y}} u(\bar{y}, y) \leq 1 \,.
\end{equation}

For any $\bar{Y}$ with $U_\alpha(\bar{Y},y) = 0$, it is clear that the relation \eqref{eq:at_most_1} is satisfied by any value of $\alpha$. Therefore, we only need to care about $\bar{Y}$ with positive reward $U_\alpha(\bar{Y},y) > 0$. For any $\bar{Y}$ with positive reward, we have 
\begin{align}
    U_\alpha(\bar{Y},y) := g_\alpha(|\bar{Y}|) \sum_{\bar{y}\in \bar{Y} } u(\bar{y},y) \leq 1 \Leftrightarrow  g_\alpha(|\bar{Y}|) \leq \frac{1}{\sum_{\bar{y}\in \bar{Y}} u(\bar{y},y)}\,.
\end{align}

Since for any possible cardinality $k$, $g_\alpha(|\bar{Y}|)$ is the same for any $\bar{Y}$ with $|\bar{Y}| = k$, we can search for the feasible range of $\alpha$, whose values satisfy the relation \eqref{eq:at_most_1}, by first determining the $K$ local feasible ranges, one per possible cardinality, and then returning their intersection as the final feasible range of $\alpha$. 

For each possible cardinality $k \in [K]$, since $\nicefrac{1}{\left(\sum_{\bar{y}\in \bar{Y}} u(\bar{y},y)\right)}$ is a decreasing function in $\sum_{\bar{y}\in \bar{Y}} u(\bar{Y},y)$, the relation \eqref{eq:at_most_1} is satisfied (locally) when 
\begin{align}
    g_\alpha(k) \leq & s(k) \defi  \min_{y \in \cY}\left(\min_{\substack{\bar{Y} \in \cY_{\mathrm{set}} \\ |\bar{Y}| = k}}\frac{1}{\sum_{\bar{y} \in \bar{Y}} u(\bar{y},y)}\right) = \min_{y \in \cY}\left(\frac{1}{\sum_{i = 1}^k u(\bar{y}^{(i),y},y)}\right) \,, 
\end{align}
where, for any $y\in \cY$, $\bar{y}^{(i),y}$ the $i$-th class after sorting the classes $\bar{y} \in \cY$ in the decreasing order of $u(\bar{y},y)$. This is equivalent to 
\begin{align}
    g_\alpha(k) \leq s(k) & \Leftrightarrow \frac{\alpha}{k} + \frac{1-\alpha}{k^2} \leq s(k) \\
    & \Leftrightarrow \alpha k + 1-\alpha \leq s(k) k^2 \\
    & \Leftrightarrow \alpha (k-1) + 1 \leq s(k) k^2 \\
    & \Leftrightarrow \alpha (k-1)  \leq s(k) k^2 - 1 \\
    & \Leftrightarrow \alpha  \leq \frac{s(k) k^2 - 1}{k-1} := \alpha_k\,.
\end{align}

Once the values $\alpha_k$, $k\in [K]$, are found, the range $[-1/(|\cY|-1), \overline{\alpha}_u]$, where for any $\alpha \in [-1/(|\cY|-1), \overline{\alpha}_u]$, $U_\alpha$ satisfies the normalization property, can be found by setting $\overline{\alpha}_u$ to be the smallest $\alpha_k$, $k\in [K]$.

Altogether, the range $[1, \overline{\alpha}_u]$, can be found in $O(K^3)$ using Algorithm \ref{alg:find max alpha}.

Finally, we show that the feasible range $[-1/(|\cY|-1), \overline{\alpha}_u]$ is always non-empty. For any $k \in [K]$, by definition $s(k) > \nicefrac{1}{k}$. Therefore, we have 
\begin{align}
    \frac{s(k) k^2 - 1}{k-1} > \frac{\frac{1}{k} k^2 - 1}{k-1} \geq \frac{k - 1}{k-1}  \geq 1 > \frac{-1}{|\cY|-1} \,.
\end{align}
This ensures that $\overline{\alpha}_u > 1$.
\end{proof}

\begin{lemma}[Universal feasibility]
 Let $u$ be any reward-sensitive accuracy $u$, where, for any true class $y \in \cY$, $1= u(y,y) > u(\bar{y},y) \geq 0$, $\bar{y} \neq y \in \cY$. Let $\alpha \in (1, \overline{\alpha}_u)$. The reward-sensitive utility-discounted accuracy $U_{\alpha}$ \eqref{eq:Reward-sensitive-U-alpha} satisfies the universal feasibility property.   
\end{lemma}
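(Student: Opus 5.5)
We must exhibit a set $\cY_{\mathrm{per}}$ with $\cY_{\mathrm{sin}} \subsetneq \cY_{\mathrm{per}}$ such that every member is a BOP for some distribution. It suffices to take $\cY_{\mathrm{per}} = \cY_{\mathrm{sin}} \cup \{\bar{Y}_2\}$ for one doubleton $\bar{Y}_2=\{y^i,y^j\}$, though the plan also covers the richer choice of all doubletons. We work throughout with the reformulation from the proof of Proposition~\ref{pro:set-valued BOP U_alpha}:
\[
\mathbf{E}_{\bp(\cY \given x)}\left[U_\alpha(\bar{Y},y)\right] = g_\alpha(|\bar{Y}|)\sum_{\bar{y}\in\bar{Y}} s(\bar{y}), \qquad s(\bar{y}) := \mathbf{E}_{\bp(\cY \given x)}\left[u(\bar{y},y)\right].
\]

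\textbf{Singletons.} For $\{y^i\}$, take the Dirac mass at $y^i$. Then $s(y^i)=1$ and $s(\bar{y})=u(\bar{y},y^i)<1$ for $\bar{y}\neq y^i$. Since $\alpha>1$ gives $k\, g_\alpha(k)=\alpha+(1-\alpha)/k<\alpha$, we must check that no larger set beats the singleton. A set of size $k$ containing $y^i$ scores $g_\alpha(k)\bigl(1+\sum_{\text{others}} u(\cdot,y^i)\bigr)$. This score can exceed $1$ if $u$ is large off the diagonal and $\alpha$ is near $\overline{\alpha}_u$, which is a genuine issue. However, $\overline{\alpha}_u$ is exactly the threshold that keeps $g_\alpha(k)\sum_{i\le k} u(\bar{y}^{(i),y},y)\le 1$ (normalization lemma), and $\alpha<\overline{\alpha}_u$ makes this strict. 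Hence every set of size $k\ge2$ scores strictly below $1=g_\alpha(1)s(y^i)$, and $\{y^i\}$ is the unique BOP. This step uses the open upper bound in $(1,\overline{\alpha}_u)$ directly.

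\textbf{A doubleton.} Take $\bp$ uniform on $\{y^i,y^j\}$, so $s(y^i)=\tfrac12(1+u(y^i,y^j))$, and symmetrically for $y^j$. Every other $\bar{y}$ has $s(\bar{y})=\tfrac12(u(\bar{y},y^i)+u(\bar{y},y^j))$, which is strictly smaller than both $s(y^i)$ and $s(y^j)$ when $u$ is symmetric-like. In general we instead choose $\{y^i,y^j\}$ to be the pair maximizing $s(y^i)+s(y^j)$, or we perturb the weights so that $y^i,y^j$ are the top two scorers; by Proposition~\ref{pro:set-valued BOP U_alpha} the best size-$k$ set is then the top-$k$ set. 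The doubleton beats the best singleton iff
\[
g_\alpha(2)\bigl(s_{(1)}+s_{(2)}\bigr) > s_{(1)}, \qquad g_\alpha(2)=\frac{1+\alpha}{4},
\]
which with $s_{(1)}=s_{(2)}=a$ reduces to $(1+\alpha)/2>1$, i.e.\ $\alpha>1$. This is where the open lower bound is needed.

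\textbf{Main obstacle.} The remaining task is to rule out sizes $k\ge3$ and to make sure the top-two scorers really are $y^i,y^j$. Since the other scores satisfy $s_{(m)}\le a$ for $m\ge3$, and $\sum_{m\le k}s_{(m)}$ is bounded via the normalization constraint $g_\alpha(k)\sum_{i\le k}u\le 1$ averaged over $y^i,y^j$, we get $f^k\le \tfrac12\bigl(\text{bounded terms}\bigr)\le 1/2$. It is unclear this stays strictly below $g_\alpha(2)\cdot 2a$. The first attempt would be to pick the pair $y^i,y^j$ minimizing cross-rewards, or to accept ties: ties still suffice, because the definition only requires $\hat{Y}\in\operatorname{argmax}$. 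If this bound fails, the fallback is to take $\cY_{\mathrm{per}}$ to contain just one well-chosen doubleton, namely the pair attaining the maximum in Algorithm~\ref{alg:find max alpha} at $k=2$.
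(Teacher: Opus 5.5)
Your singleton step is fine, and it is more explicit than the paper about where normalization and the strict bound $\alpha<\overline{\alpha}_u$ are used. The non-singleton step, however, has a genuine gap, which you partly flag yourself. You commit to showing that a \emph{prescribed} doubleton is an exact maximizer, but you establish neither of the two things this requires.

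First, the pair must be the top two scorers. With uniform weights on $\{y^i,y^j\}$ the two scores coincide only if $u(y^i,y^j)=u(y^j,y^i)$, and a third class may outscore both. ``Perturb the weights'' is not an argument. ``Choose the pair maximizing $s(y^i)+s(y^j)$'' is circular, since $s$ depends on the distribution you are constructing.

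Second, sizes $k\ge 3$ are never excluded. Normalization only gives $U_\alpha(\bar Y,y)\le 1$, hence $f^k_{U_\alpha}\le 1$, not the $1/2$ you write. That bound is too weak to compare with $f^2_{U_\alpha}=\frac{1+\alpha}{2}a$. Your fallback names the right pair: the one singled out at $k=2$ in Algorithm~\ref{alg:find max alpha} is the ordered pair with the largest off-diagonal reward. But it still supplies no distribution and no comparison.

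The missing idea is that universal feasibility never requires a particular doubleton to be optimal. It suffices to exhibit one distribution under which \emph{every singleton is strictly suboptimal}. The argmax over the finite set $\cY_{\mathrm{set}}$ is then some non-singleton $Y$, and $\cY_{\mathrm{per}}=\cY_{\mathrm{sin}}\cup\{Y\}$ works, so the $k\ge 3$ question disappears. This is the paper's route:
\begin{itemize}
\item Take $(y^1,y^2)$ with $u(y^1,y^2)\ge u(y^i,y^j)$ for all $i\neq j$.
\item Put mass $p=\frac{1-u(y^1,y^2)}{2-u(y^1,y^2)-u(y^2,y^1)}\in(0,1)$ on $y^1$ and $1-p$ on $y^2$. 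This equalizes $s(y^1)=s(y^2)=:a>0$.
\item For $k\notin\{1,2\}$, $s(y^1)-s(y^k)=p\,(1-u(y^k,y^1))+(1-p)\,(u(y^1,y^2)-u(y^k,y^2))>0$ by maximality of the pair.
\item $\{y^1,y^2\}$ scores $\frac{1+\alpha}{2}a>a$ since $\alpha>1$, so neither $\{y^1\}$ nor $\{y^2\}$ is a BOP either.
\end{itemize}

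If you do want an exact doubleton, you can recover it from this construction. Move along the segment from the Dirac mass at $y^1$, where $f^1_{U_\alpha}=1>f^k_{U_\alpha}$ for $k\ge 2$, to this distribution, where $f^2_{U_\alpha}>f^1_{U_\alpha}$. Continuity gives a point with $f^1_{U_\alpha}=\max_{k\ge 2}f^k_{U_\alpha}$. Unimodality (Lemma~\ref{lem:Unimodality}) then forces $f^2_{U_\alpha}=f^1_{U_\alpha}$ there, so the top-two set is a BOP at that point.
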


\begin{proof}
As elaborated in \citep{yang2016costs} Property \ref{proper:Universal feasibility} (universal feasibility), a reward-sensitivity utility-discounted accuracy $U_\alpha(\bar{Y},y)$ allows indeterminate predictions if, $\forall (\bar{Y},y) \in \mathcal{Y}_{\mathrm{set}} \times \mathcal{Y}$ with $|\bar{Y}| \geq 2$ and $\sum_{\bar{y}\in \bar{Y}} u(\bar{y},y) > 0$, we have 
\begin{equation}
    U_\alpha(\bar{Y},y) := g(|\bar{Y}|) \sum_{\bar{y} \in \bar{Y}}u(\bar{y},y) > U^{\mathrm{ave}}_\alpha(\bar{Y},y) := \frac{1}{|\bar{Y}|} \sum_{\bar{y} \in \bar{Y}}u(\bar{y},y) \,,
\end{equation}
where $U^{\mathrm{ave}}_\alpha(\bar{Y},y)$ is the average of the reward-sensitive accuracy $u$. Since for possible cardinality $k \geq 2$, there is at least one $\bar{Y}$ with $|\bar{Y}| = k$ and $\sum_{\bar{y}\in \bar{Y}}u(\bar{y},y) > 0$, therefore,
\begin{align*}
U_\alpha(\bar{Y},y) > U^{\mathrm{ave}}_\alpha(\bar{Y},y) &\Leftrightarrow g(|\bar{Y}|)\sum_{\bar{y} \in \bar{Y}}u(\bar{y},y) > \frac{1}{|\bar{Y}|} \sum_{\bar{y} \in \bar{Y}}u(\bar{y},y) \\
& \Leftrightarrow \frac{\alpha}{|\bar{Y}|} + \frac{1-\alpha}{|\bar{Y}|^2} > \frac{1}{|\bar{Y}|} \\
& \Leftrightarrow \alpha |\bar{Y}| + 1 - \alpha > |\bar{Y}| \\
& \Leftrightarrow \alpha (|\bar{Y}|-1) > |\bar{Y}| - 1 \\
& \Leftrightarrow \alpha > 1 \,.
\end{align*}

We now show that for any chosen $\alpha \in (1, \overline{\alpha}_u)$, for any possible prediction $\bar{Y} \in \cY_{\mathrm{set}}$, there is at least one distribution $\boldsymbol{p}(\mathcal{Y}\mid \bx) \in \Delta^{|\mathcal{Y}|}$ such that 
\begin{equation}
\hat{Y}_{U_{\alpha}} \in \operatorname*{argmax}_{\bar{Y}\in \mathcal{Y}_{\mathrm{set}}} \mathbf{E}_{\Prob(\cY \given \bx)}\left[U_\alpha(\bar{Y},y) \right] = \operatorname*{argmax}_{\bar{Y}\in \mathcal{Y}_{\mathrm{set}}} g_\alpha(|\bar{Y}|) \sum_{y \in \cY} \left[ \sum_{\bar{y} \in \bar{Y}} u(\bar{y},y) \right] \Prob(y \given \bx) \,.   
\end{equation}
For any singleton prediction $\bar{Y} = \{y\}$, we have $g_\alpha(|\bar{Y}|) = 1$, Therefore, we can simply pick up the degenerate distribution $\boldsymbol{p}(\mathcal{Y}\mid \bx)$ with $\boldsymbol{p}(y\mid \bx) = 1$ to have $\mathbf{E}_{\boldsymbol{p}(y\mid \bx)}\left[U_\alpha(\{y\},y) \right] = 1$, which in turn ensures that $\{y\}$ is a set-valued BOP on the distribution $\boldsymbol{p}(\mathcal{Y}\mid \bx)$. 

We now show that there is at least one distribution $\boldsymbol{p}(\mathcal{Y}\mid \bx) \in \Delta^K$, where a BOP of $U_{\alpha}$ should be non-singleton. Without the loss of generality, let us assume that $\bar{Y} = \{y^1, y^2\}$ is the pair of classes with the highest reward for making a wrong prediction, i.e., $u(y^1,y^2) \geq  u(y^i,y^j)$, for any $(1,2) \neq (i,j)$ such that $i\neq j$. Let $\boldsymbol{p}(\mathcal{Y}\mid \bx)$ be some distribution, where $\boldsymbol{p}(y^1\mid \bx) = p$ and $\boldsymbol{p}(y^2\mid \bx) = 1-p$. We have 
\begin{align*}
    \mathbf{E}_{\Prob(\cY \given \bx)}\left[U_\alpha(\{y^1\},y) \right] &= p \cdot u(y^1,y^1) + (1-p)\cdot u(y^1,y^2) = p + (1-p) \cdot u(y^1,y^2) \\
    \mathbf{E}_{\Prob(\cY \given \bx)}\left[U_\alpha(\{y^2\},y) \right] &= p \cdot u(y^2,y^1) + (1-p) \cdot u(y^2,y^2) = p \cdot u(y^2,y^1) + (1-p) \\
    \mathbf{E}_{\Prob(\cY \given \bx)}\left[U_\alpha(\bar{Y},y) \right]  &= g_\alpha(2)\cdot \left( \mathbf{E}_{\Prob(\cY \given \bx)}\left[U_\alpha(\{y^1\},y) \right] + \mathbf{E}_{\Prob(\cY \given \bx)}\left[U_\alpha(\{y^2\},y) \right]\right)\\
    & =g_\alpha(2) \cdot \left(p \cdot \left(1 + u(y^2,y^1)\right) + (1-p) \cdot \left(u(y^1,y^2) + 1\right) \right) 
\end{align*}
For any $y^k \not\in \bar{Y}$, we have 
\begin{equation*}
    \mathbf{E}_{\Prob(\cY \given \bx)}\left[U_\alpha(\{y^k\},y) \right] = p \cdot u(y^k,y^1) + (1-p)\cdot u(y^k,y^2) \,.
\end{equation*}
It is easy to see that any $y^k \not\in \bar{Y}$ can not be a BOP because the relation
\begin{equation*}
    \mathbf{E}_{\Prob(\cY \given \bx)}\left[U_\alpha(\{\bar{y^1}\},y) \right] \geq \mathbf{E}_{\Prob(\cY \given \bx)}\left[U_\alpha(\{\bar{y^k}\},y) \right] \,,
\end{equation*}
is ensured by 
\begin{align*}
    & \mathbf{E}_{\Prob(\cY \given \bx)}\left[U_\alpha(\{\bar{y^1}\},y) \right] - \mathbf{E}_{\Prob(\cY \given \bx)}\left[U_\alpha(\{\bar{y^k}\},y) \right] \\
    =& p\cdot(\underbrace{1 - u(y^k,y^1)}_{>0}) + (1-p)\cdot (\underbrace{u(y^1,y^2)- u(y^k,y^2)}_{\geq 0}) \\
    >& 0 \,.
\end{align*}
We now consider the case 
\begin{equation*}
    p= \frac{1 - u(y^1,y^2)}{2 - u(y^1,y^2) - u(y^2,y^1)} \,.
\end{equation*}
Since $g_\alpha(2) = \nicefrac{(\alpha +1)}{4} > \nicefrac{1}{2}$, $\forall \alpha \in (1, \overline{\alpha}_u)$, and  
\begin{align*}
    \mathbf{E}_{\Prob(\cY \given \bx)}\left[U_\alpha(\{y^1\},y) \right] &= \mathbf{E}_{\Prob(\cY \given \bx)}\left[U_\alpha(\{y^2\},y) \right] \\
    &= \frac{1-u(y^1,y^2) + \left( 1- u(y^2,y^1)\right)\cdot u(y^1,y^2)}{2 - u(y^1,y^2) - u(y^2,y^1)}\,,
\end{align*}
we have 
\begin{equation*}
    \mathbf{E}_{\Prob(\cY \given \bx)}\left[U_\alpha(\{\bar{Y}\},y) \right] > \mathbf{E}_{\Prob(\cY \given \bx)}\left[U_\alpha(\{\bar{y}\},y) \right] \,, \bar{y} \in \bar{Y} = \{y^1, y^2\}\,.
\end{equation*}
In other words, both $y^1$ and $y^2$ can not be a BOP. Since Proposition \ref{pro:set-valued BOP U_alpha} ensures that a BOP should contain the classes with the highest expected scores $\mathbf{E}_{\Prob(\cY \given \bx)}\left[u(y^k,y) \right]$, we should have a BOP $Y \supset \bar{Y} = \{1, 2\}$.

We end this proof with an investigation on the specific case where, for any true class $y \in \cY$, $1= u(y,y) > 0.5 \geq u(\bar{y},y) \geq 0$, $\bar{y} \neq y \in \cY$. We show that for any non-singleton $\bar{Y}$ with $|\bar{Y}| = 2$, there is a non-singleton prediction $Y \supset \bar{Y}$, which is a set-valued BOP on at least one distribution $\boldsymbol{p}(\mathcal{Y}\mid \bx) \in \Delta^K$, where $\boldsymbol{p}(y\mid \bx) = 0$ if $y \not\in \bar{Y}$. Without the loss of generality, let us assume that $\bar{Y} = \{y^1, y^2\}$, and $\boldsymbol{p}(y^1\mid \bx) = p$ and $\boldsymbol{p}(y^2\mid \bx) = 1-p$. We again consider the case 
\begin{equation*}
    p= \frac{1 - u(y^1,y^2)}{2 - u(y^1,y^2) - u(y^2,y^1)} \,.
\end{equation*}
Since $g_\alpha(2) = \nicefrac{(\alpha +1)}{4} > \nicefrac{1}{2}$, $\forall \alpha \in (1, \overline{\alpha}_u)$, and  
\begin{align*}
    \mathbf{E}_{\Prob(\cY \given \bx)}\left[U_\alpha(\{y^1\},y) \right] &= \mathbf{E}_{\Prob(\cY \given \bx)}\left[U_\alpha(\{y^2\},y) \right] \\
    &= \frac{1-u(y^1,y^2) + \left( 1- u(y^2,y^1)\right)\cdot u(y^1,y^2)}{2 - u(y^1,y^2) - u(y^2,y^1)}\,,
\end{align*}
we have 
\begin{equation*}
    \mathbf{E}_{\Prob(\cY \given \bx)}\left[U_\alpha(\{\bar{Y}\},y) \right] > \mathbf{E}_{\Prob(\cY \given \bx)}\left[U_\alpha(\{\bar{y}\},y) \right] \,, \bar{y} \in \bar{Y} = \{y^1, y^2\}\,.
\end{equation*}
In other words, both $y^1$ and $y^2$ can not be a BOP. We now show that any $y^k \not\in \bar{Y}$, with 
\begin{equation*}
    \mathbf{E}_{\Prob(\cY \given \bx)}\left[U_\alpha(\{y^k\},y) \right] = \frac{\left(1-u(y^1,y^2)\right)\cdot u(y^k,y^1) + \left( 1- u(y^2,y^1)\right)\cdot u(y^k,y^2)}{2 - u(y^1,y^2) - u(y^2,y^1)}\,. 
\end{equation*}
can not be a BOP. This will be done by showing that we have the relation
\begin{equation*}
    \mathbf{E}_{\Prob(\cY \given \bx)}\left[U_\alpha(\{y^1\},y) \right] \geq     \mathbf{E}_{\Prob(\cY \given \bx)}\left[U_\alpha(\{y^k\},y) \right]
\end{equation*} 
is satisfied in both cases where $u(y^1,y^2) - u(y^k,y^2) \geq 0$ and $u(y^1,y^2) - u(y^k,y^2) < 0$. When $u(y^1,y^2) - u(y^k,y^2) \geq 0$, we have 
\begin{align*}
  &\left(2 - u(y^1,y^2) - u(y^2,y^1)\right) \left(\mathbf{E}_{\Prob(\cY \given \bx)}\left[U_\alpha(\{y^1\},y) \right] -     \mathbf{E}_{\Prob(\cY \given \bx)}\left[U_\alpha(\{y^k\},y) \right] \right)\\
 =& \left(1-u(y^1,y^2)\right)\cdot \left(1-u(y^k,y^1)\right) + \left( 1- u(y^2,y^1)\right)\cdot \left(\underbrace{u(y^1,y^2) - u(y^k,y^2) }_{\geq 0}\right) \\
\geq& 0 \,.
\end{align*}
When $u(y^1,y^2) - u(y^k,y^2) < 0$, we have 
\begin{align*}
  &\left(2 - u(y^1,y^2) - u(y^2,y^1)\right) \left(\mathbf{E}_{\Prob(\cY \given \bx)}\left[U_\alpha(\{y^1\},y) \right] -     \mathbf{E}_{\Prob(\cY \given \bx)}\left[U_\alpha(\{y^k\},y) \right] \right)\\
 =& \left(1-u(y^1,y^2)\right)\cdot \left(1-u(y^k,y^1)\right) + \left( \underbrace{1- u(y^2,y^1)}_{\leq 1}\right)\cdot \left(\underbrace{u(y^1,y^2) - u(y^k,y^2) }_{< 0} \right) \\
\geq& \left(1-u(y^1,y^2)\right)\cdot \left(1-u(y^k,y^1)\right) +  \left(u(y^1,y^2) - u(y^k,y^2) \right) \\
=&1-u(y^1,y^2) - \left(1-u(y^1,y^2)\right)\cdot u(y^k,y^1)  + u(y^1,y^2) - u(y^k,y^2) \\
=&1 - \left(1-u(y^1,y^2)\right)\cdot u(y^k,y^1) -  u(y^k,y^2) \\
=&1 - \left(\underbrace{1-u(y^1,y^2)}_{\leq 1}\right)\cdot u(y^k,y^1) - u(y^k,y^2)  \\
\geq&1 - \underbrace{u(y^k,y^1)}_{\leq 0.5} - \underbrace{u(y^k,y^2)}_{\leq 0.5} \\
\geq&1 - 0.5 -  0.5 \\
\geq& 0 \,.
\end{align*}
Since Proposition \ref{pro:set-valued BOP U_alpha} ensures that a BOP should contain the classes with the highest expected scores $\mathbf{E}_{\Prob(\cY \given \bx)}\left[u(y^k,y) \right]$, we should have a BOP $Y \supset \bar{Y} = \{1, 2\}$.
\end{proof}

\begin{lemma}[Nested optimality]
Let $u$ be any reward-sensitive accuracy $u$, where, for any true class $y \in \cY$, $1= u(y,y) > u(\bar{y},y) \geq 0$, $\bar{y} \neq y \in \cY$. Let $\alpha \in (1, \overline{\alpha}_u)$. The reward-sensitive utility-discounted accuracy $U_{\alpha}$ \eqref{eq:Reward-sensitive-U-alpha} satisfies the nested optimality property.   
\end{lemma}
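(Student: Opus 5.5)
Fix $\boldsymbol{p}(\mathcal{Y}\given \bx)\in\Delta^K$ and an $\alpha \in (1,\overline{\alpha}_u)$. As in the proof of Proposition~\ref{pro:set-valued BOP U_alpha}, I would exchange the two sums to write, for every $\bar{Y}\in\cY_{\mathrm{set}}$,
\begin{equation*}
\mathbf{E}_{\boldsymbol{p}(\mathcal{Y}\given \bx)}\left[U_\alpha(\bar{Y},y)\right] = g_\alpha(|\bar{Y}|)\sum_{\bar{y}\in\bar{Y}} s(\bar{y}), \qquad s(\bar{y}) \defi \mathbf{E}_{\boldsymbol{p}(\mathcal{Y}\given \bx)}\left[u(\bar{y},y)\right].
\end{equation*}
For fixed cardinality $k$, the factor $g_\alpha(k)$ is a constant. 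Since $\alpha>1$ gives $\alpha(k-1)+1>0$, this constant is strictly positive, as in the normalization lemma. Maximizing the expected score over $\{\bar{Y} : |\bar{Y}|=k\}$ therefore reduces to maximizing the additive quantity $\sum_{\bar{y}\in\bar{Y}} s(\bar{y})$ under the cardinality constraint.

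Next I would fix once and for all an ordering $y^{(1)},\ldots,y^{(K)}$ with $s(y^{(1)})\ge \cdots\ge s(y^{(K)})$, breaking ties by any fixed rule such as the original index. The claim to prove is that $\{y^{(1)},\ldots,y^{(k)}\}$ belongs to $\operatorname*{argmax}_{|\bar{Y}|=k}\sum_{\bar{y}\in\bar{Y}}s(\bar{y})$. I would use a standard exchange argument. Take any $\bar{Y}$ with $|\bar{Y}|=k$ that differs from the top-$k$ set. Then it contains some $y^{(j)}$ with $j>k$ and misses some $y^{(i)}$ with $i\le k$. Swapping $y^{(j)}$ for $y^{(i)}$ does not decrease the sum, and each swap strictly increases $|\bar{Y}\cap\{y^{(1)},\ldots,y^{(k)}\}|$. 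After at most $k$ swaps we reach the top-$k$ set with a sum at least as large, so the top-$k$ set is optimal.

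With this choice, $\hat{Y}^k_{U_\alpha}=\{y^{(1)},\ldots,y^{(k)}\}$ for every $k\in[K]$. These sets are prefixes of one ordering, so $\hat{Y}^k_{U_\alpha}\subset\hat{Y}^{k'}_{U_\alpha}$ holds trivially whenever $k<k'$, which is the nested optimality property.

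The only real obstacle is ties. If several classes share the same value of $s$, the argmax for a given $k$ is not unique, and an arbitrary selection of maximizers for different $k$ need not be nested. I would handle this by reading Property~\ref{proper:Nested optimality} as requiring the existence of a nested family of maximizers, which the common tie-breaking rule delivers. Alternatively, I would note that under ties any maximizer of size $k$ can be completed to a maximizer of size $k'$ by adding further top-ranked classes. This statement is exactly what Algorithm~\ref{alg:set-valued BOP U_alpha} uses. The hypotheses on $u$ matter only through the positivity of $g_\alpha$; I would add a remark that the argument holds for any $\alpha > -1/(K-1)$.
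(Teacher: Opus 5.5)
Your proposal is correct and takes the same route as the paper. The paper's proof invokes Proposition~\ref{pro:set-valued BOP U_alpha}: a size-$k$ maximizer is given by the $k$ top-ranked classes under $\mathbf{E}_{\boldsymbol{p}(\mathcal{Y}\given \bx)}[u(\bar{y},y)]$, so these maximizers are prefixes of one ordering and are therefore nested. Your exchange argument, your explicit check that $g_\alpha(k)>0$, and your observation that ties force reading the property as the existence of a nested selection of maximizers only make explicit what the paper leaves implicit.
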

\begin{proof}
Proposition \ref{pro:set-valued BOP U_alpha} ensures that, for any possible value $k = |\bar{Y}|$, a local BOP 
\begin{equation}
   \hat{Y}^{k}_{U_\alpha} \in \underset{\substack{\bar{Y} \in \cY_{\mathrm{set}} \\ |\bar{Y}| = k}}{\arg\max}\, \mathbf{E}_{\Prob(\cY \given \bx)}(U_\alpha(\bar{Y},y))
\end{equation}
can be found by including exactly the $k$ top-ranked classes after sorting the classes $\bar{y}$ in the decreasing order of $\mathbf{E}_{\Prob(\cY \given \bx)}\left[u(\bar{y},y) \right]$. Therefore, $\bar{Y}_{k}^{U_\alpha} \subset \bar{Y}_{k'}^{U_\alpha}, \forall k \leq k'$.
\end{proof}

\begin{lemma}[Unimodality]\label{lem:Unimodality}
 Let $u$ be any reward-sensitive accuracy $u$, where, for any true class $y \in \cY$, $1= u(y,y) > u(\bar{y},y) \geq 0$, $\bar{y} \neq y \in \cY$. Let $\alpha \in (1, \overline{\alpha}_u)$. The reward-sensitive utility-discounted accuracy $U_{\alpha}$ \eqref{eq:Reward-sensitive-U-alpha} satisfies the unimodality property.   
\end{lemma}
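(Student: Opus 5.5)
The plan is to reduce unimodality to a one-dimensional statement about partial sums of sorted scores. After that, the only real work is an algebraic inequality on $g_\alpha$.

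\textbf{Reduction.} By the proof of Proposition~\ref{pro:set-valued BOP U_alpha} and the nested optimality lemma, let $s_{(1)} \geq \dots \geq s_{(K)} \geq 0$ denote the sorted expected scores $\mathbf{E}_{\bp(\cY \given x)}[u(\bar y,y)]$ and put $S_k = \sum_{i=1}^k s_{(i)}$. Then
\begin{equation*}
f^k_{U_\alpha} = g_\alpha(k)\, S_k, \qquad \text{where } g_\alpha(k) = \frac{\alpha k + 1 - \alpha}{k^2} > 0 .
\end{equation*}
Unimodality in the sense of Property~\ref{proper:Unimodality} follows once I show the following: if $f^{k+1} < f^{k}$ for some $k$, then $f^{k+2} < f^{k+1}$. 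Once the sequence starts strictly decreasing, it then keeps decreasing, and the peak is the first index after which a strict decrease occurs.

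\textbf{Step 1: preliminary facts on $g_\alpha$.} First I check that $g_\alpha$ is decreasing on $[K]$ for $\alpha\in(1,\overline{\alpha}_u)$. I plan to use $\overline{\alpha}_u \leq \alpha_2 = 4s(2)-1 \leq 3$, which holds because $s(2)=1/(1+\max_{\bar y\neq y}u(\bar y,y)) \leq 1$. Then $g_\alpha(2)=(\alpha+1)/4 < 1 = g_\alpha(1)$. For $k\ge 2$ the continuous derivative has numerator $-\alpha(k-2)-2<0$, so $g_\alpha$ keeps decreasing. I then define
\begin{equation*}
r_k = \frac{g_\alpha(k)}{g_\alpha(k+1)} - 1 > 0 .
\end{equation*}
A direct rearrangement gives $f^{k+1}\ge f^k \iff s_{(k+1)} \ge r_k S_k$.

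\textbf{Step 2: propagation of a decrease.} Assume $s_{(k+1)} < r_k S_k$. I want $s_{(k+2)} < r_{k+1} S_{k+1} = r_{k+1}(S_k + s_{(k+1)})$. Since $s_{(k+2)} \le s_{(k+1)}$, it suffices to show $s_{(k+1)}(1-r_{k+1}) < r_{k+1} S_k$. If $r_{k+1}\ge 1$ this is immediate, because $S_k>0$ whenever the decrease is strict. Otherwise, plugging in $s_{(k+1)} < r_kS_k$ shows that it suffices to prove
\begin{equation*}
r_k\,(1-r_{k+1}) \le r_{k+1}, \qquad \text{i.e.} \qquad \frac{r_k}{1+r_k} \le r_{k+1}.
\end{equation*}
This is a purely deterministic inequality in $k$ and $\alpha$. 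The simpler sufficient condition $r_k\le r_{k+1}$, which is log-convexity of $g_\alpha$, would also do, so I will try that first.

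\textbf{Main obstacle.} The hardest step is verifying this inequality for all $k\ge1$ and all $\alpha\in(1,3)$. Write $h(k)=\alpha k+1-\alpha$. Then
\begin{equation*}
1+r_k=\frac{h(k)(k+1)^2}{h(k+1)k^2}.
\end{equation*}
The factor $(k+1)^2/k^2$ is log-convex in $k$, while $h$ is linear and therefore log-concave, so the two effects compete. I would clear denominators to get a polynomial inequality in $k$ and $\alpha$, and check its sign using $1<\alpha<3$, treating $k=1$ separately where $h(1)=1$. If log-convexity fails somewhere, I fall back on the weaker condition $r_k/(1+r_k)\le r_{k+1}$. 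Its slack comes from $r_{k+1}$ being bounded below by roughly $2/k$, while $r_k/(1+r_k)$ is of the same order but smaller. With this inequality in hand, Steps 1–2 give the single-peak structure, and the lemma follows.
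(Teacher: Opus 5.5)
Your reduction to $f^k_{U_\alpha}=g_\alpha(k)S_k$, the rearrangement $f^{k+1}\ge f^k\iff s_{(k+1)}\ge r_kS_k$, and the propagation step via $s_{(k+2)}\le s_{(k+1)}$ are the paper's argument. The inequality you leave open is exactly the paper's key fact. With $\phi=1/g_\alpha$, i.e.\ $\phi(k)=k^2/h(k)$, one has $r_k/(1+r_k)=\bigl(\phi(k+1)-\phi(k)\bigr)/\phi(k+1)$ and $r_{k+1}=\bigl(\phi(k+2)-\phi(k+1)\bigr)/\phi(k+1)$. So your fallback condition is precisely the convexity of $1/g_\alpha$ (``$g_\alpha$ is $(1/x)$-convex'') that the paper invokes.

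It closes in one line. Write $\phi(k)=\bigl(h(k)-2(1-\alpha)\bigr)/\alpha^2+(1-\alpha)^2/\bigl(\alpha^2h(k)\bigr)$ with $h$ affine; then
\[
\phi(k)-2\phi(k+1)+\phi(k+2)=\frac{2(\alpha-1)^2}{h(k)\,h(k+1)\,h(k+2)}\ge 0
\]
for every $\alpha>0$, with no need for the bound $\alpha<3$.

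Two cautions about the route you sketched. First, the ``simpler'' condition $r_k\le r_{k+1}$ is log-\emph{concavity} of $g_\alpha$, not log-convexity. It is false, because $g_\alpha(k)\sim\alpha/k$; for $\alpha=2$ one gets $r_2=7/20>r_3=17/63$. Trying it first is therefore a dead end. Second, your slack heuristic is wrong: $r_{k+1}$ and $r_k/(1+r_k)$ are both $\sim 1/k$, they coincide exactly at $\alpha=1$, and their difference is only of order $(\alpha-1)^2$. No order-of-magnitude comparison will settle the inequality; you need the exact identity above.
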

\begin{proof}
For any \( k \in [K] \), let 
\begin{equation}
    \hat{Y}^{k}_{U_\alpha} \in \underset{\substack{\bar{Y} \in \cY_{\mathrm{set}} \\ |\bar{Y}| = k}}{\arg\max}\, \mathbf{E}_{\Prob(\cY \given \bx)}\left[U_\alpha(\bar{Y},y)\right] \,.
\end{equation}
Proposition \ref{pro:set-valued BOP U_alpha} ensures that 
\begin{equation}
        f_{U_\alpha}^k = \mathbf{E}_{\Prob(\cY \given \bx)}\left[U_\alpha(\bar{Y}^{k}_{U_\alpha},y)\right] = g_{\alpha}(k) \sum_{k' = 1}^k \mathbf{E}\left[u(\bar{y}^{(k'),y},y) \right]\,,
\end{equation}
where $\bar{y}^{(k')}$ is the $k^{\prime}$-th class after sorting the classes $\bar{y} \in \cY$ in the decreasing order of $\mathbf{E}_{\Prob(\cY \given \bx)}\left[u(\bar{y}^{(k'),y},y) \right]$. It is required the existence of a $k^*$ such that $ f_{U_\alpha}^1 \leq \dots \leq f_{U_\alpha}^{k^*} \land f_{U_\alpha}^{k^*} \geq \dots \geq f_{U_\alpha}^{K}$. This is equivalent to ensuring the following implications: \[
\begin{cases}
    f_{U_\alpha}^{k} \geq f_{U_\alpha}^{k-1} \Rightarrow f_{U_\alpha}^{k-1} \geq f_{U_\alpha}^{k-2} \,, \\
    f_{U_\alpha}^{k} \geq f_{U_\alpha}^{k+1} \Rightarrow f_{U_\alpha}^{k+1} \geq f_{U_\alpha}^{k+2}\,.
\end{cases}
\] 
In the remainder of this proof, we use the shorthand notation $\mathbf{E}$ instead of $\mathbf{E}_{\Prob(\cY \given \bx)}$.

We first prove that $f_{U_\alpha}^{k} \geq f_{U_\alpha}^{k-1} \Rightarrow f_{U_\alpha}^{k-1} \geq f_{U_\alpha}^{k-2}$. 
Since $g_{\alpha}(k)$ is $(1/x)$-convex, where $\nicefrac{1}{g_{\alpha}(k)}$ has the positive second derivative  
\begin{equation*}
    \frac{2\alpha^2}{\left(\alpha(k-1) +1\right)^3}\,,
\end{equation*}
we have
\begin{align*}
&\frac{1}{2}\left( \frac{1}{g_{\alpha}(k)} + \frac{1}{g_{\alpha}(k-2)} \right) \geq \frac{1}{g_{\alpha}(k-1)} \Leftrightarrow \frac{g_{\alpha}(k-1)}{g_{\alpha}(k)} + \frac{g_{\alpha}(k-1)}{g_{\alpha}(k-2)} \geq 2 \\
\Leftrightarrow& g_{\alpha}(k-1)g_{\alpha}(k-2) + g_{\alpha}(k)g_{\alpha}(k-1) \geq 2g_{\alpha}(k)g_{\alpha}(k-2) \\
\Leftrightarrow& g_{\alpha}(k-1)g_{\alpha}(k-2) - g_{\alpha}(k)g_{\alpha}(k-2) \geq g_{\alpha}(k)g_{\alpha}(k-2) - g_{\alpha}(k)g_{\alpha}(k-1) \\
\Leftrightarrow& g_{\alpha}(k-2)[ g_{\alpha}(k-1) - g_{\alpha}(k)] \geq g_{\alpha}(k)[g_{\alpha}(k-2) - g_{\alpha}(k-1)] \\
\Leftrightarrow& \frac{g_{\alpha}(k-2)}{g_{\alpha}(k-2) - g_{\alpha}(k-1)} \geq \frac{g_{\alpha}(k)}{g_{\alpha}(k-1) - g_{\alpha}(k)} \\
\Leftrightarrow& \frac{g_{\alpha}(k-1)}{g_{\alpha}(k-2) - g_{\alpha}(k-1)} + 1 \geq \frac{g_{\alpha}(k)}{g_{\alpha}(k-1) - g_{\alpha}(k)} \\
\Leftrightarrow& \frac{g_{\alpha}(k-1)}{g_{\alpha}(k-2) - g_{\alpha}(k-1)} \geq \frac{g_{\alpha}(k)}{g_{\alpha}(k-1) - g_{\alpha}(k)} - 1.
\end{align*}
This relation ensures the following relation
\begin{align*}
&f_{U_\alpha}^{k} \geq f_{U_\alpha}^{k-1} \\
\Leftrightarrow &g_\alpha(k) \sum_{k' = 1}^{k} \mathbf{E}\left[u(\bar{y}^{(k'),y},y)\right] \geq g_\alpha(k-1) \sum_{k' = 1}^{k-1} \mathbf{E}\left[u(\bar{y}^{(k'),y},y)\right] \\
\Leftrightarrow &g_\alpha(k) \mathbf{E}\left[u(\bar{y}^{(k),y},y)\right] \geq [g_\alpha(k-1) -g_\alpha(k)]\sum_{k' = 1}^{k-1} \mathbf{E}\left[u(\bar{y}^{(k'),y},y)\right] \\
\Leftrightarrow &\frac{g_{\alpha}(k)}{g_{\alpha}(k-1) -g_{\alpha}(k)} \mathbf{E}\left[u(\bar{y}^{(k),y},y)\right] \geq \sum_{k' = 1}^{k-1} \mathbf{E}\left[u(\bar{y}^{(k'),y},y)\right] \\
\Leftrightarrow &\frac{g_{\alpha}(k)}{g_{\alpha}(k-1) -g_{\alpha}(k)} \mathbf{E}\left[u(\bar{y}^{(k),y},y)\right] - \mathbf{E}\left[u(\bar{y}^{(k-1),y},y)\right] \geq \sum_{k' = 1}^{k-2} \mathbf{E}\left[u(\bar{y}^{(k'),y},y)\right] \\
\Rightarrow &\frac{g_{\alpha}(k)}{g_{\alpha}(k-1) -g_{\alpha}(k)} \mathbf{E}\left[u(\bar{y}^{(k-1),y},y)\right] - \mathbf{E}\left[u(\bar{y}^{(k-1),y},y)\right] \geq \sum_{k' = 1}^{k-2} \mathbf{E}\left[u(\bar{y}^{(k'),y},y)\right] \\
\Leftrightarrow &\left[ \frac{g_{\alpha}(k)}{g_{\alpha}(k-1) -g_{\alpha}(k)} - 1 \right] \mathbf{E}\left[u(\bar{y}^{(k-1),y},y)\right] \geq \sum_{k' = 1}^{k-2} \mathbf{E}\left[u(\bar{y}^{(k'),y},y)\right] \\
\Rightarrow &\left[ \frac{g_{\alpha}(k-1)}{g_{\alpha}(k-2) -g_{\alpha}(k-1)} \right] \mathbf{E}\left[u(\bar{y}^{(k-1),y},y)\right] \geq \sum_{k' = 1}^{k-2} \mathbf{E}\left[u(\bar{y}^{(k'),y},y)\right] \\
\Leftrightarrow &g_{\alpha}(k-1) \mathbf{E}\left[u(\bar{y}^{(k-1),y},y)\right] \geq [g_{\alpha}(k-2) - g_{\alpha}(k-1)] \sum_{k' = 1}^{k-2} \mathbf{E}\left[u(\bar{y}^{(k'),y},y)\right] \\
\Leftrightarrow &g_{\alpha}(k-1) \mathbf{E}\left[u(\bar{y}^{(k-1),y},y)\right] + g_{\alpha}(k-1) \sum_{k' = 1}^{k-2} \mathbf{E}\left[u(\bar{y}^{(k'),y},y)\right] \\
&\geq g_{\alpha}(k-2) \sum_{k' = 1}^{k-2} \mathbf{E}\left[u(\bar{y}^{(k'),y},y)\right] \\
\Leftrightarrow &g_{\alpha}(k-1) \sum_{k' = 1}^{k-1} \mathbf{E}\left[u(\bar{y}^{(k'),y},y)\right] \geq g_{\alpha}(k-2) \sum_{k' = 1}^{k-2} \mathbf{E}\left[u(\bar{y}^{(k'),y},y)\right] \\
\Leftrightarrow &f_{U_\alpha}^{k-1} \geq f_{U_\alpha}^{k-2}.
\end{align*}
   
We now prove that $f_{U_\alpha}^{k} \geq f_{U_\alpha}^{k+1} \Rightarrow f_{U_\alpha}^{k+1} \geq f_{U_\alpha}^{k+2}$. Since $g_{\alpha}(k)$ is $(1/x)$-convex, we also have:
\begin{align*}
&\frac{1}{2}\left( \frac{1}{g_{\alpha}(k)} + \frac{1}{g_{\alpha}(k+2)} \right) \geq \frac{1}{g_{\alpha}(k+1)} \Leftrightarrow \frac{g_{\alpha}(k+1)}{g_{\alpha}(k)} + \frac{g_{\alpha}(k+1)}{g_{\alpha}(k+2)} \geq 2 \\
\Leftrightarrow& g_{\alpha}(k+1)g_{\alpha}(k+2) + g_{\alpha}(k)g_{\alpha}(k+1) \geq 2g_{\alpha}(k)g_{\alpha}(k+2) \\
\Leftrightarrow& g_{\alpha}(k)g_{\alpha}(k+1) - g_{\alpha}(k)g_{\alpha}(k+2) \geq g_{\alpha}(k)g_{\alpha}(k+2) - g_{\alpha}(k+1)g_{\alpha}(k+2) \\
\Leftrightarrow& g_{\alpha}(k)[ g_{\alpha}(k+1) - g_{\alpha}(k+2)] \geq g_{\alpha}(k+2)[g_{\alpha}(k) - g_{\alpha}(k+1)] \\
\Leftrightarrow& \frac{g_{\alpha}(k)}{g_{\alpha}(k) - g_{\alpha}(k+1)} \geq \frac{g_{\alpha}(k+2)}{g_{\alpha}(k+1) - g_{\alpha}(k+2)}.
\end{align*}
This relation ensures the following relation:
\begin{align*}
&f_{U_\alpha}^{k} \geq f_{U_\alpha}^{k+1} \\
\Leftrightarrow &g_\alpha(k) \sum_{k' = 1}^{k} \mathbf{E}\left[u(\bar{y}^{(k'),y},y)\right] \geq g_\alpha(k+1) \sum_{k' = 1}^{k+1} \mathbf{E}\left[u(\bar{y}^{(k'),y},y)\right] \\
\Leftrightarrow &g_\alpha(k) \left[ \sum_{k' = 1}^{k+1} \mathbf{E}\left[u(\bar{y}^{(k'),y},y)\right] - \mathbf{E}\left[u(\bar{y}^{(k+1),y},y)\right] \right] \geq g_\alpha(k+1) \sum_{k' = 1}^{k+1} \mathbf{E}\left[u(\bar{y}^{(k'),y},y)\right] \\
\Leftrightarrow &[g_\alpha(k) -g_\alpha(k+1)]\sum_{k' = 1}^{k+1} \mathbf{E}\left[u(\bar{y}^{(k'),y},y)\right] \geq g_\alpha(k) \mathbf{E}\left[u(\bar{y}^{(k+1),y},y)\right] \\
\Leftrightarrow &\sum_{k' = 1}^{k+1} \mathbf{E}\left[u(\bar{y}^{(k'),y},y)\right] \geq \frac{g_{\alpha}(k)}{g_{\alpha}(k) -g_{\alpha}(k+1)} \mathbf{E}\left[u(\bar{y}^{(k+1),y},y)\right] \\
\Leftrightarrow &\sum_{k' = 1}^{k+1} \mathbf{E}\left[u(\bar{y}^{(k'),y},y)\right] \geq \frac{g_{\alpha}(k)}{g_{\alpha}(k) -g_{\alpha}(k+1)} \mathbf{E}\left[u(\bar{y}^{(k+2),y},y)\right] \\
\Leftrightarrow &\sum_{k' = 1}^{k+1} \mathbf{E}\left[u(\bar{y}^{(k'),y},y)\right] \geq \frac{g_{\alpha}(k+2)}{g_{\alpha}(k+1) -g_{\alpha}(k+2)} \mathbf{E}\left[u(\bar{y}^{(k+2),y},y)\right] \\
\Leftrightarrow &[g_{\alpha}(k+1) - g_{\alpha}(k+2)] \sum_{k' = 1}^{k+1} \mathbf{E}\left[u(\bar{y}^{(k'),y},y)\right] \geq g_{\alpha}(k+2) \mathbf{E}\left[u(\bar{y}^{(k+2),y},y)\right] \\
\Leftrightarrow &g_{\alpha}(k+1) \sum_{k' = 1}^{k+1} \mathbf{E}\left[u(\bar{y}^{(k'),y},y)\right] \geq g_{\alpha}(k+2) \left [ \mathbf{E}\left[u(\bar{y}^{(k+2),y},y)\right] + \sum_{k' = 1}^{k+1} \mathbf{E}\left[u(\bar{y}^{(k'),y},y)\right] \right] \\
\Leftrightarrow &g_{\alpha}(k+1) \sum_{k' = 1}^{k+1} \mathbf{E}\left[u(\bar{y}^{(k'),y},y)\right] \geq g_{\alpha}(k+2) \sum_{k' = 1}^{k+2} \mathbf{E}\left[u(\bar{y}^{(k'),y},y)\right] \\
\Leftrightarrow &f_{U_\alpha}^{k+1} \geq f_{U_\alpha}^{k+2}\,.
\end{align*}
\end{proof}

\subsection{Proof for Proposition \ref{propos:Nested coarsening}}

\begin{proof}
In this proof, we use the shorthand notation $\mathbf{E}$ instead of $\mathbf{E}_{\Prob(\cY \given \bx)}$.

We will first show that for any \( k = 2,\dots,K \) and any \( \alpha \leq \alpha' \), if \( f^{k-1}_{U_\alpha} \leq  f^{k}_{U_\alpha}\), then \( f_{U_{\alpha'}}(k-1) \leq  f_{U_{\alpha'}}(k)\). For any \( k=2,\dots,K \), we examine the following quantity:
\begin{align*}
&k(k-1)[g_{\alpha'}(k-1)g_{\alpha}(k) - g_{\alpha'}(k)g_{\alpha}(k-1)] \\
=&k(k-1)\left[ \left( \alpha' + \frac{1-\alpha'}{k-1} \right) \frac{1}{k-1} \left( \alpha + \frac{1-\alpha}{k} \right) \frac{1}{k} \right] \\
&- k(k-1) \left[ \left( \alpha' + \frac{1-\alpha'}{k} \right) \frac{1}{k} \left( \alpha + \frac{1-\alpha}{k-1} \right) \frac{1}{k-1} \right] \\
=&\left( \alpha' + \frac{1-\alpha'}{k-1} \right) \left( \alpha + \frac{1-\alpha}{k} \right) - \left( \alpha' + \frac{1-\alpha'}{k} \right) \left( \alpha + \frac{1-\alpha}{k-1} \right) \\
=&\left( \alpha\alpha' + \frac{\alpha'(1-\alpha)} {k} +\frac{\alpha(1-\alpha')}{k-1} + \frac{(1-\alpha)(1-\alpha')}{k(k-1)} \right) \\ 
&-\left( \alpha\alpha' + \frac{\alpha'(1-\alpha)}{k-1} + \frac{\alpha(1-\alpha')}{k} + \frac{(1-\alpha)(1-\alpha')}{k(k-1)} \right) \\
=&\left(\frac{\alpha'(1-\alpha)} {k} +\frac{\alpha(1-\alpha')}{k-1} \right) - \left( \frac{\alpha'(1-\alpha)}{k-1} + \frac{\alpha(1-\alpha')}{k} \right) \\
=&\left(\frac{\alpha'(1-\alpha) - \alpha(1 - \alpha')} {k} +\frac{\alpha(1-\alpha') - \alpha'(1-\alpha)}{k-1} \right) \\
=&\left(\frac{\alpha' - \alpha} {k} +\frac{\alpha - \alpha'}{k-1} \right) = \frac{k\alpha' - \alpha' - k\alpha + \alpha + k\alpha - k\alpha'}{k(k-1)} \\
=&\frac{\alpha - \alpha'}{k(k-1)} \leq 0, \alpha \leq \alpha' \land k = 2,\dots,K \\
\Rightarrow& g_{\alpha'}(k-1)g_{\alpha}(k) \leq g_{\alpha'}(k)g_{\alpha}(k-1).
\end{align*}
Therefore, with \(g_{\alpha}(k)\) is non-increasing and \(1/x\)-convex, we have
\begin{align*}
&\frac{g_{\alpha}(k)}{g_{\alpha}(k-1) - g_{\alpha}(k)} - \frac{g_{\alpha'}(k)}{g_{\alpha'}(k-1) - g_{\alpha'}(k)} \\
=&\frac{g_{\alpha'}(k-1)g_{\alpha}(k) - g_{\alpha'}(k)g_{\alpha}(k-1)}{(g_{\alpha}(k-1) - g_{\alpha}(k))(g_{\alpha'}(k-1) - g_{\alpha'}(k)) } \leq 0;
\end{align*}
or equivalently \[
\frac{g_{\alpha}(k)}{g_{\alpha}(k-1) - g_{\alpha}(k)} \leq \frac{g_{\alpha'}(k)}{g_{\alpha'}(k-1) - g_{\alpha'}(k)}.
\]
Therefore, we have the following relation
\begin{align*}
&f^{k-1}_{U_\alpha} \leq f^{k}_{U_\alpha} \Leftrightarrow g_{\alpha}(k-1) \sum_{k'=1}^{k-1} \mathbf{E}\left[ u(\bar{y}^{(k')},y) \right] \leq g_{\alpha}(k) \sum_{k'=1}^{k} \mathbf{E}\left[ u(\bar{y}^{(k')},y) \right] \\
\Leftrightarrow& \sum_{k'=1}^{k-1} \mathbf{E}\left[ u(\bar{y}^{(k')},y) \right] \leq \frac{g_{\alpha}(k)}{g_{\alpha}(k-1) - g_{\alpha}(k)} \mathbf{E}\left[ u(\bar{y}^{(k)},y) \right] \\
\Rightarrow& \sum_{k'=1}^{k-1} \mathbf{E}\left[ u(\bar{y}^{(k')},y) \right] \leq \frac{g_{\alpha'}(k)}{g_{\alpha'}(k-1) - g_{\alpha'}(k)} \mathbf{E}\left[ u(\bar{y}^{(k)},y) \right] \\
\Leftrightarrow& g_{\alpha'}(k-1) \sum_{k'=1}^{k-1} \mathbf{E}\left[ u(\bar{y}^{(k')},y) \right] \leq g_{\alpha'}(k) \sum_{k'=1}^{k} \mathbf{E}\left[ u(\bar{y}^{(k')},y) \right] \\
\Leftrightarrow& f^{k-1}_{U_{\alpha'}} \leq f^{k}_{U_{\alpha'}} \,.
\end{align*}
This implies that if \( \alpha \leq \alpha' \), then \( k_{\alpha}^* \leq k_{\alpha'}^* \). Therefore, we have \( \hat{Y}_{U_{\alpha}} = \{\hat{y}^{(1)}, \ldots \hat{y}^{(k_{\alpha}^*)}\} \subseteq \hat{Y}_{U_{\alpha'}} = \{\hat{y}^{(1)}, \ldots \hat{y}^{(k_{\alpha'}^*)}\}\). 
\end{proof}

\subsection{Proof for the Remark \ref{rem:upper-value-for-U-0/1}}
\begin{proof}
The proof is done by showing that the upper value $\overline{\alpha}_{u^{0/1}} = 3$. Since $s_k = 1$, $2 \leq k \leq K$, we have $\alpha_k = (k^2-1)/(k-1) = k+1$. Therefore, the smallest value of $\alpha_k$, $2 \leq k \leq K$, is $3$ when $k = 2$.      
\end{proof}

\subsection{Proof for Remark \ref{rem:probustness-permissiveness-for-U-0/1}}
\begin{proof}
For any singleton prediction $\bar{Y} = \{y\}$, we have $g_\alpha(|\bar{Y}|) = 1$, Therefore, we can simply pick up the degenerate distribution $\boldsymbol{p}(\mathcal{Y}\mid \bx)$ with $\boldsymbol{p}(y\mid \bx) = 1$ to have $\mathbf{E}_{\boldsymbol{p}(y\mid \bx)}\left[U^{0/1}_\alpha(\{y\},y) \right] = 1$, which in turn ensures that $\{y\}$ is a set-valued BOP on the distribution $\boldsymbol{p}(\mathcal{Y}\mid \bx)$. 

We now show that for any non-singleton $\bar{Y}$ with $|\bar{Y}| = k$, $k \geq 2$, $\bar{Y}$ is a set-valued BOP on the distribution $\boldsymbol{p}(\mathcal{Y}\mid \bx)$ where 
\begin{equation}
    \boldsymbol{p}(y\mid \bx) = 
    \begin{cases}
    \frac{1}{k} \text{ if } y \in \bar{Y} \,, \\
    0 \text{ otherwise.}
    \end{cases}
\end{equation}
In the case of 0/1 accuracy $u^{0/1}$, we have 
\begin{equation}
    \mathbf{E}_{\Prob(\cY \given \bx)}\left[u^{0/1}(\bar{y},y) \right] = \boldsymbol{p}(\bar{y} \mid \bx)=
    \begin{cases}
    \frac{1}{k} \text{ if } \bar{y} \in \bar{Y} \,, \\
    0 \text{ otherwise.}
    \end{cases}
\end{equation}
Therefore, we have 
\begin{align*}
    \hat{Y}_{U^{0/1}_{\alpha}} &\in \operatorname*{argmax}_{\bar{Y} \in \cY_{\mathrm{set}}} \mathbf{E}_{\Prob(\cY \given \bx)}\left[U^{0/1}_\alpha(\bar{Y},y) \right] \\
    &= \operatorname*{argmax}_{\bar{Y} \in \cY_{\mathrm{set}}} g_\alpha(|\bar{Y}|)\sum_{y\in \cY} \llbracket y\in \bar{Y}\rrbracket\mathbf{E}_{\Prob(\cY \given \bx)}\left[u^{0/1}(\bar{Y},y) \right]\\
    & = \operatorname*{argmax}_{\bar{Y} \in \cY_{\mathrm{set}}} g_\alpha(|\bar{Y}|)\sum_{y\in \bar{Y} \cap \hat{Y}} \mathbf{E}_{\Prob(\cY \given \bx)}\left[u^{0/1}(\bar{y},y) \right] \\
    & = \operatorname*{argmax}_{\bar{Y} \subset \hat{Y}} g_\alpha(|\bar{Y}|) \frac{|\bar{Y}|}{k} \\
    & = \operatorname*{argmax}_{\bar{Y} \subset \hat{Y}} \left(\left(\alpha + \frac{1 - \alpha}{|\bar{Y}|}\right) \frac{1}{|\bar{Y}|}\right)\frac{|\bar{Y}|}{k} \\
    & = \operatorname*{argmax}_{\bar{Y} \subset \hat{Y}}\left(\alpha + \frac{1 - \alpha}{|\bar{Y}|}\right)\frac{1}{k} 
\end{align*}
because $\alpha + (1 - \alpha)/|\bar{Y}|$ with $\alpha > 1$ is a increasing function of $|\bar{Y}|$. 
   
\end{proof}

\section{Additional experimental results} \label{sec:Additional experimental results}

This section provides detailed experimental results for experiments that are mentioned in Section \ref{sec:Additional experimental results}. 

\subsection{Additional experimental results}

Results when the BNNs are employed to estimate the representative distributions \eqref{eq:representative distribution}, which are then used to define set-valued BOPs \eqref{eq:set-valued BOP} of the utility-discounted accuracies $U^{0/1}_\alpha$ \eqref{eq:U-01-alpha} are given in Figs.~\ref{fig:BNNs with clean training data no sensitivity}--\ref{fig:BNNs with noisy training data no sensitivity}. The trends observed on all the experiments are similar to those that are discussed in Section \ref{sec:Results}. 

The results when the CNNs with the dropout prediction option are employed to estimate the representative distributions \eqref{eq:representative distribution}, which are then used to define set-valued BOPs \eqref{eq:set-valued BOP} of the reward-sensitive utility-discounted accuracies $U_\alpha$ \eqref{eq:Reward-sensitive-U-alpha} are given in Figs.\ref{fig:CNNs with clean training data}--\ref{fig:CNNs with noisy training data}. Again, the trends are similar to those that are discussed in Section \ref{sec:Results}, except that CNNs appear to be (much) less promising when encountering the mismatches between training and test data types. The trends are visible on all three data sets, especially on the Plant Leaf Diseases data set. Again, CNNs trained on noisy images tend to be more robust compared to those that are trained on clean images.       

Finally, the results when the CNNs with the dropout prediction option are employed to estimate the representative distributions \eqref{eq:representative distribution}, which are then used to define set-valued BOPs \eqref{eq:set-valued BOP} of the utility-discounted accuracies $U^{0/1}_\alpha$ \eqref{eq:U-01-alpha} are given in Figs.~\ref{fig:CNNs with clean training data no sensitivity}--\ref{fig:CNNs with noisy training data no sensitivity}. The trends in the results are similar to those that are observed for the case of optimizing the reward-sensitive utility-discounted accuracies $U_\alpha$ \eqref{eq:Reward-sensitive-U-alpha}.

Overall, the results suggest that our proposal on making set-valued BOPs can provide a promising way to seek the trade-off between recall and precision on data sets of different application domains, when the reward-sensitivity can be optionally taken into account when assessing the predictions. Our proposal might provide rather impressive trade-offs when the training data and test data are of the same type, and when BNNs are trained on clean data, but both clean and noisy test data can be encountered.     

\input{tikz/bnn_train_clean_nonsense}
\input{tikz/bnn_train_gnoise_nonsense}

\input{tikz/cdo_train_clean}
\input{tikz/cdo_train_gnoise}

\input{tikz/cdo_train_clean_nonsense}
\input{tikz/cdo_train_gnoise_nonsense}

\subsection{Calibration errors}

To seek further insights about the potential impact of the quality of probabilistic predictions achieved by our representative probabilities, we report in this appendix different (the lower the better) calibration errors \citep{silva2023classifier}, which are computed using the representative probability distributions \eqref{eq:representative distribution}.
\begin{itemize}
    \item $\mathrm{ECE}_{\mathrm{classwise}}$: Let us first discretize the score space $[0,1]$ into $M$ bins $\{\mathrm{B}_1, \ldots, \mathrm{B}_M\}$. For any $k \in [K]$, let 
\begin{align}
    \bar{y}^k_m &\defi \frac{1}{|\mathrm{B}_m|}\sum_{t=1}^T \llbracket \Prob(y^k \given \bx_t) \in \mathrm{B}_m \rrbracket \llbracket y_t = y^k\rrbracket \,, m \in [M]\,,\\
    \bar{s}^k_m &\defi \frac{1}{|\mathrm{B}_m|}\sum_{t=1}^T \llbracket \Prob(y^k \given \bx_t) \in \mathrm{B}_m \rrbracket \Prob(y^k \given \bx_t) \,, m \in [M]\,.  
\end{align} 
The $\mathrm{ECE}_k$ is defined as follows
\begin{equation} \label{eq:Binary-ECE}
    \mathrm{ECE}_k(\bh \given \cD) = \sum_{m=1}^M \frac{|\mathrm{B}_m|}{T} |\bar{y}^k_m - \bar{s}^k_m | \,.  
\end{equation} 
The estimated classwise calibration error is defined as follows
    \begin{equation}\label{eq:ECE-classwise}
       \mathrm{ECE}_{\mathrm{classwise}}(\bh \given \mathbf{T}) = \frac{1}{K} \sum_{k=1}^K\mathrm{ECE}_k(\bh \given \mathbf{T}) \,,
    \end{equation}
   \item $\mathrm{MCE}_{\mathrm{classwise}}$: The maximum classwise calibration error is defined as follows
   \begin{equation}\label{eq:MCE-classwise}
       \mathrm{MCE}_{\mathrm{classwise}}(\bh \given \mathbf{T})  = \max_{k \in [K], m \in [M]} |\bar{y}^k_m - \bar{s}^k_m |\,.
   \end{equation}
    \item $\mathrm{ECE}_{\mathrm{confidence}}$: For any $m\in [M]$, let 
    \begin{align}
    \bar{y}^m_{\mathrm{confidence}} &\defi \frac{1}{|\mathrm{B}_m|}\sum_{t=1}^T \llbracket \max_{y \in \cY}\Prob(y \given \bx_t) \in \mathrm{B}_m \rrbracket \llbracket  \operatorname*{argmax}_{y \in \cY}\Prob(y \given \bx_t) = y_t \rrbracket \,, \\
    \bar{s}^m_{\mathrm{confidence}} &\defi \frac{1}{|\mathrm{B}_m|}\sum_{t=1}^T \llbracket \max_{y \in \cY}\Prob(y \given \bx_t) \in \mathrm{B}_m \rrbracket \max_{y \in \cY} \Prob(y \given \bx_t) \,.  
   \end{align}
   The estimated Confidence calibration error is defined as follows
   \begin{equation}\label{eq:ECE-confidence}
       \mathrm{ECE}_{\mathrm{confidence}}(\bh \given \mathbf{T}) = \sum_{m=1}^M \frac{|\mathrm{B}_m|}{T} |\bar{y}^m_{\mathrm{confidence}} - \bar{s}^m_{\mathrm{confidence}}| \,.
   \end{equation}
   \item $\mathrm{MCE}_{\mathrm{confidence}}$: The maximum confidence calibration error is defined as follows 
   \begin{equation}\label{eq:MCE-confidence}
        \mathrm{MCE}_{\mathrm{confidence}}(\bh \given \mathbf{T}) =  \max_{m \in [M]} |\bar{y}^m_{\mathrm{confidence}} - \bar{s}^m_{\mathrm{confidence}}| \,.
   \end{equation}
\end{itemize}

Calibration errors of the representative distributions \eqref{eq:representative distribution} produced by $d_{\mathrm{sE}}$ \eqref{eq:squared Euclidean distance}, $d_{\mathrm{L1}}$ \eqref{eq:L_1 distance}, and $d_{\mathrm{KL}}$ \eqref{eq:KL divergence} are given in Table \ref{table:calibration_error_sE}, \ref{table:calibration_error_L1}, and \ref{table:calibration_error_KL}, respectively. 

\begin{table}[http]
\centering
\caption{Calibration errors of the representative distributions \eqref{eq:representative distribution} produced by $d_{\mathrm{sE}}$ \eqref{eq:squared Euclidean distance}: The shorthand notations $\mathrm{ECE}_{\mathrm{cl}}$, $\mathrm{MCE}_{\mathrm{cl}}$, $\mathrm{ECE}_{\mathrm{co}}$, and $\mathrm{MCE}_{\mathrm{co}}$ are used instead of $\mathrm{ECE}_{\mathrm{classwise}}$, $\mathrm{MCE}_{\mathrm{classwise}}$, $\mathrm{ECE}_{\mathrm{confidence}}$, and $\mathrm{MCE}_{\mathrm{confidence}}$. Results on (clean train data, clean test data), (clean train data, noisy test data), (noisy train data, clean test data), and (noisy train data, noisy test data) are marked as c-c, c-n, n-c, and n-n, respectively.}
\label{table:calibration_error_sE}
\begin{tabular}{p{1em}|p{3em}|p{4.6em}p{4.6em}|p{4.6em}p{4.6em}|p{4.6em}p{4.6em}}
\multicolumn{2}{c|}{}                                & \multicolumn{2}{c|}{CIFAR-10} & \multicolumn{2}{|c|}{Fashion-MNIST} & \multicolumn{2}{|c}{Plan Leaf Diseases} \\
\cline{3-8}
\multicolumn{2}{c|}{}                                & BNNs           & CNNs          & BNNs          & CNNS         & BNN         & CNNs        \\
\toprule
\multirow{4}{*}{c-n} & $\mathrm{ECE}_{\mathrm{cl}}$  & 7.99\(\pm\)0.38           & 8.75\(\pm\)0.86          & 2.82\(\pm\)0.04          & 6.62\(\pm\)0.06         & 2.45\(\pm\)0.14         & 3.68\(\pm\)0.13        \\
\cline{2-8}
                      & $\mathrm{MCE}_{\mathrm{cl}}$  & 71.11\(\pm\)3.55           & 70.80\(\pm\)5.51          & 59.30\(\pm\)11.09          & 77.87\(\pm\)1.42         & 97.02\(\pm\)0.34         & 95.74\(\pm\)0.62        \\
\cline{2-8}
                      & $\mathrm{ECE}_{\mathrm{co}}$ & 26.53\(\pm\)5.06           & 31.46\(\pm\)5.69          & 9.50\(\pm\)0.51          & 5.33\(\pm\)0.62         & 11.53\(\pm\)2.69         & 33.20\(\pm\)3.91        \\
\cline{2-8}
                      & $\mathrm{MCE}_{\mathrm{co}}$ & 43.17\(\pm\)10.86           & 51.74\(\pm\)8.57          & 50.80\(\pm\)25.43          & 21.52\(\pm\)6.76         & 18.59\(\pm\)3.86         & 61.42\(\pm\)2.99        \\
\midrule
\multirow{4}{*}{c-c} & $\mathrm{ECE}_{\mathrm{cl}}$  & 0.90\(\pm\)0.04           & 1.29\(\pm\)0.03          & 0.40\(\pm\)0.02          & 3.48\(\pm\)0.08         & 0.50\(\pm\)0.01         & 1.52\(\pm\)0.03        \\
\cline{2-8}
                      & $\mathrm{MCE}_{\mathrm{cl}}$  & 32.56\(\pm\)2.39           & 32.23\(\pm\)1.36         & 63.29\(\pm\)11.37          & 53.46\(\pm\)1.96         & 84.22\(\pm\)5.61         & 77.23\(\pm\)0.70        \\
\cline{2-8}
                      & $\mathrm{ECE}_{\mathrm{co}}$ & 3.87\(\pm\)0.25           & 5.79\(\pm\)0.45          & 1.23\(\pm\)0.19          & 12.72\(\pm\)0.10         & 9.62\(\pm\)0.09         & 29.23\(\pm\)0.47        \\
\cline{2-8}
                      & $\mathrm{MCE}_{\mathrm{co}}$ & 38.94\(\pm\)28.25           & 43.33\(\pm\)28.60          & 22.50\(\pm\)4.20          & 21.00\(\pm\)0.35         & 53.58\(\pm\)19.61         & 59.01\(\pm\)6.00        \\
\midrule
\multirow{4}{*}{n-n} & $\mathrm{ECE}_{\mathrm{cl}}$  & 0.86\(\pm\)0.02           & 1.19\(\pm\)0.00          & 0.44\(\pm\)0.02          & 2.98\(\pm\)0.03         & 0.56\(\pm\)0.01         & 1.69\(\pm\)0.02       \\
\cline{2-8}
                      & $\mathrm{MCE}_{\mathrm{cl}}$  & 25.63\(\pm\)2.18           & 25.84\(\pm\)0.25         & 53.23\(\pm\)9.75          & 50.15\(\pm\)2.67         & 78.03\(\pm\)6.04         & 79.55\(\pm\)1.31        \\
\cline{2-8}
                      & $\mathrm{ECE}_{\mathrm{co}}$ & 2.88\(\pm\)0.17           & 4.99\(\pm\)0.06          & 1.51\(\pm\)0.13          & 11.41\(\pm\)0.30         & 10.72\(\pm\)0.21         & 32.74\(\pm\)0.57        \\
\cline{2-8}
                      & $\mathrm{MCE}_{\mathrm{co}}$ & 11.27\(\pm\)4.90           & 15.01\(\pm\)4.16          & 17.10\(\pm\)2.73          & 40.05\(\pm\)28.41         & 57.54\(\pm\)18.88         & 57.13\(\pm\)1.86        \\
\midrule
\multirow{4}{*}{n-c} & $\mathrm{ECE}_{\mathrm{cl}}$  & 1.61\(\pm\)0.25           & 2.74\(\pm\)0.19          & 0.52\(\pm\)0.03          & 3.69\(\pm\)0.07         & 0.84\(\pm\)0.03         & 2.26\(\pm\)0.10        \\
\cline{2-8}
                      & $\mathrm{MCE}_{\mathrm{cl}}$  & 54.81\(\pm\)0.47           & 67.26\(\pm\)2.89         & 66.35\(\pm\)1.83          & 68.17\(\pm\)0.59         & 87.54\(\pm\)3.52         & 89.51\(\pm\)2.97        \\
\cline{2-8}
                      & $\mathrm{ECE}_{\mathrm{co}}$ & 1.48\(\pm\)0.24           & 1.35\(\pm\)0.12          & 1.57\(\pm\)0.16          & 2.38\(\pm\)0.20         & 9.12\(\pm\)0.65         & 6.61\(\pm\)2.93        \\
\cline{2-8}
                      & $\mathrm{MCE}_{\mathrm{co}}$ & 20.30\(\pm\)6.35           & 16.83\(\pm\)2.65          & 20.82\(\pm\)4.90          & 40.98\(\pm\)30.37         & 43.17\(\pm\)29.10         & 14.90\(\pm\)3.77        \\
\bottomrule                      
\end{tabular}
\end{table}

\begin{table}[http]
\centering
\caption{Calibration errors of the representative distributions \eqref{eq:representative distribution} produced by $d_{\mathrm{L1}}$ \eqref{eq:L_1 distance}: The shorthand notations $\mathrm{ECE}_{\mathrm{cl}}$, $\mathrm{MCE}_{\mathrm{cl}}$, $\mathrm{ECE}_{\mathrm{co}}$, and $\mathrm{MCE}_{\mathrm{co}}$ are used instead of $\mathrm{ECE}_{\mathrm{classwise}}$, $\mathrm{MCE}_{\mathrm{classwise}}$, $\mathrm{ECE}_{\mathrm{confidence}}$, and $\mathrm{MCE}_{\mathrm{confidence}}$. Results on (clean train data, clean test data), (clean train data, noisy test data), (noisy train data, clean test data), and (noisy train data, noisy test data) are marked as c-c, c-n, n-c, and n-n, respectively.}
\label{table:calibration_error_L1}
\begin{tabular}{p{1em}|p{3em}|p{4.6em}p{4.6em}|p{4.6em}p{4.6em}|p{4.6em}p{4.6em}}
\multicolumn{2}{c|}{}                                & \multicolumn{2}{c|}{CIFAR-10} & \multicolumn{2}{|c|}{Fashion-MNIST} & \multicolumn{2}{|c}{Plan Leaf Diseases} \\
\cline{3-8}
\multicolumn{2}{c|}{}                                & BNNs           & CNNs          & BNNs          & CNNS         & BNN         & CNNs        \\
\toprule
\multirow{4}{*}{c-n} & $\mathrm{ECE}_{\mathrm{cl}}$  & 9.35\(\pm\)0.64          & 11.57\(\pm\)0.88          & 2.59\(\pm\)0.04          & 5.79\(\pm\)0.13         & 2.49\(\pm\)0.16         & 4.23\(\pm\)0.08        \\
\cline{2-8}
                      & $\mathrm{MCE}_{\mathrm{cl}}$  & 81.42\(\pm\)4.37           & 88.76\(\pm\)2.33          & 68.29\(\pm\)3.99          & 81.41\(\pm\)0.89         & 95.52\(\pm\)0.29         & 98.22\(\pm\)1.14        \\
\cline{2-8}
                      & $\mathrm{ECE}_{\mathrm{co}}$ & 38.59\(\pm\)5.24           & 51.72\(\pm\)5.25          & 8.03\(\pm\)0.54          & 5.06\(\pm\)0.87         & 37.21\(\pm\)2.55         & 62.66\(\pm\)3.21        \\
\cline{2-8}
                      & $\mathrm{MCE}_{\mathrm{co}}$ & 61.86\(\pm\)14.26           & 68.27\(\pm\)4.53          & 42.55\(\pm\)29.52          & 24.81\(\pm\)16.83         & 50.94\(\pm\)2.12         & 82.51\(\pm\)3.14        \\
\midrule
\multirow{4}{*}{c-c} & $\mathrm{ECE}_{\mathrm{cl}}$  & 0.78\(\pm\)0.01          & 1.07\(\pm\)0.07          & 0.46\(\pm\)0.02          & 2.08\(\pm\)0.09         & 0.07\(\pm\)0.00         & 0.30\(\pm\)0.00        \\
\cline{2-8}
                      & $\mathrm{MCE}_{\mathrm{cl}}$  & 31.46\(\pm\)2.93           & 38.22\(\pm\)2.22          & 66.79\(\pm\)8.35          & 51.57\(\pm\)4.97        & 88.10\(\pm\)3.72         & 80.45\(\pm\)0.46        \\
\cline{2-8}
                      & $\mathrm{ECE}_{\mathrm{co}}$ & 2.57\(\pm\)0.07           & 4.81\(\pm\)0.35          & 1.51\(\pm\)0.15          & 3.01\(\pm\)0.17         & 0.96\(\pm\)0.04         & 4.80\(\pm\)0.17        \\
\cline{2-8}
                      & $\mathrm{MCE}_{\mathrm{co}}$ & 24.51\(\pm\)2.57           & 41.09\(\pm\)23.36         & 27.24\(\pm\)12.68          & 17.39\(\pm\)2.47         & 54.70\(\pm\)22.95         & 35.99\(\pm\)6.16       \\
\midrule
\multirow{4}{*}{n-n} & $\mathrm{ECE}_{\mathrm{cl}}$  & 1.05\(\pm\)0.02          & 1.74\(\pm\)0.03          & 0.52\(\pm\)0.03          & 1.85\(\pm\)0.09         & 0.11\(\pm\)0.01         & 0.37\(\pm\)0.02        \\
\cline{2-8}
                      & $\mathrm{MCE}_{\mathrm{cl}}$  & 29.05\(\pm\)3.58           & 42.69\(\pm\)1.82          & 70.16\(\pm\)1.33          & 50.67\(\pm\)1.08        & 88.62\(\pm\)3.75         & 84.93\(\pm\)3.64        \\
\cline{2-8}
                      & $\mathrm{ECE}_{\mathrm{co}}$ & 3.82\(\pm\)0.12           & 7.73\(\pm\)0.14          & 1.85\(\pm\)0.13          & 3.69\(\pm\)0.19         & 1.54\(\pm\)0.16         & 6.00\(\pm\)0.20        \\
\cline{2-8}
                      & $\mathrm{MCE}_{\mathrm{co}}$ & 17.25\(\pm\)0.83           & 25.83\(\pm\)1.73         & 18.81\(\pm\)1.06          & 14.5\(\pm\)4.40         & 40.42\(\pm\)13.56         & 67.77\(\pm\)20.59       \\
\midrule
\multirow{4}{*}{n-c} & $\mathrm{ECE}_{\mathrm{cl}}$  & 1.78\(\pm\)0.20          & 3.21\(\pm\)0.20          & 0.56\(\pm\)0.03          & 3.31\(\pm\)0.12         & 0.49\(\pm\)0.04         & 2.00\(\pm\)0.23        \\
\cline{2-8}
                      & $\mathrm{MCE}_{\mathrm{cl}}$  & 52.34\(\pm\)1.61           & 74.79\(\pm\)3.10          & 70.60\(\pm\)12.32          & 78.84\(\pm\)2.22        & 93.24\(\pm\)1.31         & 94.08\(\pm\)1.41        \\
\cline{2-8}
                      & $\mathrm{ECE}_{\mathrm{co}}$ & 5.16\(\pm\)0.36           & 12.80\(\pm\)0.61          & 1.67\(\pm\)0.12          & 5.07\(\pm\)0.32         & 2.12\(\pm\)0.66         & 18.18\(\pm\)5.42        \\
\cline{2-8}
                      & $\mathrm{MCE}_{\mathrm{co}}$ & 26.92\(\pm\)14.57           & 29.77\(\pm\)4.40         & 38.91\(\pm\)27.55          & 40.70\(\pm\)30.33         & 17.97\(\pm\)6.26         & 29.41\(\pm\)6.91       \\
\bottomrule                      
\end{tabular}
\end{table}

Overall, the results suggest that, for all the distances, BNNs tend to produce more calibrated representative distributions \eqref{eq:representative distribution}, compared to those that are produced by CNNs with the dropout prediction option. This might explain why BNNs tend to be more robust when encountering the mismatches between training and test data types. 

Moreover, the calibration errors, especially $\mathrm{ECE}_{\mathrm{classwise}}$ \eqref{eq:ECE-classwise} and $\mathrm{ECE}_{\mathrm{confidence}}$ \eqref{eq:ECE-confidence}, seem to correlate with the various results on the predictive performance. First, the calibration errors are significantly higher when encountering the mismatches between training and test data types. This might suggest an explanation for the downgrade of the predictive performance on these cases.   
\clearpage

\begin{table}[http]
\centering
\caption{Calibration errors of the representative distributions \eqref{eq:representative distribution} produced by $d_{\mathrm{KL}}$ \eqref{eq:KL divergence}: The shorthand notations $\mathrm{ECE}_{\mathrm{cl}}$, $\mathrm{MCE}_{\mathrm{cl}}$, $\mathrm{ECE}_{\mathrm{co}}$, and $\mathrm{MCE}_{\mathrm{co}}$ are used instead of $\mathrm{ECE}_{\mathrm{classwise}}$, $\mathrm{MCE}_{\mathrm{classwise}}$, $\mathrm{ECE}_{\mathrm{confidence}}$, and $\mathrm{MCE}_{\mathrm{confidence}}$. Results on (clean train data, clean test data), (clean train data, noisy test data), (noisy train data, clean test data), and (noisy train data, noisy test data) are marked as c-c, c-n, n-c, and n-n, respectively.}
\label{table:calibration_error_KL}
\begin{tabular}{p{1em}|p{3em}|p{4.6em}p{4.6em}|p{4.6em}p{4.6em}|p{4.6em}p{4.6em}}
\multicolumn{2}{c|}{}                                & \multicolumn{2}{c|}{CIFAR-10} & \multicolumn{2}{|c|}{Fashion-MNIST} & \multicolumn{2}{|c}{Plan Leaf Diseases} \\
\cline{3-8}
\multicolumn{2}{c|}{}                                & BNNs           & CNNs          & BNNs          & CNNS         & BNN         & CNNs        \\
\toprule
\multirow{4}{*}{c-n} & $\mathrm{ECE}_{\mathrm{cl}}$  & 9.39\(\pm\)0.62           & 11.67\(\pm\)0.87          & 2.46\(\pm\)0.05          & 5.97\(\pm\)0.12         & 2.62\(\pm\)0.17         & 4.25\(\pm\)0.08        \\
\cline{2-8}
                      & $\mathrm{MCE}_{\mathrm{cl}}$  & 82.07\(\pm\)5.07           & 90.11\(\pm\)1.77          & 67.95\(\pm\)4.12          & 82.45\(\pm\)0.81         & 97.10\(\pm\)2.05         & 99.27\(\pm\)0.49        \\
\cline{2-8}
                      & $\mathrm{ECE}_{\mathrm{co}}$ & 39.58\(\pm\)5.04           & 52.73\(\pm\)5.14          & 7.35\(\pm\)0.51          & 6.68\(\pm\)0.69         & 41.77\(\pm\)2.87         & 65.71\(\pm\)3.71        \\
\cline{2-8}
                      & $\mathrm{MCE}_{\mathrm{co}}$ & 56.61\(\pm\)6.75           & 68.20\(\pm\)3.29          & 41.64\(\pm\)30.06          & 17.77\(\pm\)1.82         & 57.19\(\pm\)1.94         & 83.65\(\pm\)2.30        \\
\midrule
\multirow{4}{*}{c-c} & $\mathrm{ECE}_{\mathrm{cl}}$  & 0.88\(\pm\)0.02           & 1.15\(\pm\)0.09          & 0.36\(\pm\)0.02          & 2.16\(\pm\)0.09         & 0.06\(\pm\)0.00         & 0.26\(\pm\)0.01        \\
\cline{2-8}
                      & $\mathrm{MCE}_{\mathrm{cl}}$  & 39.12\(\pm\)3.95           & 46.76\(\pm\)1.92          & 69.82\(\pm\)9.63          & 56.14\(\pm\)1.83         & 92.11\(\pm\)2.23         & 86.73\(\pm\)1.37        \\
\cline{2-8}
                      & $\mathrm{ECE}_{\mathrm{co}}$ & 3.69\(\pm\)0.05           & 5.36\(\pm\)0.39          & 1.00\(\pm\)0.20          & 2.86\(\pm\)0.17         & 0.42\(\pm\)0.02         & 3.65\(\pm\)0.10        \\
\cline{2-8}
                      & $\mathrm{MCE}_{\mathrm{co}}$ & 22.04\(\pm\)0.19           & 77.20\(\pm\)1.39          & 18.43\(\pm\)9.01          & 19.53\(\pm\)9.65         & 35.41\(\pm\)2.64         & 32.39\(\pm\)1.36        \\
\midrule
\multirow{4}{*}{n-n} & $\mathrm{ECE}_{\mathrm{cl}}$  & 1.15\(\pm\)0.01           & 1.85\(\pm\)0.03          & 0.41\(\pm\)0.02          & 1.85\(\pm\)0.10         & 0.08\(\pm\)0.01         & 0.30\(\pm\)0.02        \\
\cline{2-8}
                      & $\mathrm{MCE}_{\mathrm{cl}}$  & 31.76\(\pm\)1.26           & 45.42\(\pm\)4.07          & 59.87\(\pm\)1.82          & 52.15\(\pm\)1.51         & 91.61\(\pm\)3.13         & 85.94\(\pm\)3.04        \\
\cline{2-8}
                      & $\mathrm{ECE}_{\mathrm{co}}$ & 4.79\(\pm\)0.10           & 8.56\(\pm\)0.08          & 1.15\(\pm\)0.03          & 3.52\(\pm\)0.23         & 0.57\(\pm\)0.10         & 4.52\(\pm\)0.26        \\
\cline{2-8}
                      & $\mathrm{MCE}_{\mathrm{co}}$ & 19.85\(\pm\)6.55           & 24.98\(\pm\)0.70          & 16.50\(\pm\)2.77          & 46.33\(\pm\)24.07         & 45.36\(\pm\)18.46         & 46.58\(\pm\)24.42        \\
\midrule
\multirow{4}{*}{n-c} & $\mathrm{ECE}_{\mathrm{cl}}$  & 1.89\(\pm\)0.21           & 3.30\(\pm\)0.17          & 0.67\(\pm\)0.02          & 3.33\(\pm\)0.13         & 0.49\(\pm\)0.05         & 2.03\(\pm\)0.23        \\
\cline{2-8}
                      & $\mathrm{MCE}_{\mathrm{cl}}$  & 52.45\(\pm\)1.95           & 73.36\(\pm\)2.44          & 70.58\(\pm\)7.20          & 80.71\(\pm\)1.09         & 93.84\(\pm\)1.66         & 94.11\(\pm\)1.75        \\
\cline{2-8}
                      & $\mathrm{ECE}_{\mathrm{co}}$ & 6.51\(\pm\)0.48           & 13.93\(\pm\)0.53          & 2.70\(\pm\)0.16          & 5.30\(\pm\)0.46         & 4.21\(\pm\)0.84         & 21.74\(\pm\)5.60        \\
\cline{2-8}
                      & $\mathrm{MCE}_{\mathrm{co}}$ & 38.31\(\pm\)27.35           & 34.98\(\pm\)4.79          & 35.48\(\pm\)29.87          & 19.31\(\pm\)1.07         & 41.62\(\pm\)26.30         & 34.41\(\pm\)6.29        \\
\bottomrule                      
\end{tabular}
\end{table}

\end{appendices}


\bibliography{paper-bibliography}

\end{document}